\newcommand{\argmin}{\mathop{\mathrm{argmin}}}
\newcommand{\argmax}{\mathop{\mathrm{argmax}}}
\newcommand{\blue}[1]{\textcolor{blue}{#1}}
\def\E{\mathbb{E}}
\def\P{\mathbb{P}}
\def\1{\mathbbm{1}}
\def\Dis{\mathtt{Dis}}
\def\cD{\mathcal{D}}
\def\cH{\mathcal{H}}
\def\cM{\mathcal{M}}
\def\cN{\mathcal{N}}
\def\cS{\mathcal{S}}
\def\cX{\mathcal{X}}
\def\cY{\mathcal{Y}}
\def\cZ{\mathcal{Z}}
\begin{document}

\title{Revisiting Model-Agnostic Private Learning: Faster Rates and Active Learning}

\author{\name Chong Liu$^\dagger$ \email chongliu@cs.ucsb.edu 
       \AND
       \name Yuqing Zhu$^\dagger$ \email yuqingzhu@ucsb.edu 
       \AND
       \name Kamalika Chaudhuri$^\star$ \email kamalika@cs.ucsd.edu 
       \AND
       \name Yu-Xiang Wang$^\dagger$  \email yuxiangw@cs.ucsb.edu
       \AND
       \addr $\dagger$ Department of Computer Science, University of California, Santa Barbara,  CA 93106, USA\\
       $\star$ Department of Computer Science and Engineering, University of California, San Diego, La Jolla, CA 92093, USA
       }

\editor{Andreas Krause}

\maketitle

\begin{abstract}

The Private Aggregation of Teacher Ensembles (PATE) framework is one of the most promising recent approaches in differentially private learning. Existing theoretical analysis shows that PATE consistently learns any VC-classes in the realizable setting, but falls short in explaining its success in more general cases where the error rate of the optimal classifier is bounded away from zero. We fill in this gap by introducing the Tsybakov Noise Condition (TNC) and establish stronger and more interpretable learning bounds. These bounds provide new insights into when PATE works and improve over existing results even in the narrower realizable setting. We also investigate the compelling idea of using active learning for saving privacy budget, and empirical studies show the effectiveness of this new idea. The novel components in the proofs include a more refined analysis of the majority voting classifier --- which could be of independent interest --- and an observation that the synthetic ``student'' learning problem is nearly realizable by construction under the Tsybakov noise condition.

\end{abstract}

\begin{keywords}
Model-agnostic private learning, Private Aggregation of Teacher Ensembles, Differential privacy, Tsybakov noise condition, Active learning
\end{keywords}

\section{Introduction}


Differential privacy (DP)  \citep{dwork2006calibrating}  is one of the most popular approaches towards addressing the privacy challenges in the era of artificial intelligence and big data.  While differential privacy is certainly not a solution to all privacy-related problems,
it represents a gold standard and is a key enabler in many applications  \citep{machanavajjhala2008privacy,erlingsson2014rappor,mcmahan2018learning}. 

Recently, there has been an increasing demand in training machine learning and deep learning models with DP guarantees, which has motivated a growing body of  research on this problem  \citep{kasiviswanathan2011can,chaudhuri2011differentially,bassily2014private,wang2015privacy,abadi2016deep,shokri2015privacy}. 

In a nutshell, differentially private machine learning aims at 
providing formal privacy guarantees that provably reduce the risk of identifying individual data points in the training data, while still allowing the learned model to be deployed and to provide accurate predictions.  
Many of these methods satisfying DP guarantees work well in low-dimensional regime where the model is small and the data is large.
It however remains a fundamental challenge how to avoid the \emph{explicit} dependence in the \emph{ambient dimension} of the model
and to develop practical methods in privately releasing deep learning models with a large number of parameters.


The ``knowledge transfer'' model of differentially private learning is a promising recent development  \citep{papernot2017semi,papernot2018scalable} which relaxes the problem by giving the learner access to a public unlabeled dataset. The main workhorse of this model is the Private Aggregation of Teacher Ensembles (PATE) framework:

\fbox{\parbox{0.9\columnwidth}{The \textit{PATE} Framework:
		\small
\noindent
\begin{enumerate}
\item Randomly partition the private dataset into $K$ splits.
\item Train one ``teacher'' classifier on each split.
\item Apply the $K$ ``teacher'' classifiers on public data and \emph{privately release} their majority votes as pseudo-labels.
\item Output the ``student'' classifier trained on the pseudo-labeled public data.
\end{enumerate}}}

PATE achieves DP via the sample-and-aggregate scheme  \citep{nissim2007smooth} for releasing the pseudo-labels. Since the teachers are trained on disjoint splits of the private dataset, adding or removing one data point could affect only one of the teachers, hence limiting the influence of any single data point. The noise injected in the aggregation will then be able to ``obfuscate'' the output and obtain provable privacy guarantees.

This approach is appealing in practice as it does not place any restrictions on the \emph{teacher} classifiers, thus allowing any deep learning models to be used in a \emph{model-agnostic} fashion. The competing alternative for differentially private deep learning, NoisySGD  \citep{abadi2016deep}, is \emph{not} model-agnostic, and it requires significantly more tweaking and modifications to the model to achieve a comparable performance, (e.g., on MNIST), if achievable.

There are a number of DP mechanisms that can be used to instantiate the PATE Framework. Laplace mechanism and Gaussian mechanism are used in \citet{papernot2017semi,papernot2018scalable} respectively. 
This paper primarily considers the new mechanism of \citet{bassily2018model}, which instantiates the PATE framework with a more data-adaptive scheme of private aggregation based on the Sparse Vector Technique (SVT). This approach allows PATE to privately label many examples while paying a privacy loss for only a small subset of them (see Algorithm~\ref{alg-priv-agg} for details). Moreover,  \citet{bassily2018model} provides the first theoretical analysis of PATE which shows that it is able to PAC-learn any hypothesis classes with finite VC-dimension in the realizable setting, i.e, expected risk of best hypothesis equals $0$. And in this case, the center of teacher agreement is true label. However, this is a giant leap from the standard differentially private learning models (without the access to a public unlabeled dataset) because the VC-classes are \emph{not} privately learnable in general \citep{bun2015differentially,wang2016learning}.  \citet{bassily2018model} also establishes a set of results on the agnostic learning setting, albeit less satisfying, as the \emph{excess risk}, i.e., the error rate of the learned classifier relative to the optimal classifier, does not vanish as the number of data points increases, a.k.a., inconsistency.

\begin{table*}[t]
	\centering
	\caption{Summary of our results: excess risk bounds for PATE algorithms.  }
	\label{tab:summary}
	\resizebox{6in}{!}{
		\begin{tabular}{ccccc}
			\noalign{\smallskip} \hline \noalign{\smallskip}
			\multirow{2}{*}{\textbf{Algorithm}} & \multirow{2}{*}{\begin{tabular}[c]{@{}c@{}}\textbf{PATE} (Gaussian Mech.)\\  \citet{papernot2017semi}\end{tabular}} & \multicolumn{2}{c}{\textbf{PATE} (SVT-based)} & \multirow{2}{*}{\begin{tabular}[c]{@{}c@{}}\textbf{PATE} (Active Learning)\\ This paper\end{tabular}} \\
			&                       &  \citet{bassily2018model}   & This paper   &    \\ \noalign{\smallskip} \hline \noalign{\smallskip}
			Realizable                & \blue{ $\tilde{O}\Big( \frac{d}{(n\epsilon)^{2/3}} \vee\frac{d}{m}\Big)$      }             & $\tilde{O}\Big(\frac{d}{(n\epsilon)^{2/3}} \vee \sqrt{\frac{d}{m}}\Big)$                     & \blue{$\tilde{O}\Big(\frac{d^{3/2}}{n\epsilon}\vee \frac{d}{m}\Big)$}          & \blue{$\tilde{O}\Big(\frac{d^{3/2}\theta^{1/2}}{n\epsilon}\vee\frac{d}{m}\Big)$}                                                                       \\ \noalign{\smallskip} \hline \noalign{\smallskip}
			$\tau$-TNC                       & \blue{ $\tilde{O}\Big(\big(\frac{d^{3/2}}{n\epsilon}\big)^\frac{2\tau}{4-\tau}\vee\frac{d}{m}\Big)$ }                    & same as agnostic                     & \blue{$\tilde{O}\Big(\big(\frac{d^{3/2}}{n\epsilon}\big)^\frac{\tau}{2-\tau} \vee \frac{d}{m}\Big)$}         & \blue{$\tilde{O}\Big( \big(\frac{d^{3/2}\theta^{1/2}}{n\epsilon}\big)^\frac{\tau}{2-\tau} \vee \frac{d}{m}\Big)$}                                                                             \\ \noalign{\smallskip} \hline \noalign{\smallskip}
			\multirow{3}{*}{\begin{tabular}[c]{@{}c@{}}Agnostic\\ (vs $h^*$)\end{tabular}}               &  \multirow{3}{*}{\blue{$\Omega(\mathtt{Err}(h^*))$  required.}}                  &  \multirow{3}{*}{\begin{tabular}[c]{@{}c@{}}$13\mathtt{Err}(h^*)+$\\ $\tilde{O}\Big(\frac{d^{3/5}}{n^{2/5}\epsilon^{2/5}} \vee \sqrt{\frac{d}{m}}\Big)$\end{tabular}}                  & \multirow{3}{*}{\blue{$\Omega(\mathtt{Err}(h^*))$  required.}}         & \multirow{3}{*}{\blue{$\Omega(\mathtt{Err}(h^*))$  required.}}\\
			& & & & \\
			& & & & \\\noalign{\smallskip} \hline \noalign{\smallskip}
			\multirow{2}{*}{\begin{tabular}[c]{@{}c@{}}Agnostic\\(vs $h^\mathtt{agg}_\infty$)\end{tabular}}                &  \multirow{2}{*}{-}                   &  \multirow{2}{*}{-}                   & \multirow{2}{*}{\begin{tabular}[c]{@{}c@{}}\blue{Consistent under}\\\blue{weaker conditions.}\end{tabular}}
			& \multirow{2}{*}{-}\\
			& & & & \\\noalign{\smallskip} \hline \noalign{\smallskip}
	\end{tabular}}
	\scriptsize{
		\begin{itemize}
			\itemsep-0.5em
			\item Results new to this paper are highlighted in \blue{blue}.
			\item Teacher number hyperparameter $K$ is chosen optimally. The number of public data points we privately label is chosen optimally (subsampling the available public data to run PATE) to minimize the risk bound. $\delta$ is assumed to be in its typical range $\delta< 1/\textrm{poly}(n)$ and $\epsilon< \log(1/\delta)$. The TNC parameter $\tau$ ranges between $(0,1]$. See Table \ref{tab:notations} for a checklist of notations.
			\item Proofs of utility guarantees of PATE (Gaussian mechanism) can be found in Appendix \ref{sec:refined_proofs}.
		\end{itemize}
	}
\end{table*}

To fill in the gap, in this paper, we revisit the problem of model-agnostic private learning in PATE framework in two non-realizable settings: under the Tsybakov Noise Condition (TNC) \citep{mammen1999smooth,tsybakov2004optimal} and in agnostic setting. By making TNC assumption, teachers stay close to the best hypothesis $h^*$ in hypothesis class, thus we consider $h^*$ as the new center for teachers to agree on, instead of considering true label in the realizable setting. We make no assumptions in agnostic setting, and a different center of teacher gravity is considered. In addition, we introduce active learning \citep{hanneke2014theory} to PATE and propose a new practical algorithm.

\paragraph{Summary of results.} Our contributions are summarized as follows.
\begin{enumerate}
	\item We show that PATE consistently learns any VC-classes under TNC with fast rates and requires very few unlabeled public data points.  When specializing to the realizable case, we show that the sample complexity bound of the SVT-based PATE is  $\tilde{O}(d^{3/2}/\alpha\epsilon)$  and $\tilde{O}(d/\alpha)$ for the private and public datasets respectively.  The best known results \citep{bassily2018model} is $\tilde{O}(d^{3/2}/\alpha^{3/2}\epsilon)$  (for private data) and $\tilde{O}(d/\alpha^2)$ (for public data).
	\item We analyze standard Gaussian mechanism-based PATE \citep{papernot2018scalable} under TNC. In the realizable case, we obtained a sample complexity of $\tilde{O}(d^{3/2}/\alpha\epsilon)$ and $\tilde{O}(d/\alpha)$ for the private and public datasets respectively, which matches the bound of \citep{bassily2018model} with a simpler and more practical algorithm that uses fewer public data points.
	\item We show that PATE learning is \emph{inconsistent}  for agnostic learning in general and derive new learning bounds that compete against a sequence of limiting majority voting classifiers.
	\item We propose a new active learning-based algorithm, PATE with Active Student Queries (PATE-ASQ), to adaptively select which public data points to release. Under TNC, we show that active learning with standard Gaussian mechanism is able to match the same learning bounds of the SVT-based method for privacy aggregation (Algorithm~\ref{alg-priv-agg-pate}), except some additional dependence.
	\item Finally, our experiments on real-life datasets demonstrate that PATE-ASQ achieves significantly better accuracy than standard PATE algorithms while incurring the same or lower privacy loss.
\end{enumerate}

These results  (summarized in Table \ref{tab:summary}) provide strong theoretical insight into how PATE works.   Interestingly, our theory suggests that \emph{Gaussian mechanism suffice}s especially if we use active learning and that it is better \emph{not} to label all public data when the number of public data points $m$ is large.  The remaining data points can be used for semi-supervised learning.  These tricks have been
proposed in \emph{empirical} studies of PATE (see, e.g., semi-supervised learning \citep{papernot2017semi,papernot2018scalable}, active learning \citep{zhao2019improving}), thus our \emph{theory} can be viewed as providing formal justifications to these PATE variants that are producing strong empirical results in \emph{deep learning with differential privacy}.

\paragraph{Motivation and applicability.} We conclude the introduction by commenting on the applicability of the knowledge transfer model of differentially private learning and PATE. First, while this model applies only to those cases when a (small) public unlabeled dataset is available, it gains a more favorable privacy-utility tradeoff on those applicable cases. Second, public datasets are often readily available (e.g., census microdata) or can be acquired at a low cost (e.g., incentivizing patients to opt-in) especially if we do not need labels (e.g., getting doctor's diagnosis is expensive). Note that this setting is different from label differential privacy \citep{chaudhuri2011sample} where only labels are considered private. In our problem, even if the public data points are labeled, they are scarce and learning directly from them without using the private data will not give the same learning bound. In addition, PATE uses standard off-the-shelf learners / optimizers as blackboxes, thereby retaining their computational efficiency. For these reasons, we argue that the ``knowledge transfer'' model is widely applicable and could enable practical algorithms with formal DP guarantees  in the many applications where the standard private learning model fails to be sufficiently efficient, private and accurate at the same time.

\section{Related Work}\label{sec-rw}

The literature on differentially private machine learning is enormous and it is impossible for us to provide an exhaustive discussion. Instead we focus on a few closely related work and only briefly discuss other representative results in the broader theory of private learning. 

\subsection{Private Learning with an Auxiliary Public Dataset}
The use of an auxiliary unlabeled public dataset was pioneered in empirical studies \citep{papernot2017semi,papernot2018scalable} where PATE was proposed and shown to produce stronger results than NoisySGD in many regimes. 
Our work builds upon \citet{bassily2018model}'s first analysis of PATE  and substantially improves the theoretical underpinning. To the best of our knowledge, our results are new and we are the first that consider \emph{noise models} and \emph{active learning} for PATE. 

\citet{bassily2019limits} also studied the problem of private learning with access to an additional public dataset. 
 Specifically, their result reveals an interesting ``theorem of the alternatives''-type result that says either a VC-class is learnable without an auxiliary public dataset, or we need at least $m = \Omega(d/\alpha)$ public data points, which essentially says that our sample complexity on the (unlabeled) public data points are optimal. They also provide an upper bound that says $\tilde{O}(d/\alpha^2)$ private data and $\tilde{O}(d/\alpha)$ public data are sufficient (assuming constant privacy parameter $\epsilon$) to \emph{agnostically learn} any classes with VC-dimension $d$ to $\alpha$-excess risk. Their algorithm however uses an explicit (distribution-independent)  $\alpha$-net  construction due to \citet{beimel2016private} and exponential mechanism for producing pseudo-labels, which cannot be efficiently implemented.  Our contributions are complementary as we focus on \emph{oracle-efficient} algorithms that reduce to the learning bounds of ERM oracles (for passive learning) and active learning oracles. Our algorithms can therefore be implemented (and has been) in practice \citep{papernot2017semi,papernot2018scalable}. Moreover, we show that under TNC, the inefficient construction is not needed and PATE is indeed consistent and enjoys faster rates.   It remains an open problem how to achieve consistent private agnostic learning with only access to ERM oracles.
 
 
 
\subsection{Privacy-Preserving Prediction}
  There is another line of work \citep{dwork2018privacy} that focuses on the related problem of ``privacy-preserving prediction'' which does not release the learned model (which we do), but instead privately answer one randomly drawn query $x$ (which we need to answer many, so as to train a model that can be released).  While their technique can be used to obtain bounds in our setting, it often involves weaker parameters. 
  More recent works under this model \citep[see e.g.,][]{dagan2020pac,nandi2020privately} notably achieve consistent agnostic learning in this setting with rates comparable to that of \citet{bassily2019limits}. However, they rely on the same explicit $\alpha$-net construction \citep{beimel2016private}, which renders their algorithm computationally inefficient in practice.  In contrast, we analyze an oracle-efficient algorithm via a reduction to supervised learning  (which is practically efficient if we believe supervised learning is easy).

\subsection{Theory of Private Learning}
More broadly, the learnability and sample complexity of private learning were studied under various models in \citet{kasiviswanathan2011can,beimel2013characterizing,beimel2016private,chaudhuri2011sample,bun2015differentially,wang2016learning,bassily2019limits}. The VC-classes were shown to be learnable when the either the hypothesis class or the data-domain is finite \citep{kasiviswanathan2011can}. \citet{beimel2013characterizing} characterizes the sample complexity of private learning in the realizable setting with a particular ``dimension''  that measures the extent to which we can construct a specific discretization of the hypothesis space that works for ``all distributions'' on data.  Such a discretization does not exist,  
when $\cH$ and $\cX$ are both continuous. Specifically, the problem of learning threshold functions on $[0,1]$ having VC-dimension of $1$ is not privately learnable \citep{chaudhuri2011sample,bun2015differentially}.

\subsection{Weaker Private Learning Models}
This setting of private learning was relaxed in various ways to circumvent the above artifact. These include protecting only the labels \citep{chaudhuri2011sample,beimel2016private}, leveraging prior knowledge with a prior distribution \citep{chaudhuri2011sample}, switching to the general learning setting with Lipschitz losses \citep{wang2016learning}, relaxing the distribution-free assumption \citep{wang2016learning}, and the setting we consider in this paper --- when we assume the availability of an auxiliary public data \citep{bassily2018model,bassily2019limits}. Note that these settings are closely related to each other in that some additional information about the distribution of the data is needed.

\subsection{Tsybakov Noise Condition and Statistical Learning Theory}
The Tsybakov Noise Condition (TNC) \citep{mammen1999smooth,tsybakov2004optimal} is a natural and well-established condition in learning theory that has long been used in the analysis of passive as well as active learning \citep{boucheron2005theory}. The Tsybakov noise condition is known to yield better convergence rates for passive learning \citep{hanneke2014theory}, and label savings for active learning \citep{zhang2014beyond}. However, the contexts under which we use these techniques are different. For instance, while we are making the assumption of TNC, the purpose is not for active learning, but rather to establish stability. When we apply active learning, it is for the synthetic learning problem with pseudo-labels that we release privately, which does not actually satisfy TNC. To the best of our knowledge, we are the first that formally study noise models in the theory of private learning. 
Lastly, active learning was considered for PATE learning in \citep{zhao2019improving}, which demonstrates the clear practical benefits of adaptively selecting what to label. We remain the first that provides theoretical analysis with provable learning bounds.


\begin{table}[p]
\centering
\caption{Summary of symbols and notations.}\label{tab:notations}
\resizebox{\textwidth}{!}{
\begin{tabular}{ccl}
\noalign{\smallskip} \hline
\textbf{Symbol}  & \textbf{Definition} & \textbf{Description} \\ \hline
$\1(x)$     &  $=1 (x=\mathtt{T}), =0 (x=\mathtt{F})$     &  indicator function     \\ \hline 
$\mathtt{Err}(h)$             &  $\E_{(x,y)\sim \cD} [\1(h(x) \neq y)]$ & expected risk of $h$ w.r.t. $\cD$     \\  \hline 
$\widehat{\mathtt{Err}}(h)$ &$\frac{1}{n}\sum_{i=1}^n [\1(h(x_i) \neq y_i)]$ &empirical risk of  $h$ w.r.t. dataset $\{(x_i,y_i)|i \in [n]\}$     \\ \hline 
$\cD$    &         &  distribution over $\cZ$     \\  \hline
$d$       &      &  VC dimension     \\ \hline
$\cD_\cX$       &      & marginal distribution over $\cX$     \\\hline 
$D^T$             &  $\{(x^T_i,y^T_i)|i \in [n]\} \sim \cD$ &labeled private teacher dataset     \\ \hline 
$D^S$             &  $\{(x^S_j)|j \in [m]\} \sim \cD_\cX$ &unlabeled public student dataset     \\ \hline 
$\mathtt{DIS}$      &       & region of disagreement in active learning    \\ \hline 
$\Dis (h_1, h_2)$       & $\E_{x \sim \cD_\cX} [\1 (h_1(x) \neq h_2(x)]$     & expected disagreement of $h_1$ and $h_2$ w.r.t $\cD$     \\ \hline 
$\widehat{\Dis} (h_1, h_2)$ & $\frac{1}{n}\sum_{i=1}^n [\1 (h_1(x_i) \neq h_2(x_i))]$       &     empirical disagreement of $h_1$ and $h_2$ w.r.t. $\{(x_i,y_i)|i \in [n]\}$     \\ \hline 
$\cH$           &  $\cH \subseteq \{0,1\}^\cX$ &hypothesis class     \\ \hline
$h$            &  &hypothesis, a function mapping from $\cX$ to $\cY$     \\ \hline 
$h^*$            &  $\argmin_{h \in \cH} \mathtt{Err}(h)$ & best hypothesis     \\ \hline 
$\hat{h}$            &  $\argmin_{h \in \cH} \widehat{\mathtt{Err}}(h)$ &Empirical Risk Minimizer (ERM)  \\ \hline
$\hat{h}^\mathtt{agg}$    &        &  aggregated classifier in PATE  \\ \hline
$\hat{h}^\mathtt{priv}$    &        &  privately aggregated classifier in PATE  \\ \hline
$h^\mathtt{agg}_\infty$    &        &  infinite ensemble classifier  \\ \hline
$K$ &           &  number of teachers     \\ \hline 
$\ell$ &           &  labeling budget     \\ \hline 
$m$ &           &  number of unlabeled student points     \\ \hline 
$n$  &          &  number of labeled teacher points     \\ \hline 
$[n]$             &  $\{1,2,...,n\}$    & integer set \\ \hline
$\tilde{O}$ & & big O notation hiding poly-logarithmic factors \\ \hline
$r(x)$ & $\E[y|x]$ & regression function from $x$ to $y$ \\ \hline
$T$ &           & cut-off threshold     \\ \hline 
$\cX$   &         &  feature space     \\  \hline 
$\cY$            &  $\{0,1\}$ & label space     \\  \hline 
$\cZ$            &  $\cX \times \cY$ &sample space     \\  \hline
$\cZ^*$            &  $\bigcup_{n\in\mathbb{N}}\cZ^n$ &space of a dataset of unspecified size    \\  \hline 
$\alpha$ &           &  excess risk     \\ \hline 
$\beta, \gamma$    &        &  failure probabilities     \\ \hline
$\epsilon, \delta$  & Definition \ref{def-dp}         &  parameters of differential privacy     \\ \hline
$\nu, \xi$ &Definition \ref{def-app_high_margin} & parameters of high margin condition \\ \hline
$\tau$ & Definition \ref{def-original-tnc}          &  parameter of the Tsybakov noise condition     \\ \hline
$\theta$ & \cite{hanneke2014theory}          &  disagreement coefficient of active learning    \\ \hline 
$\widehat{\Delta}$ &Eq. \ref{eq:margin} & realized margin \\ \hline
$\Delta$ &Eq. \ref{eq:exp_margin} & expected margin \\ \hline
$\perp$ & & randomly assigned label \\ \hline
$\vee$ & $ X \vee Y=\max\{X,Y\}$ & $\max$ operation \\ \hline
$\lesssim, \gtrsim$ & & inequalities hiding logarithmic factors\\ \hline
$c, c', C$ & & constants\\ \hline
\end{tabular}}
\end{table}

\section{Preliminaries}
In this section, first we introduce symbols and notations that we will use throughout this paper. Then we formally introduce differential privacy and discuss existing progress on PATE and model-agnostic private learning. Finally we introduce disagreement-based active learning, which is the key tool we will use for our new active learning-based PATE algorithm.

\subsection{Symbols and Notations}
We use $[n]$ to denote the set $\{1,2,...,n\}$. Let $\cX$ denote the feature space, $\cY = \{0,1\}$ denote the label, $\cZ = \cX \times \cY$ to denote the sample space, and $\cZ^*=\bigcup_{n\in\mathbb{N}}\cZ^n$ to denote the space of a dataset of unspecified size. A hypothesis (classifier) $h$ is a function mapping from $\cX$ to $\cY$. A set of hypotheses $\cH \subseteq \{0,1\}^\cX$ is called the hypothesis class. The VC dimension of $\cH$ is denoted by $d$. Also, let $\cD$ denote the distribution over $\cZ$, and $\cD_\cX$ denote the marginal distribution over $\cX$. $D^T=\{(x^T_i,y^T_i)|i \in [n]\} \sim \cD$ is the labeled private teacher dataset, and $D^S=\{(x^S_j)|j \in [m]\} \sim \cD_\cX$ is the unlabeled public student dataset.

The expected risk of a certain hypothesis $h$ with respect to the distribution $\cD$ over $\cZ$ is defined as $\mathtt{Err}(h) = \E_{(x,y)\sim \cD} [\1(h(x) \neq y)]$, where $\1(x)$ is the indicator function which equals to $1$ when $x$ is true, $0$ otherwise. The empirical risk of a certain hypothesis $h$ with respect to a dataset $\{(x_i,y_i)|i \in [n]\}$ is defined as $\widehat{\mathtt{Err}}(h) = \frac{1}{n}\sum_{i=1}^n [\1(h(x_i) \neq y_i)]$. The best hypothesis $h^*$ is defined as $h^* = \argmin_{h \in \cH} \mathtt{Err} (h)$, and the Empirical Risk Minimizer (ERM) $\hat{h}$ is defined as $\hat{h} = \argmin_{h \in \cH} \widehat{\mathtt{Err}} (h)$. $\hat{h}^\mathtt{agg}$ is used to denote the aggregated classifier in the PATE framework. $\hat{h}^\mathtt{priv}$ denotes the privately aggregated one. The expected disagreement between a pair of hypotheses $h_1$ and $h_2$ with respect to the distribution $\cD_\cX$ is defined as $\Dis (h_1, h_2) = \E_{x \sim \cD_\cX} [\1 (h_1(x) \neq h_2(x)]$. The empirical disagreement between a pair of hypotheses $h_1$ and $h_2$ with respect to a dataset $\{(x_i,y_i)|i \in [n]\}$ is defined as $\widehat{\Dis} (h_1, h_2) = \frac{1}{n}\sum_{i=1}^n [\1 (h_1(x_i) \neq h_2(x_i))]$. Throughout this paper, we use standard big $O$ notations; and to improve the readability, we use $\lesssim$ and $\tilde{O}$ to hide poly-logarithmic factors. For reader's easy reference, we summarize the symbol and notations above in Table~\ref{tab:notations}.

\subsection{Differential Privacy and Private Learning}
Now we formally introduce differential privacy.
\begin{definition}[Differential Privacy  \citep{dwork2014algorithmic}]\label{def-dp}
	A randomized algorithm $\cM: \cZ^* \rightarrow \mathcal{R}$ 
	is ($\epsilon, \delta$)-DP (differentially private) if for every pair of \emph{neighboring} datasets $D,D'\in \cZ^*$ (denoted by $\|D - D'\|_1 = 1$) for all $\cS \subseteq \mathcal{R}$:
	\begin{align*}
	\P(\cM(D) \in \cS) \leq e^\epsilon \cdot \P(\cM(D') \in \cS) + \delta.
	\end{align*}
\end{definition}
The definition says that if an algorithm  $\cM$ is DP, then no adversary can use the output of $\cM$ to distinguish between two parallel worlds where an individual is in the dataset or not.
$\epsilon,\delta$ are privacy loss parameters that quantify the strength of the DP guarantee. The closer they are to $0$, the stronger the guarantee is.


The problem of DP learning aims at designing a randomized training algorithm that satisfies Definition~\ref{def-dp}. More often than not, the research question is about understanding the privacy-utility trade-offs and characterizing the Pareto optimal frontiers.

\subsection{PATE and Model-Agnostic Private Learning}
There are different ways we can instantiate the PATE framework to privately aggregate the teachers' predicted labels. The simplest, described in Algorithm~\ref{alg-priv-agg-pate}, uses Gaussian mechanism to perturb the voting score.

\begin{algorithm}[t]
	\caption{Standard PATE  \citep{papernot2018scalable}} 
	\label{alg-priv-agg-pate}
	{\bf Input:}
	``Teachers'' $\hat{h}_1,...,\hat{h}_K$ trained on \emph{disjoint} subsets of the private data.  
	``Nature'' chooses an \emph{adaptive} sequence of data points $x_1,...,x_\ell$.  Privacy parameters $\epsilon, \delta > 0$.
	\begin{algorithmic}[1]
		\STATE Find $\sigma$ such that $\sqrt{\frac{2\ell\log(1/\delta)}{\sigma^2}} + \frac{\ell}{2\sigma^2} = \epsilon$.
		\STATE Nature chooses $x_1$.
		\FOR{$j \in [\ell]$}
		\STATE Output $\hat{y}_j \leftarrow \1(\sum_{k=1}^K\hat{h}_k(x_j) + \cN(0,\sigma^2) \geq K/2)$. 
		\STATE	Nature chooses $x_{j+1}$ adaptively (as a function of the output vector till time $j$).
		\ENDFOR
	\end{algorithmic}
\end{algorithm}

An alternative approach due to \citep{bassily2018model} uses the Sparse Vector Technique (SVT) in a nontrivial way to privately label substantially more data points in the cases when teacher ensemble's predictions are \emph{stable} for most input data. The stability is quantified in terms of the margin function,  defined as 
\begin{equation}\label{eq:margin}
\widehat{\Delta}(x) :=  \Big|2 \sum_{k=1}^K\hat{h}_k(x)-K\Big|,
\end{equation}
which measures the absolute value of the difference between the number of votes (see Algorithm~\ref{alg-priv-agg}).
\begin{algorithm}[t]
	\caption{SVT-based PATE  \citep{bassily2018model}}
	\label{alg-priv-agg}
	{\bf Input:}
	``Teacher'' classifiers $\hat{h}_1,...,\hat{h}_K$ trained on \emph{disjoint} subsets of the private data. 
	``Nature'' chooses an \emph{adaptive} sequence of data points $x_1,...,x_\ell$.
	Unstable cut-off $T$, privacy parameters $\epsilon, \delta > 0$.
	
	\begin{algorithmic}[1]
		\STATE Nature chooses $x_1$.
		\STATE $\lambda \leftarrow  (\sqrt{2T(\epsilon + \log(2/\delta))} + \sqrt{2T\log(2/\delta)})/\epsilon$.
		\STATE $w \leftarrow 3 \lambda \log (2(\ell+T) /\delta), \hat{w} \leftarrow w + \mathtt{Lap}(\lambda)$.
		\STATE $c = 0$.
		\FOR{$j \in [\ell]$}
		\STATE $\mathrm{dist}_j  \leftarrow  \max\{0, \lceil\widehat{\Delta}(x_j)/2\rceil-1\}$.
		\STATE $\widehat{\mathrm{dist}}_{j} \leftarrow \mathrm{dist}_j + \mathtt{Lap}(2\lambda)$.
		\IF {$\widehat{\mathrm{dist}}_j > \hat{w}$}
		\STATE Output $\hat{y}_j \leftarrow \1(\sum_{k=1}^K\hat{h}_k(x_j) \geq K/2) $.
		\ELSE
		\STATE Output $\hat{y}_j \leftarrow \perp$.
		\STATE $c \leftarrow c + 1$, break if $c\geq T$.
		\STATE $\hat{w} \leftarrow w + \mathtt{Lap}(\lambda)$.
		\ENDIF
		\STATE	Nature chooses $x_{j+1}$ adaptively (based on $\hat{y}_1,...,\hat{y}_{j}$).
		\ENDFOR
	\end{algorithmic}
\end{algorithm}




In both algorithms, the privacy budget parameters $\epsilon,\delta$ are taken as an input and the following privacy guarantee applies to all input datasets.
\begin{theorem}\label{thm-pg-agg}
	Algorithm~\ref{alg-priv-agg-pate}~and~\ref{alg-priv-agg} are both ($\epsilon,\delta$)-DP.
\end{theorem}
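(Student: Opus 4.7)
The statement bundles two privacy claims, so I would prove them separately, in each case reducing to a standard DP primitive---the Gaussian mechanism with adaptive composition for Algorithm~\ref{alg-priv-agg-pate}, and a ``stability check'' variant of the Sparse Vector Technique (SVT) for Algorithm~\ref{alg-priv-agg}. In both cases, the crucial structural fact is that the teachers $\hat{h}_1,\ldots,\hat{h}_K$ are trained on a \emph{disjoint} partition of the private dataset, so swapping a single private example changes at most one teacher $\hat{h}_{k^*}$; this bounds all relevant sensitivities by a constant regardless of the adaptive query stream produced by nature.

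For Algorithm~\ref{alg-priv-agg-pate}, I would first use the disjointness observation to conclude that for any fixed query $x_j$ the vote sum $\sum_k \hat{h}_k(x_j)$ has $\ell_\infty$-sensitivity $1$, and hence across the whole length-$\ell$ stream the vector of vote sums has $\ell_2$-sensitivity $\sqrt{\ell}$. Adding i.i.d.\ $\cN(0,\sigma^2)$ noise to each coordinate is the Gaussian mechanism. The cleanest way to finish is via zCDP: each noisy query is $\tfrac{1}{2\sigma^2}$-zCDP, adaptive composition of $\ell$ such queries is $\tfrac{\ell}{2\sigma^2}$-zCDP, and the standard conversion gives $(\epsilon,\delta)$-DP with $\epsilon=\tfrac{\ell}{2\sigma^2}+\sqrt{\tfrac{2\ell\log(1/\delta)}{\sigma^2}}$, which is exactly the defining equation for $\sigma$. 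The final thresholding against $K/2$ is post-processing and costs nothing.

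For Algorithm~\ref{alg-priv-agg}, I would invoke SVT as in \citet{bassily2018model}. First check the sensitivity of $\mathrm{dist}_j=\max\{0,\lceil\widehat{\Delta}(x_j)/2\rceil-1\}$: because $\widehat{\Delta}$ can shift by at most $2$ under a neighbor change, $\mathrm{dist}_j$ has sensitivity $1$. The algorithm then behaves as sparse vector with Laplace threshold noise $\mathtt{Lap}(\lambda)$, per-query noise $\mathtt{Lap}(2\lambda)$, the threshold re-noised after each ``unstable'' ($\perp$) event, and a hard cap of $T$ unstable events; the prescribed noise scale $\lambda=(\sqrt{2T(\epsilon+\log(2/\delta))}+\sqrt{2T\log(2/\delta)})/\epsilon$ is precisely what SVT with budget $T$ needs to be $(\epsilon,\delta)$-DP. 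On ``stable'' rounds the algorithm additionally emits the deterministic majority label; I would justify this as free by a propose-test-release style argument, since $\mathrm{dist}_j\ge 1$ forces $\widehat{\Delta}(x_j)>2$, so flipping the one affected teacher cannot change the majority vote on any neighbor.

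The arithmetic for Algorithm~\ref{alg-priv-agg-pate} is essentially bookkeeping, so I expect the main obstacle to lie in Algorithm~\ref{alg-priv-agg}: one must verify that releasing the majority label on every ``stable'' round truly composes with SVT without any extra privacy loss, which requires showing that, \emph{conditional} on the stability test passing, the label is identical on every neighboring dataset. Formalizing this conditioning---so that the label can be folded into the post-processing of the SVT transcript rather than treated as an additional adaptive query---is where the care is needed, and where I would rely most directly on the SVT proof machinery already developed by \citet{bassily2018model}.
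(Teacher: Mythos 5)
Your overall architecture matches the paper's proof: for Algorithm~\ref{alg-priv-agg-pate} the paper likewise treats each noisy vote count as a sensitivity-$1$ Gaussian mechanism, composes $\ell$ of them adaptively under zCDP to get $\tfrac{\ell}{2\sigma^2}$-zCDP, and converts to $(\epsilon,\delta)$-DP via exactly the equation defining $\sigma$; and for Algorithm~\ref{alg-priv-agg} it likewise decomposes the transcript into an SVT component (the $\{\perp,\top\}$ pattern, analyzed as $T$ adaptively composed cutoff-$1$ SVTs via CDP) plus the released majority labels, handled by the distance-to-instability idea you call propose-test-release. (Your $\ell_2$-sensitivity-$\sqrt{\ell}$ remark for the whole stream is a detour you should drop, since the queries are adaptive; the per-query zCDP composition you then give is the correct and sufficient route.)

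The one place your reasoning would fail as written is the claim that, \emph{conditional on the stability test passing}, the released label is identical on every neighboring dataset. That is not true: the test is $\widehat{\mathrm{dist}}_j > \hat{w}$ with $\widehat{\mathrm{dist}}_j = \mathrm{dist}_j + \mathtt{Lap}(2\lambda)$, so a query with $\mathrm{dist}_j = 0$ (genuinely unstable, where the majority label \emph{can} differ between neighbors) can still pass the noisy test. The correct argument, and the one the paper gives, is a high-probability one: the threshold $w = 3\lambda\log(2(\ell+T)/\delta)$ is set large enough that, by a Laplace tail bound and a union bound over all $\ell+T$ noise variables, the event that any query with $\mathrm{dist}_j=0$ passes the test has probability at most $\delta/2$; on the complement of that event every released label is neighbor-invariant and hence free, and the $\delta/2$ failure mass is absorbed into the approximate-DP parameter (with the remaining $(\epsilon,\delta/2)$ budget spent on the SVT pattern). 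So the $\delta$ for Algorithm~\ref{alg-priv-agg} is genuinely split between the PTR failure event and the CDP-to-DP conversion, rather than the label release being exactly $0$-DP conditional on passing.
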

Careful readers may note the slightly improved constants in the formula for calibrating privacy than when these methods were first introduced. We include the new proof based on the \emph{concentrated differential privacy}  \citep{bun2016concentrated} approach in the Appendix \ref{sec:refined_proofs}.

	The key difference between the two private-aggregation mechanisms is that the standard PATE pays for a unit privacy loss for every public data point labeled, while the SVT-based PATE essentially pays only for those queries where the voted answer from the teacher ensemble is close to be unstable (those with a small margin).  
	Combining this intuition with the fact that the individual classifiers are accurate ---  by the statistical learning theory, they are --- the corresponding majority voting classifier can be shown to be accurate with a large margin. These two critical observations of \citet{bassily2018model} lead to the first learning theoretic guarantees for SVT-based PATE. For completeness, we include this result with a concise new proof  in Appendix~\ref{sec:refined_proofs}.  

\begin{lemma}[Adapted from Theorem~3.11 of  \citet{bassily2018arxiv}]\label{lem:utility_svt}
	If the classifiers $\hat{h}_1,...,\hat{h}_K$ and the sequence $x_1,...,x_\ell$ obey that there are at most $T$ of them such that 
	$
	\widehat{\Delta}(x_k) < K/3
	$
	for $K =  136\log(4\ell T/ \min(\delta,\beta/2))\cdot \sqrt{T\log(2/\delta)}/\epsilon$. Then with probability at least $1-\beta$, Algorithm~\ref{alg-priv-agg} finishes all $\ell$ queries and for all $i\in[\ell]$ such that $\widehat{\Delta}(x_i)\geq K/3$, the output of Algorithm~\ref{alg-priv-agg} is $\hat{h}^{\mathtt{agg}}(x_i)$.
\end{lemma}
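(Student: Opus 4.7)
The plan is to establish two high-probability events and combine them. Under these events, (i) every query with large empirical margin will survive the SVT threshold test and release the true majority vote, and (ii) the ``unstable'' counter $c$ never saturates, so Algorithm~\ref{alg-priv-agg} makes it through all $\ell$ queries. Throughout, write $L := \log(4(\ell+T)/\min(\delta,\beta/2))$ for brevity, matching the logarithmic factor appearing in the statement.

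First I would control all the injected Laplace noise by a union bound. There is at most one $\widehat{\mathrm{dist}}_j$ draw per query (scale $2\lambda$) and at most $T+1$ draws of $\hat w$ (scale $\lambda$; one at initialization plus one after each of the at most $T$ $\perp$-outputs), for a total of at most $\ell + T + 1$ Laplace variables. Using $\Pr[|\mathtt{Lap}(b)|>b\,t]\leq e^{-t}$ and a union bound, on an event $\mathcal{E}$ of probability at least $1-\beta$ every such draw $\eta$ with scale $b\in\{\lambda,2\lambda\}$ satisfies $|\eta|\leq 2\lambda L$.

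Next I would verify correctness on stable queries. Fix any $i\in[\ell]$ with $\widehat{\Delta}(x_i)\geq K/3$. Then $\mathrm{dist}_i = \lceil\widehat{\Delta}(x_i)/2\rceil-1 \geq K/6 - 1$, so on $\mathcal{E}$,
\begin{equation*}
\widehat{\mathrm{dist}}_i \;\geq\; K/6 - 1 - 2\lambda L,
\qquad
\hat w \;\leq\; w + \lambda L \;=\; 3\lambda\log(2(\ell+T)/\delta) + \lambda L \;\leq\; 4\lambda L.
\end{equation*}
Recalling $\lambda \leq 2\sqrt{2T\log(2/\delta)}/\epsilon$ (in the standard regime $\epsilon\leq\log(1/\delta)$), the choice $K = 136\,L\sqrt{T\log(2/\delta)}/\epsilon$ is enough to guarantee $K/6 - 1 > 4\lambda L + 2\lambda L$, hence $\widehat{\mathrm{dist}}_i > \hat w$ and line~9 of Algorithm~\ref{alg-priv-agg} fires, releasing exactly $\hat h^{\mathtt{agg}}(x_i)$.

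Finally I would argue non-termination. Since every stable query passes the test on $\mathcal{E}$, the counter $c$ is incremented only on unstable queries, of which the hypothesis allows at most $T$. Therefore $c$ never reaches $T$ before the last query is processed, so the \textbf{break} in line~12 never triggers and the algorithm completes all $\ell$ queries, establishing both conclusions on the event $\mathcal{E}$. The main obstacle is the bookkeeping: one has to count all Laplace draws carefully (especially the $T+1$ refreshes of $\hat w$) and propagate the constants so that the strict inequality $\widehat{\mathrm{dist}}_i > \hat w$ holds with the stated leading constant $136$ on $K$; after that the argument is a pure concentration plus threshold comparison and involves no probabilistic subtlety.
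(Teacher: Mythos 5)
Your proof is correct and follows essentially the same route the paper uses (most explicitly in its proof of the generalized Proposition~\ref{prop:SVT_works}): a union bound over all $\ell+T+1$ Laplace draws, a threshold comparison showing every query with $\widehat{\Delta}(x_i)\geq K/3$ satisfies $\widehat{\mathrm{dist}}_i>\hat w$, and the observation that the cutoff counter is then charged only on the at most $T$ unstable queries. The only deviations are harmless constant-level bookkeeping (e.g., your bound $\lambda\leq 2\sqrt{2T\log(2/\delta)}/\epsilon$ should be $\lambda\leq(2+\sqrt{2})\sqrt{T\log(2/\delta)}/\epsilon$ under $\epsilon\leq\log(2/\delta)$, and the statement's log factor is $4\ell T$ rather than $4(\ell+T)$), neither of which threatens the slack in the constant $136$.
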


\begin{lemma}[Lemma 4.2 of  \citet{bassily2018arxiv}]\label{lem:pigeon_hole}
	If the classifiers $\hat{h}_1,....,\hat{h}_K$ obey that each of them makes at most $B$ mistakes on data $(x_1,y_1),...,(x_\ell,y_\ell)$, then 
	$$
	\bigg|	\Big\{ i\in[\ell]\; \Big|\;  \sum_{k=1}^K \1(\hat{h}_k(x_i) \neq y_i) \geq \frac{K}{3} \Big\}\bigg| \leq  3B.
	$$  
\end{lemma}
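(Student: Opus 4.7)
The plan is a straightforward double-counting (pigeonhole) argument on the total number of teacher mistakes. Let $M := \sum_{k=1}^K \sum_{i=1}^\ell \1(\hat{h}_k(x_i) \neq y_i)$ be the total count of (classifier, example) pairs on which a teacher errs. Summing first over $i$ for each fixed $k$, the hypothesis that each $\hat h_k$ makes at most $B$ mistakes immediately yields $M \le KB$.

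Now I would swap the order of summation and lower-bound $M$ using only the ``bad'' indices. Let $S := \{ i\in[\ell] : \sum_{k=1}^K \1(\hat{h}_k(x_i)\neq y_i) \ge K/3\}$ be the set whose cardinality we want to bound. Since each term in the sum defining $M$ is nonnegative, restricting the outer sum to $i\in S$ can only decrease it, giving
\begin{equation*}
M \;=\; \sum_{i=1}^\ell \sum_{k=1}^K \1(\hat{h}_k(x_i)\neq y_i) \;\ge\; \sum_{i\in S}\sum_{k=1}^K \1(\hat{h}_k(x_i)\neq y_i) \;\ge\; |S|\cdot \tfrac{K}{3},
\end{equation*}
where the last inequality uses the very definition of $S$.

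Combining the upper and lower bounds on $M$ gives $|S|\cdot K/3 \le KB$, i.e.\ $|S|\le 3B$, which is the claim. There is no real obstacle here: the argument is a one-line Fubini / double-counting step, and the constant $3$ is simply the reciprocal of the $1/3$ threshold that appears in the definition of the ``unstable'' set (any threshold $cK$ would yield the bound $|S|\le B/c$ by the same proof). No probabilistic or learning-theoretic input is needed; the lemma is purely combinatorial.
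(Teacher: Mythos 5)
Your double-counting argument is correct and is exactly the pigeonhole reasoning the paper relies on (the paper cites this as Lemma 4.2 of Bassily et al.\ and elsewhere refers to it simply as ``the Pigeon hole principle argument''). Both bounds on $M$ are justified, and the observation that the constant $3$ is just the reciprocal of the $K/3$ threshold matches how the paper later reuses the argument with a $K/6$ threshold.
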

Lemma~\ref{lem:pigeon_hole} implies that if the individual classifiers are accurate ---  by the statistical learning theory, they are --- the corresponding majority voting classifier is not only nearly as accurate, but also has sufficiently large margin that satisfies the conditions in Lemma~\ref{lem:utility_svt}.

Next, we state and provide a straightforward proof of the following results due to \citep{bassily2018arxiv}. The results are already stated in the referenced work in the form of sample complexities, but we include a more direct analysis of the error bound and clarify a few  technical subtleties. 

\begin{algorithm}[!htbp]
	\caption{PATE-PSQ}\label{alg-psq}
	{\bf Input:}
	Labeled private teacher dataset $D^T$, unlabeled public student dataset $D^S$, unstable query cutoff $T$, privacy parameters $\epsilon, \delta > 0$; number of splits $K$.
	\begin{algorithmic}[1]
		\STATE Randomly and evenly split the teacher dataset $D^T$ into $K$ parts $D^T_k \subseteq D^T$ where $k \in [K]$.
		\STATE Train $K$ classifiers $\hat{h}_{k}\in \cH$, one from each part $D^T_k$.
		\STATE Call Algorithm~\ref{alg-priv-agg} with parameters $(\hat{h}_1,...,\hat{h}_K), D^S,  T, \epsilon,\delta$ and $\ell=m$ to obtain pseudo-labels for the public dataset $\hat{y}^S_1,...,\hat{y}^S_m$.
		(Alternatively, call Algorithm~\ref{alg-priv-agg-pate}  with parameters $(\hat{h}_1,...,\hat{h}_K), D^S,\epsilon,\delta,\ell=m$)
		\STATE For those pseudo labels that are $\perp$, assign them arbitrarily to $\{0,1\}$.
	\end{algorithmic}
	{\bf Output:}
	$\hat{h}^S$ trained on pseudo-labeled student dataset.
\end{algorithm}

\begin{theorem}[Adapted from Theorems 4.6 and 4.7 of  \citep{bassily2018arxiv}]\label{thm-ug-bassily}
	Set
	$$T = 3\Big(\E[\mathtt{Err}(\hat{h}_1)] m + \sqrt{\frac{m\log(m/\beta)}{2}}\Big),$$
	$$K = O \Big(\frac{\log(mT / \min(\delta,\beta))\sqrt{T\log(1/\delta)}}{\epsilon}\Big).$$ Let $\hat{h}^S$ be the output of Algorithm~\ref{alg-psq} that uses Algorithm~\ref{alg-priv-agg} for privacy aggregation. With probability at least $1-\beta$ (over the randomness of the algorithm and the randomness of all data points drawn i.i.d.), we have
	$$\mathtt{Err}(\hat{h}^S) \leq  \tilde{O}\Big(\frac{d^2m \log(1/\delta)}{n^2\epsilon^2}  + \sqrt{\frac{d}{m}}\Big)$$
	for the realizable case, and
	$$\mathtt{Err}(\hat{h}^S) \leq  13\mathtt{Err}(h^*) +  \tilde{O}\Big(\frac{m^{1/3}d^{2/3}}{n^{2/3}\epsilon^{2/3}}  + \sqrt{\frac{d}{m}}\Big)$$
	for the agnostic case \footnote{The numerical constant $13$ might be improvable (and it is indeed worse than the result stated in \citet{bassily2018model}), though we decide to present this for the simplicity of the proof.}. 
\end{theorem}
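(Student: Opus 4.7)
The plan is to reduce the student's supervised learning problem to one where the released pseudo-labels agree with the true (unseen) labels $y_1^S,\dots,y_m^S$ on all but a small set $\cE$ of public points, and then invoke standard agnostic VC uniform-convergence on the student's ERM. Controlling $|\cE|$ is what Lemmas~\ref{lem:utility_svt} and~\ref{lem:pigeon_hole} are designed for; the rest is a chain of VC / concentration arguments plus a careful choice of $K$ and $T$.

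First I would bound $e:=\E[\mathtt{Err}(\hat h_1)]$ via standard VC theory on the $n/K$ samples each teacher sees: $e \lesssim dK/n$ in the realizable case, and $e\lesssim \mathtt{Err}(h^*)+\sqrt{dK/n}$ in the agnostic case. A Bernstein (or Chernoff) bound with a union over the $K$ teachers then gives, with probability $\geq 1-\beta/3$, a uniform bound of the form $B\leq em+\sqrt{m\log(mK/\beta)/2}$ on the number of mistakes any teacher makes on the public sample. The choice $T=3(em+\sqrt{m\log(m/\beta)/2})$ is calibrated so that $T\geq 3B$ on this good event, which by Lemma~\ref{lem:pigeon_hole} upper-bounds the number of public points at which at least $K/3$ teachers disagree with the truth, so Algorithm~\ref{alg-priv-agg} never runs out of its unstable-query budget. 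With $K$ chosen at the scale prescribed by Lemma~\ref{lem:utility_svt}, the same lemma then guarantees that every ``safe'' point, meaning any $x_j^S$ with $\widehat\Delta(x_j^S)\geq K/3$ (and by the pigeonhole argument every one of the $\geq m-3B$ points where $>2K/3$ teachers agree with $y_j^S$ is such a safe point), receives the correct majority label as its pseudo-label. Any remaining $\leq 3B$ indices $j\in\cE$ are either assigned $\perp$ (then resolved arbitrarily) or have a possibly-wrong majority, so in the worst case all of $\cE$ contributes mislabeled examples, but no more.

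Next I would transfer this to the student. For any $h\in\cH$ the pseudo- and true empirical risks on the public sample satisfy $|\widehat{\mathtt{Err}}_{\text{pseudo}}(h)-\widehat{\mathtt{Err}}_{\text{true}}(h)|\leq |\cE|/m \leq 3B/m$, since the two indicators agree off $\cE$. Since $\hat h^S$ is an ERM with respect to the pseudo-labels and $h^*$ is a competitor, two applications of this $3B/m$ shift give $\widehat{\mathtt{Err}}_{\text{true}}(\hat h^S)\leq \widehat{\mathtt{Err}}_{\text{true}}(h^*)+6B/m$. A standard two-sided VC uniform-convergence bound on the $m$ i.i.d. public points then yields
\[
\mathtt{Err}(\hat h^S)\leq \mathtt{Err}(h^*)+\tfrac{6B}{m}+\tilde O\!\left(\sqrt{d/m}\right),
\]
which in the realizable case simplifies by dropping $\mathtt{Err}(h^*)=0$ and tightening the VC term.

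The last and most delicate step is solving the implicit system $B\asymp em$, $K\asymp \sqrt{T\log(1/\delta)}/\epsilon \asymp \sqrt{B\log(1/\delta)}/\epsilon$ for both regimes. In the realizable case $e\lesssim dK/n$ gives $K^2\lesssim mdK\log(1/\delta)/(n\epsilon^2)$, i.e. $K\asymp md\log(1/\delta)/(n\epsilon^2)$, which feeds back to $B/m\asymp dK/n\asymp d^2 m\log(1/\delta)/(n^2\epsilon^2)$, recovering the first bound. In the agnostic case the $\mathtt{Err}(h^*)$ piece of $e$ contributes a $\Theta(\mathtt{Err}(h^*))$ baseline to $B/m$, which together with the $6B/m$ slack in the previous step gets absorbed into the $13\mathtt{Err}(h^*)$ leading term, while the residual noise piece $e\lesssim\sqrt{dK/n}$ yields $T\asymp m\sqrt{dK/n}$ and hence $K\asymp (m^2d/(n\epsilon^4))^{1/3}$, giving $\sqrt{dK/n}\asymp m^{1/3}d^{2/3}/(n^{2/3}\epsilon^{2/3})$ as stated. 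The main obstacle is this balancing, not any single step: one must choose $T$ so that Lemma~\ref{lem:pigeon_hole}'s count is covered but $K$ from Lemma~\ref{lem:utility_svt} does not blow up the teacher sample $n/K$ beyond usefulness, and one must be careful that the constant factors from Bernstein concentration, the $6B/m$ shift, and the two VC bounds all telescope into the advertised constant $13$.
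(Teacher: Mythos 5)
Your proposal is correct and follows essentially the same route as the paper's proof: bound each teacher's mistake count, invoke Lemma~\ref{lem:pigeon_hole} and Lemma~\ref{lem:utility_svt} to show all but $T=3B$ pseudo-labels match the truth, reduce to the ERM via a $\pm T/m$ shift of empirical risks plus VC uniform convergence on the true labels, and then solve the implicit system in $T$ and $K$ (your closed-form balancing reproduces the paper's quadratic/case-analysis resolution exactly). The only cosmetic difference is that you compare the student's ERM against $h^*$ directly, while the paper routes through $\widehat{\Dis}(\hat{h}^{\mathtt{priv}},\hat{h}_1)$ and $\widehat{\mathtt{Err}}(\hat{h}_1)$; both yield the same constants.
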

We provide a self-contained proof of this result in Appendix \ref{sec:refined_proofs}.

\begin{remark}[Error bounds when $m$ is sufficiently large]\label{rmk:mislarge}
	Notice that we do not have to label all public data, so when we have a large number of public data, we can afford to choose $m$ to be smaller so as to minimize the bound. That gives us a $\tilde{O}(\frac{d}{n^{2/3}\epsilon^{2/3}})$ error bound   for the realizable case and a $ O(\mathtt{Err}(h^*)) + \tilde{O}(\frac{d^{3/5}}{n^{2/5}\epsilon^{2/5}})$ error bound for the agnostic case \footnote{These correspond to the $\tilde{O}((d/\alpha)^{3/2})$ sample complexity bound  in Theorem 4.6 of  \citep{bassily2018arxiv} for realizable PAC learning for error $\alpha$;  and the $\tilde{O}(d^{3/2}/\alpha^{5/2})$ sample complexity bound in Theorem 4.7 of  \citep{bassily2018arxiv} for agnostic PAC learning with error $O(\alpha + \mathtt{Err}(h^*) )$. The privacy parameter $\epsilon$ is taken as a constant in these results.}.
\end{remark}





\begin{algorithm}[!htbp]
	\caption{Disagreement-Based Active Learning  \citep{hanneke2014theory}}
	\label{alg-active_learning}
	{\bf Input:}
	A ``data stream'' $x_1,x_2,...$ sampled i.i.d. from distribution $\cD$. A hypothesis class $\cH$. An on-demand ``labeling service'' that outputs label $y_i \sim P(y | x = x_i)$ when requested at time $i$.
	Parameter $\ell,m,\gamma$.

	\begin{algorithmic}[1]
	\STATE Initialize the version space $V \leftarrow \cH$.
	\STATE Initialize the selected dataset $Q \leftarrow  \varnothing$.
	\STATE Initialize ``Current Output'' to be any $h\in \cH$.
	\STATE Initialize ``Counter'' $c \leftarrow 0$.
		    \FOR{$j \in [m] $}
	    \IF{$x_j \in \mathtt{DIS}(V)$}
	    
	    \STATE ``Request for label'' for $x_j$ and get back $y_j$ from the ``labeling service''.

	    \STATE Update $Q \leftarrow Q \cup \{(x_j, y_j)\}$.
	    \STATE $c \leftarrow c + 1$.
	    
	    \ENDIF

	    \IF{$\log_2(j) \in \mathbb{N}$}
	    \STATE Update $V \leftarrow \{ h\in V: (\mathtt{Err}_Q(h) - \min_{g \in V} \mathtt{Err}_Q (g) )|Q| \leq U(j, \gamma_j) j \}$,\\
	    where\\
	    $U(j,\gamma_j)= c'(d \log (\theta(d/j)) + \log (1/\gamma_j))/j + c'\sqrt{\mathtt{Err}(h^*)(d \log(\theta(\mathtt{Err}(h^*)) + \log (1/\gamma_j))/j},$
	    $c'$ is a constant,  and $\gamma_j = \gamma / (\log_2 (2j))^2$.
	    \STATE Set ``Current Output'' to be any $h\in V$.
	    \ENDIF
	    \IF{$c \geq \ell$}
	    \STATE Break.
	    \ENDIF
	    \ENDFOR
	    
	\end{algorithmic}
	{\bf Output:} Return ``Current Output''.
\end{algorithm}

\subsection{Disagreement-Based Active Learning} 
We adopt the disagreement-based active learning algorithm that comes with strong learning bounds (see, e.g., an excellent treatment of the subject in \citep{hanneke2014theory}). 
The exact algorithm, described in Algorithm~\ref{alg-active_learning}, keeps updating a subset of the hypothesis class $\cH$ called a \emph{version space} by collecting labels only from those data points from  a certain \emph{region of disagreement} and eliminates candidate hypothesis that are certifiably suboptimal. 

\begin{definition}[Region of disagreement \citep{hanneke2014theory}]
For a given hypothesis class $\cH$, its region of disagreement is defined as a set of data points over which there exists two hypotheses disagreeing with each other,
\begin{align*}
\mathtt{DIS}(\cH) = \{x \in \cX: \exists h,g \in \cH \ \mathrm{s.t.}\  h(x) \neq g(x)\}.
\end{align*}
\end{definition}

Region of disagreement is the key concept of the disagreement-based active learning algorithm. It captures the uncertainty region of data points for the current version space. The algorithm is fed a sequence of data points and runs in the online fashion, whenever there exists a data point in this region, its label will be queried. Then any \emph{bad} hypotheses will be removed from the version space.

The algorithm, as it is written is not directly implementable, as it represents the version spaces explicitly, but there are practical implementations that avoids explicitly representing the versions spaces by a reduction to supervised learning oracles. In our experiments, we implement the PATE-ASQ algorithm and show it works well in practice while no explicit region of disagreement is maintained.

\section{Main Results}
In Section~\ref{sec-main_results_tnc}~and~\ref{sec-main_results_agnostic}, we present a more refined theoretical analysis of PATE-PSQ (Algorithm~\ref{alg-psq}) that uses SVT-based PATE (Algorithm~\ref{alg-priv-agg}) as the subroutine. Our results provide stronger learning bounds and new theoretical insights under various settings. In Section~\ref{sec-main_results_active}, we propose a new active learning based method and show that we can obtain qualitatively the same theoretical gain while using the simpler (an often more practical) Gaussian mechanism-based PATE (Algorithm~\ref{alg-priv-agg-pate}) as the subroutine. For comparison, we also include an analysis of standard PATE (with Gaussian mechanism) in Appendix \ref{sec:refined_proofs}. Table \ref{tab:summary} summarizes these technical results.



\subsection{Improved Learning Bounds under TNC}\label{sec-main_results_tnc}
Recall that our motivation is to analyze PATE in the cases when the best classifier does not achieve 0 error and that existing bound presented in Theorem \ref{thm-ug-bassily} is vacuous if $\mathtt{Err}(h^*)> 1/26$. The error bound of $\hat{h}^S$ does not match the performance of $h^*$ even as $m,n\rightarrow \infty$ and even if we output the voted labels without adding noise. This does not explain the empirical performance of Algorithm~\ref{alg-psq} reported in \citet{papernot2017semi,papernot2018scalable} which demonstrates that the retrained classifier from PATE could get quite close to the best non-private baselines even if the latter are far from being perfect.  For instance, on Adult dataset and SVHN dataset, the non-private baselines have accuracy 85\% and 92.8\% and PATE achieves 83.7\% and 91.6\%  respectively. 

To under stand how PATE works in the regime where the best classifier $h^*$ obeys that $\mathtt{Err}(h^*) > 0$, we introduce a large family of learning problems that satisfy the so-called Tsybakov Noise Condition (TNC), under which we show that PATE is consistent with fast rates. To understand TNC, we need to introduce a few more notations. Let label $y \in \{0,1\}$ and define the regression function $r(x) = \E[y|x]$. The Tsybakov noise condition is defined in terms of the distribution of $r(x)$. 
\begin{definition}[Tsybakov noise condition]\label{def-original-tnc}
	The joint distribution of the data $(x,y)$ satisfies the Tsybakov noise condition with parameter $\tau$ if there exists a universal constant $C>0$ such that for all $t \geq 0$
	$$
    \P( |r(x)| \leq t  ) \leq C t^\frac{\tau}{1-\tau}.
	$$
\end{definition}
Note that when $r(x) = 0.5$, the label is purely random and when $r(x) = 0$ or $1$, $y$ is a deterministic function of $x$. The Tsybakov noise condition essentially is reasonable “low noise” condition that does not require a uniform lower bound of $|r(x)|$ for all $x$. When the label-noise is bounded for all $x$, e.g., when $y = h^*(x)$ with probability $0.6$ and $1-h^*(x)$ with probability $0.4$, then the Tsybakov noise condition holds with $\tau = 1$. The case when $\tau=1$ is also known as the \emph{Massart noise condition} or \emph{bounded noise condition} in the statistical learning literature.

For our purpose, it is more convenient to work with the following equivalent definition of TNC, which is equivalent to Definition~\ref{def-original-tnc} (see a proof from \citet[Definition~7]{bousquet2004introduction}).
\begin{lemma}[Equivalent definition of TNC]\label{def-tsy}
	We say that a distribution of $(x,y)$ satisfies the Tsybakov noise condition with parameter $\tau\in [0,1]$ \emph{if and only if} there exists $\eta \in [1, \infty)$ such that, for every labeling function $h$,
	\begin{equation}\label{eq-tnc}
	\Dis (h, h_{\mathrm{Bayes}}) \leq \eta (\mathtt{Err}(h) - \mathtt{Err}(h_{\mathrm{Bayes}}))^\tau.
	\end{equation}
	where $h_{\mathrm{Bayes}}(x) = \mathbbm{1}(r(x) > 0.5)$ is the Bayes optimal classifier.
\end{lemma}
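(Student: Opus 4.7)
The plan is to prove both directions by reducing everything to a single identity that relates excess risk to disagreement with the Bayes classifier weighted by the local ``margin'' $\eta(x) := |2r(x)-1|$. Specifically, since $h$ and $h_{\mathrm{Bayes}}$ incur identical conditional risk wherever they agree, a direct calculation gives
\begin{equation*}
\mathtt{Err}(h) - \mathtt{Err}(h_{\mathrm{Bayes}}) \;=\; \E\bigl[\,\eta(x)\,\mathbbm{1}(h(x)\neq h_{\mathrm{Bayes}}(x))\,\bigr], \qquad \Dis(h, h_{\mathrm{Bayes}}) = \P(h(x)\neq h_{\mathrm{Bayes}}(x)).
\end{equation*}
With this identity in hand, both directions become short arguments about how the law of $\eta(x)$ interacts with the thresholding of its small values.

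For the ``TNC implies disagreement inequality'' direction, I would fix an arbitrary $h$, write $A = \{h\neq h_{\mathrm{Bayes}}\}$ and $\Delta = \mathtt{Err}(h)-\mathtt{Err}(h_{\mathrm{Bayes}})$, and split $A$ according to whether $\eta(x)\leq t$ or $\eta(x)>t$ for a threshold $t>0$ to be chosen. The first piece is bounded using TNC by $C t^{\tau/(1-\tau)}$, while the second piece is bounded by Markov's inequality: $\P(A\cap\{\eta>t\}) \leq t^{-1}\E[\eta(x)\mathbbm{1}(A)] = \Delta/t$. Adding these and optimizing the resulting expression $Ct^{\tau/(1-\tau)} + \Delta/t$ by setting its derivative to zero yields $t \propto \Delta^{1-\tau}$, and both terms then become $O(\Delta^{\tau})$, giving the claimed inequality with $\eta$ depending only on $\tau$ and $C$.

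For the reverse direction, I would construct an explicit witnessing classifier. Given $t>0$, define $h_t$ to agree with $h_{\mathrm{Bayes}}$ on $\{\eta(x)>t\}$ and disagree with it on $\{\eta(x)\leq t\}$; this is just a (measurable) labeling function, so the hypothesis applies. Then $\Dis(h_t,h_{\mathrm{Bayes}}) = \P(\eta(x)\leq t)$ and $\mathtt{Err}(h_t)-\mathtt{Err}(h_{\mathrm{Bayes}}) = \E[\eta(x)\mathbbm{1}(\eta(x)\leq t)] \leq t\,\P(\eta(x)\leq t)$. Substituting into the assumed inequality yields $\P(\eta(x)\leq t) \leq \eta\bigl(t\,\P(\eta(x)\leq t)\bigr)^{\tau}$, and rearranging gives $\P(\eta(x)\leq t) \leq \eta^{1/(1-\tau)} t^{\tau/(1-\tau)}$, which is exactly TNC (the $\tau=1$ endpoint needs to be handled by taking limits, since Massart noise corresponds to $\P(\eta \leq t) = 0$ for small enough $t$).

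The main obstacles I anticipate are mild: first, the stated definition uses $|r(x)|$ but the natural margin quantity for the proof is $|2r(x)-1|$ (equivalent up to rescaling and recentering of the threshold $t$), so I will silently adopt the convention that the TNC is about the distribution of $|2r(x)-1|$, as is standard and consistent with the cited Bousquet--Boucheron--Lugosi reference. Second, the $\tau=1$ boundary case must be treated by a limiting argument (or by checking that Massart noise and the linear inequality $\Dis\leq \eta\cdot(\mathtt{Err}(h)-\mathtt{Err}(h_{\mathrm{Bayes}}))$ are both equivalent to $\eta(x)\geq 1/\eta$ almost surely). Neither issue affects the structure of the argument.
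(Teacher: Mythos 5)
Your proof is correct. Note that the paper does not actually prove this lemma itself --- it defers to \citet[Definition~7]{bousquet2004introduction} --- so your argument supplies what the paper omits, and it is essentially the standard equivalence proof from that literature. The backbone identity $\mathtt{Err}(h) - \mathtt{Err}(h_{\mathrm{Bayes}}) = \E[\,|2r(x)-1|\,\1(h(x)\neq h_{\mathrm{Bayes}}(x))]$ is right, the split-and-Markov argument with the optimized threshold $t \propto \Delta^{1-\tau}$ gives the forward direction with both terms of order $\Delta^{\tau}$, and the flipped classifier $h_t$ is exactly the right witness for the converse (it is a legitimate ``labeling function,'' which is why the lemma quantifies over all $h$ rather than only $h\in\cH$). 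Your two caveats are also well placed: the paper's Definition~\ref{def-original-tnc} as printed conditions on $|r(x)|$ where the margin $|2r(x)-1|$ (equivalently $|r(x)-1/2|$) is clearly intended, and the $\tau=1$ endpoint degenerates to the Massart condition $\P(|2r(x)-1|\le t)=0$ for $t<1/\eta$ on both sides, so the equivalence survives there by the direct argument you sketch rather than by the optimization. The only cosmetic gap is that you do not write out the constant $\eta$ produced by the optimization (roughly $\eta = O(C^{1-\tau})$ up to $\tau$-dependent factors, capped below by $1$), but nothing in the paper depends on its value.
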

In the remainder of this section, we make the assumption that the Bayes optimal classifier $h_{\mathrm{Bayes}}\in \cH$ and works with the slightly weaker condition that requires \eqref{eq-tnc} to hold only for $h\in \cH$  and that we replace $h_{\mathrm{Bayes}}$ by the optimal classifier $h^*\in \cH$ \footnote{This slightly different condition, that requires \eqref{eq-tnc} to hold only for $h\in \cH$ but with $h_{\mathrm{Bayes}}$ replaced by the optimal classifier $h^*$ (without assuming that $h^* = h_{\mathrm{Bayes}}$) is all we need. This is formally referred to as the Bernstein class condition by  \citet{hanneke2014theory}. Very confusingly, when the Tsybakov noise condition is being referred to in more recent literature, it is in fact the Bernstein class condition --- a slightly weaker but more opaque definition about both the hypothesis class $\cH$ and the data generating distribution.}.

We emphasize that the Tsybakov noise condition is not our invention. It has a long history from statistical learning theory to interpolate between the realizable setting and the agnostic setting. Specifically, problems satisfying TNC admit fast rates. For $\tau \in [0,1]$, the empirical risk minimizer achieves an excess risk of $O(1/n^{1/(2-\tau)})$, which clearly interpolates the realizable case of $O(1/n)$ and the agnostic case of $O(1/\sqrt{n})$.



Next, we give a novel analysis of Algorithm~\ref{alg-psq} under TNC. The analysis is simple but revealing, as it not only avoids the strong assumption that requires $\mathtt{Err}(h^*)$ to be close to $0$, but also achieves a family of fast rates which significantly improves the sample complexity of PATE learning even for the realizable setting.

\begin{theorem}[Utility guarantee of Algorithm \ref{alg-psq} under TNC]\label{thm-ug-psq}
	Assume the data distribution $\cD$ and the hypothesis class $\cH$ obey the Tsybakov noise condition with parameter $\tau$. Then Algorithm~\ref{alg-psq} with
$$T = \tilde{O}\bigg( \Big(\frac{ m^{2-\tau} d^{\tau}}{n^{\tau} \epsilon^{\tau}}\Big)^\frac{2}{4-3\tau} \bigg),$$
	$$K = O \Big(\frac{\log(mT / \min(\delta,\beta))\sqrt{T\log(1/\delta)}}{\epsilon}\Big),$$
	obeys that with probability at least $1-\beta$: 
	\begin{align*}
	\mathtt{Err}(\hat{h}^S) \leq \mathtt{Err}(h^*)  + \tilde{O}\bigg(\frac{d}{m} +  \left(\frac{m d^{2}}{n^{2} \epsilon^{2}}\right)^{\frac{\tau}{4-3\tau}}    \bigg).
	\end{align*}
\end{theorem}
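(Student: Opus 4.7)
The overall plan is to bound $\mathtt{Err}(\hat h^S) - \mathtt{Err}(h^*) \leq \Dis(\hat h^S, h^*)$ (immediate from the definition of $\mathtt{Err}$) and then to argue that the student's pseudo-labeled problem is \emph{nearly realizable} with respect to $h^*$: the pseudo-labels agree with $h^*(x)$ on all but a tiny fraction of the public points, so ERM recovers $h^*$ up to a generalization gap of $\tilde O(d/m)$. The work is in showing that the bad fraction is small enough, and this is where TNC enters, used three times in a chain: once at the teacher level (fast rates under TNC), once to translate excess risk into $\Dis(\cdot,h^*)$ via the Bernstein class condition (Lemma~\ref{def-tsy}), and once to optimize the teacher count $K$.

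For the teacher layer, I would invoke the standard TNC fast rate for ERM on $n/K$ samples, giving $\mathtt{Err}(\hat h_k) - \mathtt{Err}(h^*) \leq \tilde O((dK/n)^{1/(2-\tau)})$ for every $k\in[K]$ (with a union bound that is absorbed into the $\tilde O$), and then apply Lemma~\ref{def-tsy} to obtain $\Dis(\hat h_k, h^*) \leq \tilde O((dK/n)^{\tau/(2-\tau)})$. A Bernstein tail bound on the $m$ i.i.d.\ public points then yields that each teacher makes at most $B \lesssim m\cdot(dK/n)^{\tau/(2-\tau)}$ mistakes against $h^*$-labels on the public data. For the aggregation layer, I would re-run the pigeonhole argument of Lemma~\ref{lem:pigeon_hole} with $h^*(x)$ in place of $y$: each unstable point (margin $<K/3$) contributes at least $K/3$ teacher-disagreements with $h^*$, and each large-margin point on which $\hat h^{\mathtt{agg}}\neq h^*$ contributes at least $K/2$. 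Since the total budget is $\leq KB$, there are at most $3B$ unstable points and at most $2B$ large-margin aggregation errors, so choosing $T=3B$ and $K$ as in Lemma~\ref{lem:utility_svt} guarantees the SVT mechanism finishes and prints $\hat h^{\mathtt{agg}}(x)$ on every large-margin point. Consequently, the pseudo-labels $\tilde y_j$ differ from $h^*(x^S_j)$ on at most $5B$ of the $m$ public points.

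In the student layer, the ERM property gives $\widehat{\mathtt{Err}}_{\tilde y}(\hat h^S) \leq \widehat{\mathtt{Err}}_{\tilde y}(h^*) \leq 5B/m$, and the triangle inequality for the 0-1 loss then yields $\widehat{\Dis}(\hat h^S, h^*) \leq 10B/m$. Because $h^*\in\cH$, the Vapnik-style multiplicative (relative deviation) VC bound gives $\Dis(h,h^*) \lesssim \widehat{\Dis}(h,h^*) + d/m$ uniformly for $h\in\cH$, so $\Dis(\hat h^S, h^*) \lesssim B/m + d/m$, which bounds the excess risk. The last step is to plug $T=3B$ and $K \asymp \sqrt{T\log(1/\delta)}/\epsilon$ into $B \asymp m(dK/n)^{\tau/(2-\tau)}$, producing one equation $K^2\epsilon^2 \asymp m(dK/n)^{\tau/(2-\tau)}$; solving it gives $K^{(4-3\tau)/(2-\tau)} \asymp m(d/n)^{\tau/(2-\tau)}/\epsilon^2$ and, after simplification, $B/m = \tilde O\bigl((md^2/(n^2\epsilon^2))^{\tau/(4-3\tau)}\bigr)$ together with the stated value of $T$, matching the claimed bound.

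The main technical obstacle is securing the \emph{fast} $d/m$ rate rather than $\sqrt{d/m}$ in the student layer, since a slower rate would fail to interpolate correctly with TNC and would leave us no better off than the agnostic bound in Theorem~\ref{thm-ug-bassily}. This is achieved by treating the pseudo-labeled problem as nearly realizable against $h^*\in\cH$ and invoking relative VC deviations. A secondary nuisance is keeping the Bernstein-type concentration sharp throughout the three layers: any sloppy $\sqrt{\cdot}$ slack in the teacher-mistake count $B$ or in propagating $\Dis(\hat h_k,h^*)$ to the public data would bleed into the final exponent and break the clean $\tau/(4-3\tau)$ scaling.
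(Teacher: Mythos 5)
Your proposal is correct and follows essentially the same route as the paper's proof: teacher-level TNC fast rates converted to $\Dis(\hat h_k,h^*)$ via the Bernstein class condition, a pigeonhole/margin argument with $h^*$ replacing $y$ to bound the unstable and misaggregated queries by $O(B)$, a reduction to ERM with a uniform (relative/Bernstein-type) VC bound to get the $d/m$ rather than $\sqrt{d/m}$ student rate, and the same self-consistent equation $K^2\epsilon^2 \asymp m(dK/n)^{\tau/(2-\tau)}$ yielding the stated $T$ and the $(md^2/(n^2\epsilon^2))^{\tau/(4-3\tau)}$ exponent. The only differences are immaterial constants (your $5B$ accounting versus the paper's $3B$) and that the paper routes the ERM step through an auxiliary $\tilde h=\argmin_h\widehat{\Dis}(h,h^*)$ so the same argument later transfers to the agnostic case, whereas you use $h^*\in\cH$ directly.
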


\begin{remark}[Bounded noise case]
	When $\tau = 1$, the Tsybakov noise condition is implied by the bounded noise assumption, a.k.a., Massart noise condition, where the labels are generated by the Bayes optimal classifier $h^*$ and then toggled with a fixed probability less than $0.5$. Theorem~\ref{thm-ug-psq} implies that the excess risk is bounded by $\tilde{O}(\frac{d^2m}{n^2\epsilon^2} + \frac{d}{m})$, with $K = \tilde{O}(\frac{dm}{n\epsilon^2})$, which implies a sample complexity upper bound of $\tilde{O}(\frac{d^{3/2}}{\alpha \epsilon})$ private data points and $\tilde{O}( d/\alpha)$ public data points. The results improve over the sample complexity bound from  \citet{bassily2018model} in the stronger realizable setting from $\tilde{O}(\frac{d^{3/2}}{\alpha^{3/2}\epsilon})$ and $\tilde{O}(d/\alpha^2)$ to $\tilde{O}(\frac{d^{3/2}}{\alpha \epsilon})$ and $\tilde{O}(d/\alpha)$ respectively in the private and public data.
\end{remark}

\begin{remark}[Optimal choice of $m$]
The upper bound above can be minimized by choosing $m^* = (d^{4-5\tau} n^{2\tau} \epsilon^{2\tau})^\frac{1}{4-2\tau}$. When number of available public data points $m \geq m^*$, then $m$ is not a limiting factor and we should subsample these data points. When $m < m^*$, then $d/m$ is the leading factor, we should use all $m$ data points.
\end{remark}

There are two key observations behind the improvement. First, the teacher classifiers do not have to agree on the labels $y$ as in Lemma~\ref{lem:pigeon_hole}; all they have to do is to agree on something for the majority of the data points. Conveniently, the Tsybakov noise condition implies that the teacher classifiers agree on the Bayes optimal classifier $h^*$. Second, when the teachers agree on $h^*$, the synthetic learning problem with the privately released pseudo-labels is nearly realizable. These intuitions can be formalized with a few lemmas, which will be used in the proof of Theorem~\ref{thm-ug-psq}.

\begin{lemma}[Performance of teacher classifer w.r.t. $h^*$]\label{lem-tnc}
With probability $1-\gamma$ over the training data of $\hat{h}_1, ..., \hat{h}_K$, assume $h^*\in \cH$ is the Bayes optimal classifier and Tsybakov noise condition with parameter $\tau$, then there is a universal constant $C$ such that for all $k=1,2,3,...,K$ 
\begin{align*}
\Dis(\hat{h}_k,h^*) \leq C  
\Big(\frac{dK\log(n/d)+ \log(K/\gamma)}{n}\Big)^\frac{\tau}{2-\tau}.
\end{align*}
\end{lemma}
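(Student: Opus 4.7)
The plan is to invoke the classical fast-rate bound for empirical risk minimization under the Tsybakov noise condition separately for each teacher, then convert excess-risk bounds into disagreement bounds via the equivalent TNC definition (Lemma~\ref{def-tsy}), and finally union-bound over the $K$ teachers.

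First I would recall that each teacher $\hat{h}_k$ is an ERM on a disjoint split of size $n/K$ drawn i.i.d.\ from $\cD$. Under the Bernstein/Tsybakov condition with parameter $\tau\in(0,1]$, the standard fast-rate result (see, e.g., \citet{hanneke2014theory} or the peeling/localized Rademacher arguments of \citet{boucheron2005theory}) states that with probability at least $1-\gamma_k$,
\begin{equation*}
\mathtt{Err}(\hat{h}_k) - \mathtt{Err}(h^*) \;\leq\; C_0 \left( \frac{d\log(n/(dK)) + \log(1/\gamma_k)}{n/K} \right)^{\frac{1}{2-\tau}}
\end{equation*}
for a universal constant $C_0$. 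Rearranging, the right-hand side is at most
\begin{equation*}
C_0 \left( \frac{dK\log(n/d) + K\log(1/\gamma_k)}{n} \right)^{\frac{1}{2-\tau}}.
\end{equation*}

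Next I would choose $\gamma_k = \gamma/K$ and take a union bound over $k \in [K]$, so that simultaneously for all teachers
\begin{equation*}
\mathtt{Err}(\hat{h}_k) - \mathtt{Err}(h^*) \;\leq\; C_0 \left( \frac{dK\log(n/d) + \log(K/\gamma)}{n} \right)^{\frac{1}{2-\tau}}
\end{equation*}
holds with probability at least $1-\gamma$ (absorbing the extra $K$ factor multiplying $\log(K/\gamma)$ into the overall constant, since $K\log(K/\gamma) \leq \log(K/\gamma)$ up to changes in $C$, or more carefully by keeping the $K$ and noting it is dominated by the $dK\log(n/d)$ term when $d\geq 1$). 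Then invoking the equivalent form of TNC in Lemma~\ref{def-tsy} with $h_{\mathrm{Bayes}} = h^*$ (using the assumption $h^* = h_{\mathrm{Bayes}} \in \cH$) gives $\Dis(\hat{h}_k,h^*) \leq \eta\,(\mathtt{Err}(\hat{h}_k)-\mathtt{Err}(h^*))^\tau$, so raising the excess-risk bound to the $\tau$-th power yields exactly
\begin{equation*}
\Dis(\hat{h}_k, h^*) \;\leq\; C \left( \frac{dK\log(n/d) + \log(K/\gamma)}{n} \right)^{\frac{\tau}{2-\tau}}
\end{equation*}
for a new universal constant $C$ absorbing $\eta$ and $C_0^\tau$.

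The only nonroutine ingredient is the fast-rate ERM bound itself, which I would cite rather than re-derive; the rest is bookkeeping (union bound, rewriting $\log(n/(dK))$ as at most $\log(n/d)$, and one application of TNC). The main obstacle is simply making sure that the logarithmic factors and the dependence on $K$ in the numerator (rather than $n/K$ in the denominator as in the per-teacher bound) are cleanly absorbed, so that the stated form with $dK\log(n/d)$ inside the parenthesis is obtained; this is a straightforward algebraic rearrangement once the per-teacher excess-risk bound is in hand.
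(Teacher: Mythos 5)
Your proposal is correct and follows essentially the same route as the paper: apply the fast-rate passive learning bound under TNC (the paper's Lemma~\ref{lem-tnc-excess}) to each teacher trained on $n/K$ i.i.d.\ points, union-bound over the $K$ teachers, and convert excess risk to disagreement via the equivalent TNC definition with $h^*=h_{\mathrm{Bayes}}$. The paper's proof is just a one-line version of this same chain, and your more careful bookkeeping of the $K$ factor on the $\log(K/\gamma)$ term (dominated by $dK\log(n/d)$ rather than absorbed into the constant) is the right way to justify the stated form.
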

\begin{proof}
By the equivalent definition of the Tsybakov noise condition and then the learning bounds under TNC (Lemma~\ref{lem-tnc-excess}),
\begin{align*}
\Dis(\hat{h}_k,h^*) \leq \eta(\mathtt{Err}(\hat{h}_k, h^*) - \mathtt{Err}(h^*))^\tau \leq C\Big(\frac{dK\log(n/d)+ \log(K/\gamma)}{n}\Big)^\frac{\tau}{2-\tau}.
\end{align*}
\end{proof}

\begin{lemma}[Total number of mistakes made by one teacher]\label{lem-one-teacher}
Under the condition of Lemma~\ref{lem-tnc}, with probability $1-\gamma$, for all $k=1,2,...,K$ the total number of mistakes made by one teacher classifier $\hat{h}_k$ with respect to $h^*$ can be bounded as:
\begin{align*}
\sum_{j=1}^m \1(\hat{h}_k(x_j) \neq h^*(x_j)) \leq O \bigg( \max \bigg\{m \Dis(\hat{h}_k, h^*), \log \Big(\frac{K}{\gamma}\Big) \bigg\}\bigg).
\end{align*}
\end{lemma}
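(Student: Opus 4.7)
The plan is to treat this as a direct concentration statement for a sum of i.i.d.\ Bernoulli random variables, followed by a union bound over the $K$ teachers. First I would condition on the trained classifier $\hat{h}_k$. Since the teachers are trained on disjoint subsets of the private dataset $D^T$, and the public points $x_1,\ldots,x_m$ are drawn i.i.d.\ from $\cD_\cX$ independently of $D^T$, the random variables
\[
Z_j^{(k)} := \1(\hat{h}_k(x_j) \neq h^*(x_j)), \qquad j=1,\ldots,m,
\]
are, conditional on $\hat{h}_k$, i.i.d.\ Bernoulli with mean exactly $p_k := \Dis(\hat{h}_k, h^*)$. Thus $\E\bigl[\sum_j Z_j^{(k)} \mid \hat{h}_k\bigr] = m p_k$, which matches the first term in the claimed bound.

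Next I would apply a multiplicative Chernoff bound to $S_k := \sum_{j=1}^m Z_j^{(k)}$. One convenient form is: for any $t \geq 0$,
\[
\Pr\bigl[S_k \geq t \,\big|\, \hat{h}_k\bigr] \;\leq\; \exp\!\Bigl(-\bigl(t\log(t/(mp_k)) - t + mp_k\bigr)\Bigr) \quad \text{whenever } t \geq m p_k.
\]
I would choose $t = C \max\{m p_k,\; \log(K/\gamma)\}$ for a sufficiently large absolute constant $C$ (e.g., $C=6$ suffices). A short case split handles the two regimes: when $m p_k \geq \log(K/\gamma)$, setting $t = C m p_k$ makes the Chernoff exponent a constant multiple of $m p_k \geq \log(K/\gamma)$; when $m p_k < \log(K/\gamma)$, setting $t = C \log(K/\gamma) \geq C m p_k$ and using that $\log(t/(mp_k)) \geq \log C$ forces the exponent to be at least $\Omega(\log(K/\gamma))$. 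In either regime the tail probability is at most $\gamma/K$.

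Finally, a union bound over $k=1,\ldots,K$ yields that with probability at least $1-\gamma$,
\[
S_k \;\leq\; O\!\left(\max\!\left\{m\,\Dis(\hat{h}_k, h^*),\; \log(K/\gamma)\right\}\right) \quad \text{for all } k\in[K],
\]
which is exactly the claimed inequality. I would combine this high-probability event with the one from Lemma~\ref{lem-tnc} by allocating $\gamma$ across the two events (so the final statement holds jointly on an event of probability $\geq 1-2\gamma$, absorbed into the $\gamma$ by rescaling).

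I do not expect any serious obstacle: this is a textbook Chernoff-plus-union-bound argument, and the main care point is simply picking the right form of the multiplicative Chernoff bound so that both the linear-in-mean regime and the logarithmic floor regime come out of the same inequality, avoiding having to invoke Bernstein separately.
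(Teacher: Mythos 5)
Your proof is correct and follows essentially the same route as the paper: concentration of the i.i.d.\ Bernoulli sum conditional on $\hat{h}_k$, followed by a union bound over the $K$ teachers at level $\gamma/K$ each. The only difference is that the paper invokes Bernstein's inequality (Lemma~\ref{lem-gen-bern}) and then collapses $m p_k + \sqrt{m p_k \log(K/\gamma)} + \log(K/\gamma)$ into the stated maximum via AM--GM, whereas you obtain the maximum form directly from a multiplicative Chernoff bound; both are equally valid.
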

\begin{proof}
Number of mistakes made by $\hat{h}_k$ with respect to $h^*$ is the empirical disagreement between $\hat{h}_k$ and $h^*$ on $m$ data points, therefore, by Bernstein's inequality (Lemma \ref{lem-gen-bern}),
\begin{align*}
\sum_{j=1}^m \1(\hat{h}_k(x_j) \neq h^*(x_j))\leq & \ O \bigg(m \Dis(\hat{h}_k,h^*) + \sqrt{m \Dis(\hat{h}_k, h^*)\log \Big(\frac{K}{\gamma}\Big)} + \log \Big(\frac{K}{\gamma}\Big) \bigg)\\
\leq & \ O \bigg( \max \bigg\{m \Dis(\hat{h}_k, h^*), \log \Big(\frac{K}{\gamma}\Big) \bigg\}\bigg).
\end{align*}
\end{proof}

Using the above two lemmas we establish a bound on the number of examples where the differentially privately released labels differ from the prediction of $h^*$.
\begin{lemma}[Total queries and cut-off budget]\label{lem-t}
	Let  Algorithm~\ref{alg-psq} be run with the number of teachers $K$ and the cut-off parameter $T$ chosen according to Theorem~\ref{thm-ug-psq}. Assume the conditions of Lemma~\ref{lem-tnc}. 
Then with high probability ($\geq 1-\beta$ over the random coins of Algorithm~\ref{alg-psq} alone and conditioning on the high probability events of Lemma~\ref{lem-tnc} and Lemma~\ref{lem-one-teacher} ), Algorithm~\ref{alg-psq} finishes all $m$ queries without exhausting the cut-off budget and that 
	$$
	\sum_{j=1}^m  \1(\hat{h}^\mathtt{priv}_j \neq h^*(x_j)) \leq T.
	$$
	The $\tilde{O}$ notation in the choice of $K$ and $T$ hides polynomial factors of $\log(K/\gamma),\log(m/\beta)$ where $\gamma$ is from Lemma~\ref{lem-tnc} and ~\ref{lem-one-teacher}.
\end{lemma}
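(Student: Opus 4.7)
The plan is to chain Lemmas~\ref{lem-tnc}, \ref{lem-one-teacher}, \ref{lem:pigeon_hole}, and \ref{lem:utility_svt}. First I condition on the high-probability data-randomness events of Lemmas~\ref{lem-tnc} and \ref{lem-one-teacher}, which control the teacher-training and public-data draws; the only randomness that then remains is the SVT noise inside Algorithm~\ref{alg-priv-agg}, which is exactly what Lemma~\ref{lem:utility_svt} is designed to handle.

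The conceptual trick is to treat $h^*(x_j)$ as the ``ground truth'' in Lemma~\ref{lem:pigeon_hole}. Combining Lemma~\ref{lem-tnc} with Lemma~\ref{lem-one-teacher} gives a bound, uniform over $k\in[K]$, of the form $B = \tilde{O}(m(dK/n)^{\tau/(2-\tau)})$ on the number of public points at which teacher $\hat{h}_k$ disagrees with $h^*$. Setting $y_j := h^*(x_j)$ in Lemma~\ref{lem:pigeon_hole} then upper bounds by $3B$ the number of ``unreliable'' public points at which at least $K/3$ teachers disagree with $h^*$. At every \emph{other} public point, more than $2K/3$ teachers vote $h^*(x_j)$, so simultaneously (a) $\widehat{\Delta}(x_j) > K/3$ and (b) the majority vote $\hat{h}^{\mathtt{agg}}(x_j)$ equals $h^*(x_j)$. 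In particular, both the set of low-margin points and the set of points at which the majority disagrees with $h^*$ are contained in the unreliable set and are therefore of size at most $3B$.

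The third step is the self-consistent check that $3B \leq T$. Substituting $K = \tilde{O}(\sqrt{T\log(1/\delta)}/\epsilon)$, as dictated by the privacy calibration in Lemma~\ref{lem:utility_svt}, into $B$, the inequality $3m(d\sqrt{T}/(n\epsilon))^{\tau/(2-\tau)} \lesssim T$ solves to $T = \tilde{O}\bigl((m^{2-\tau} d^{\tau}/(n^{\tau} \epsilon^{\tau}))^{2/(4-3\tau)}\bigr)$, which is exactly the choice in Theorem~\ref{thm-ug-psq}. Feeding ``at most $T$ low-margin points'' into Lemma~\ref{lem:utility_svt}, with probability at least $1-\beta$ over the SVT noise the counter $c$ never reaches $T$, all $m$ queries are answered, and on every high-margin $x_j$ the output $\hat{h}^\mathtt{priv}_j$ coincides with $\hat{h}^{\mathtt{agg}}(x_j) = h^*(x_j)$. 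Hence disagreements with $h^*$ can occur only on unreliable points, of which there are at most $3B \leq T$, yielding the claimed bound.

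The main obstacle is the fixed-point balance in this last step: $K$ depends on $T$ through the SVT privacy calibration, while $B$ depends polynomially on $K$ through TNC, so ``$T$ dominates $3B$'' is a self-referential inequality, and solving it produces the somewhat awkward exponent $2/(4-3\tau)$. The additive $\log(K/\gamma)$ term in Lemma~\ref{lem-one-teacher} and the various union-bound slacks are absorbed into the $\tilde{O}$ notation, which is legitimate because both $K$ and $T$ are polynomial in the problem parameters $m, n, d, 1/\epsilon, 1/\delta$.
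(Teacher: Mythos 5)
Your proof is correct and follows essentially the same route as the paper's: bound each teacher's disagreements with $h^*$ via Lemmas~\ref{lem-tnc} and \ref{lem-one-teacher}, apply the pigeonhole argument of Lemma~\ref{lem:pigeon_hole} with $h^*$ playing the role of $y$ to bound the number of low-margin queries by $3B$, and then invoke the SVT utility guarantee (Lemma~\ref{lem:utility_svt}) so that all remaining queries return $\hat{h}^{\mathtt{agg}}(x_j)=h^*(x_j)$. You additionally make explicit the self-referential calibration of $T$ against $K$ that yields the exponent $2/(4-3\tau)$, which the paper leaves implicit in the statement of Theorem~\ref{thm-ug-psq}.
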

\begin{proof}
	Denote the bound from Lemma \ref{lem-one-teacher} by $B$. By the same Pigeon hole principle argument as in Lemma \ref{lem:pigeon_hole} (but with $y$ replaced by $h^*$), we have that the number of queries that have margin smaller than $K/6$ is at most $3B = O( \max \{m \Dis(\hat{h}_k, h^*), \log(K/\gamma) \})$.   The choice of $K$ ensures that with high probability, over the Laplace random variables in  Algorithm~\ref{alg-priv-agg}, in at least $m-3B$ queries where the answer $\hat{y}_j = h^*(x_j)$, i.e., 
	$$
	\sum_{j=1}^m  \1(\hat{h}^\mathtt{priv}_j \neq h^*(x_j)) \leq 3B := T.
	$$
%
\end{proof}


Now we are ready to put everything together and prove Theorem~\ref{thm-ug-psq}.
\begin{proof}[Proof of Theorem~\ref{thm-ug-psq}]
Denote $\tilde{h} = \argmin_{h \in \cH} \widehat{\Dis}(h,h^*)$ where $\widehat{\Dis}$ is the empirical average of the disagreements over the data points that students have\footnote{Note that in this case we could take $\tilde{h}=h^*$ since $h^*\in\cH$. We are defining this more generally so later we can substitute $h^*$ with other label vector that are not necessarily generated by any hypothesis in $\cH$.}.  By the triangular inequality of the $0-1$ error, 
\begin{align}
\mathtt{Err}(\hat{h}^S) - \mathtt{Err}(h^*)&\leq \Dis(\hat{h}^S,h^*)\nonumber\\
&\leq \widehat{\Dis}(\hat{h}^S, h^*) + 2\sqrt{\frac{(d + \log(4/\gamma))\widehat{\Dis}(\hat{h}^S,h^*)}{m}} + \frac{4(d + \log(4/\gamma))}{m}\nonumber\\
&\leq 2 \widehat{\Dis}(\hat{h}^S,h^*) + \frac{5(d + \log(4/\gamma))}{m}, \label{eq:tnc_thm_deriv1}
\end{align}
where the second line follows from  the uniform Bernstein's inequality  --- apply the first statement Lemma \ref{lem-dis-emp} in Appendix \ref{sec-lemma} with $z = h^*(x)$ and the third line is due to $a + 2\sqrt{ab} + b \leq 2a + 2b$ for non-negative $a,b$.

By the triangular inequality, we have $\widehat{\Dis}(\hat{h}^S,h^*) \leq \widehat{\Dis}(\hat{h}^S,\hat{h}^\mathtt{priv}) + \widehat{\Dis}(\hat{h}^\mathtt{priv},h^*)$, therefore
\begin{align*}
\eqref{eq:tnc_thm_deriv1}
&\leq 2 \widehat{\Dis}(\hat{h}^S,\hat{h}^\mathtt{priv}) + 2\widehat{\Dis}(\hat{h}^\mathtt{priv},h^*) + \frac{5(d + \log(4/\gamma))}{m}\\  
&\leq  2 \widehat{\Dis}(\tilde{h},\hat{h}^\mathtt{priv}) + 2\widehat{\Dis}(\hat{h}^\mathtt{priv},h^*)  + \frac{5(d + \log(4/\gamma))}{m}\\
&\leq 2 \widehat{\Dis}(\tilde{h},h^*) + 4\widehat{\Dis}(\hat{h}^\mathtt{priv},h^*)+ \frac{5(d + \log(4/\gamma))}{m}\\
&= 4\widehat{\Dis}(\hat{h}^\mathtt{priv},h^*)+ \frac{5(d + \log(4/\gamma))}{m}.
\end{align*}
In the second line, we applied the fact that $\hat{h}^S$ is the minimizer of $\widehat{\Dis}(h,\hat{h}^\mathtt{priv})$; in the third line, we applied triangular inequality again and the last line is true because $\widehat{\Dis}(\tilde{h},h^*)=0$ since $\tilde{h} $ is the minimizer and that $h^* \in \cH$. 

Recall that $T$ is the unstable cutoff in Algorithm \ref{alg-psq}. The proof completes by invoking Lemma \ref{lem-t} which shows that the choice of $T$ is appropriate such that $\widehat{\Dis}(\hat{h}^\mathtt{priv},h^*)\leq T/m$ with high probability.
%
%
\end{proof}

In the light of the above analysis, it is clear that the improvement from our analysis under TNC are two-folds: (1) We worked with the disagreement with respect to $h^*$ rather than $y$. (2) We used a uniform Bernstein bound rather than a uniform Hoeffding bound that leads to the faster rate in terms of the number of public data points needed.

\begin{remark}[Reduction to ERM]\label{rmk:reduction1}
	The main challenge in the proof is to appropriately take care of $\hat{h}^\mathtt{priv}$. Although we are denoting it as a classifier, it is in fact a vector that is defined only on $x_1,...,x_m$ rather than a general classifier that can take any input $x$. Since we are using the SVT-based Algorithm~\ref{alg-priv-agg}, $\hat{h}^\mathtt{priv}$ is only well-defined for the student dataset.  Moreover, these privately released ``pseudo-labels'' are not independent, which makes it infeasible to invoke a generic learning bound such as Lemma~\ref{lem-excess}. Our solution is to work with the empirical risk minimizer (ERM, rather than a generic PAC learner as a blackbox) and use uniform convergence (Lemma~\ref{lem-dis-emp}) directly. This is without loss of generality because all learnable problems are learnable by (asymptotic) ERM  \citep{vapnik1995nature,shalev2010learnability}. 
\end{remark}

\subsection{Challenges and New Bounds under Agnostic Setting} \label{sec-main_results_agnostic}
In this section, we present a more refined analysis of the agnostic setting. We first argue that agnostic learning with Algorithm~\ref{alg-psq} will not be consistent in general and competing against the best classifier in $\cH$ seems not the right comparator.  The form of the pseudo-labels mandate that $\hat{h}^S$ is aiming to fit a labeling function that is inherently a voting classifier. The literature on ensemble methods has taught us that the voting classifier is qualitatively different from the individual voters. In particular, the error rate of majority voting classifier can be significantly better, about the same, or significantly worse than the average error rate of the individual voters. We illustrate this with two examples. 

\begin{example}[Voting fails]\label{exp:voting_fail}
	Consider a uniform distribution on $\cX = \{x_1,x_2,x_3,x_4\}$ and that the corresponding label $\P(y=1) = 1$.  Let the hypothesis class be $\cH = \{h_1,h_2,h_3\}$ whose evaluation on $\cX$ are given in Figure~\ref{fig:voting_fail}. Check that the classification error of all three classifiers is $0.5$. Also note that the empirical risk minimizer $\hat{h}$ will be a uniform distribution over $h_1,h_2,h_3$. The majority voting classifiers, learned with iid data sets, will perform significantly worse and converge to a classification error of $0.75$ exponentially quickly as the number of classifiers $K$ goes to $\infty$.
	\begin{figure}[!htbp]
		\centering
			\begin{tabular}{c|cccc|c}
			&$x_1$&$x_2$&$x_3$&$x_4$ & Error\\
			$y$ & 1& 1& 1& 1 &0\\ 
			\hline
			$h_1$&1 &1& 0& 0 &0.5\\
			$h_2$&1&0&1&0 & 0.5\\
			$h_3$&1&0&0&1 & 0.5\\\hline
			$\hat{h}^{\mathtt{agg}}$&1&0&0&0 & 0.75
		\end{tabular}
	\caption{An example where majority voting classifier is significantly worse than the best classifier in $\cH$.}\label{fig:voting_fail}
	\end{figure}
\end{example}
This example illustrates that the PATE framework cannot consistently learn a VC-class in the agnostic setting in general. 
On a positive note, there are also cases where the majority voting classifier boosts the classification accuracy significantly, such as the following example.
\begin{example}[Voting wins]\label{exp:voting_win}
If $\P[\hat{h}(x) \neq y | x] \leq 0.5 - \xi$, where $\xi$ is a small constant, for all $x\in\cX$, then by Hoeffding's inequality,
\begin{align*}
\P [\hat{h}^{\mathtt{agg}}(x)\neq y | x]  = \P \Big[\sum_{k=1}^K \1(\hat{h}_k(x) \neq y) \geq \frac{k}{2} \Big| x \Big]  \leq e^{-2K\xi^2}.
\end{align*}
Thus the error goes to $0$ exponentially as $K\rightarrow \infty$.
\end{example}
These cases call for an alternative distribution-dependent theory of learning that characterizes the performance of Algorithm~\ref{alg-psq} more accurately. 

Next, we propose two changes to the learning paradigms. First, we need to go beyond $\cH$ and compare with the following infinite ensemble classifier
\begin{align*}
h^\mathtt{agg}_{\infty}(x) := \1 \bigg(\E \Big[\frac{1}{K}\sum_{k=1}^k\hat{h}_k(x)\Big|x\Big] \geq \frac{1}{2} \bigg)=  \1\Big( \E[\hat{h}_1(x)|x] \geq \frac{1}{2}\Big).
\end{align*}
The classifier outputs the majority voting result of infinitely many independent teachers, each trained on $n/K$ i.i.d. data points. As discussed earlier, this classifier can be better or worse than a single classifier $\hat{h}_1$ that takes $n/K$ data points, $\hat{h}$ that trains on all $n$ data points or $h^*$ that is the optimal classifier in $\cH$. Note that this classifier also changes as $n/K$ gets larger.

Considering different centers for teacher classifiers to agree on is one of the key ideas of this paper. Figure \ref{fig-centers} shows three kinds of centers for teachers $\hat{h}_1, \hat{h}_2, ..., \hat{h}_9$ to agree on. In \citet{bassily2018model}, the center is the true label $y$ in the realizable setting. In Section \ref{sec-main_results_tnc} under TNC, we analyze the performance of PATE-PSQ, where the center is the best hypothesis $h^*$. Now we are interested in the new center $h^\mathtt{agg}_\infty$ for teachers to agree on.
\begin{figure}[!htbp]
	\centering
	\begin{minipage}{0.32\linewidth}\centering
		\includegraphics[width=0.5\linewidth]{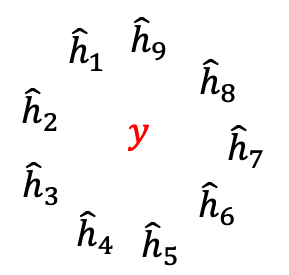}\\
		\small{(a) True label $y$ is the center for in realizable setting.}
	\end{minipage}
	\begin{minipage}{0.32\linewidth}\centering
		\includegraphics[width=0.5\linewidth]{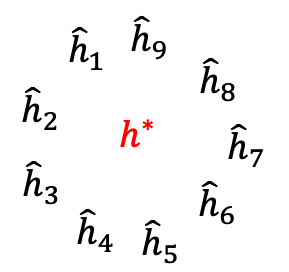}\\
		\small{(b) Best hypothesis $h^*$ is the center under TNC.}
	\end{minipage}
	\begin{minipage}{0.32\linewidth}\centering
		\includegraphics[width=0.5\linewidth]{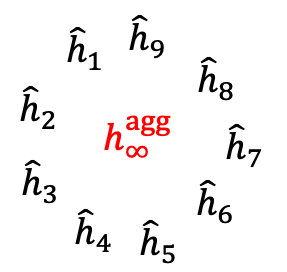}\\
		\small{(c) $h^\mathtt{agg}_\infty$ is our new construction for agnostic setting.}
	\end{minipage}
	\caption{Centers for teachers $\hat{h}_1, \hat{h}_2, ..., \hat{h}_9$ to agree on.\label{fig-centers}}
\end{figure}


Second, we define the \emph{expected margin} for a classifier $\hat{h}_1$ trained on $n$ i.i.d. samples to be
\begin{align}\label{eq:exp_margin}
\Delta_n (x) :=  \Big|\E[ \hat{h}_1(x) | x] - \frac{1}{2} \Big|.
\end{align} 
This quantity captures for a fixed $x\in\cX$, how likely the teachers will agree. For a fixed learning problem $\cH,\cD$ and the number of i.i.d. data points $\hat{h}_1$ is trained upon, the expected margin is a function of $x$ alone. 
The larger $\Delta_{n/K}(x)$ is, the more likely that the ensemble of $K$ teachers agree on a prediction in $\cY$ with high-confidence.
Note that unlike in Example~\ref{exp:voting_win}, we do not require the teachers to agree on $y$. Instead, it measures the extent to which they agree on $h^\mathtt{agg}_{\infty}(x)$, which could be any label.

When the expected margin is bounded away from $0$ for $x$, then the voting classifier outputs $h^\mathtt{agg}_{\infty}(x)$ with probability converging exponentially to $1$ as $K$ gets larger. On the technical level, this definition allows us to \emph{decouple} the stability analysis and accuracy of PATE as the latter relies on how good $h^\mathtt{agg}_{\infty}$ is.

\begin{definition}[Approximate high margin]\label{def-app_high_margin}
	We say that a learning problem with $n$ i.i.d. samples satisfy $(\nu,\xi)$-approximate high-margin condition if 
	$
\P_{x\sim\cD}[ \Delta_n(x) > \xi ] \leq \nu.
	$
\end{definition}
This definition says that with high probability, except for $O(\nu m)$ data points, all other data points in the public dataset have an expected margin of at least $\xi$. Observe that every learning problem has $\xi$ that increases from $0$ to $0.5$ as we vary $\nu$ from $0$ to $1$. The realizability assumption and the Tsybakov noise condition that we considered up to this point imply upper bounds of $\nu$ at fixed $\xi$ (see more details in Remark~\ref{rmk:high-margin-of-tnc}). In Appendix \ref{sec-adult-dataset}, we demonstrate that for the problem of linear classification on Aadult dataset --- clearly an agnostic learning problem --- $(\nu,\xi)$-approximate high margin condition is satisfied with a small $\nu$ and large $\xi$.

The following proposition shows that when a problem is approximate high-margin, there are choices $T$ and $K$ under which the SVT-based PATE provably labels almost all data points with the output of $h^\mathtt{agg}_{\infty}$.
\begin{proposition}\label{prop:SVT_works}
	Assume the learning problem with $n/K$ i.i.d. data points satisfies $(\nu,\xi)$-approximate high-margin condition.  Let Algorithm~\ref{alg-priv-agg} be instantiated with parameters
	$$T \geq  \nu m +  \sqrt{2\nu m \log \Big(\frac{3}{\gamma}\Big)} + \frac{2}{3}\log \Big(\frac{3}{\gamma}\Big),$$
	$$K \geq \max \Big\{ \frac{2\log(3m/\gamma)}{\xi^2}, \frac{3\lambda \big(\log (4m /\delta) +\log(3m/\gamma)\big)}{\xi}\Big\},\footnote{$\lambda = (\sqrt{2T(\epsilon + \log(2/\delta))} + \sqrt{2T\log(2/\delta)})/\epsilon$ according to Algorithm~\ref{alg-priv-agg}.}$$
	then with high probability (over the randomness of the $n$ i.i.d. samples of the private dataset, $m$ i.i.d. samples of the public dataset, and that of the randomized algorithm), Algorithm~\ref{alg-priv-agg} finishes all $m$ rounds and the output is the same as $h^\mathtt{agg}_{\infty}(x_i)$ for all but $T$ of the $i\in[m]$.
\end{proposition}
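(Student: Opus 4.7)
My plan is to decompose the $m$ queries into \emph{good} queries (those $x_i$ with $\Delta_{n/K}(x_i)\geq \xi$) and \emph{bad} queries (the rest), argue via concentration that (i) there are at most $T$ bad queries, and (ii) on every good query the SVT releases $h^\mathtt{agg}_\infty(x_i)$.

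First, for the count of bad queries, the high-margin condition gives $\P_{x\sim\cD}[\Delta_{n/K}(x)<\xi]\leq \nu$, so the number of bad queries among the i.i.d. public points is a sum of Bernoullis with mean at most $\nu m$. Applying Bernstein's inequality yields that with probability at least $1-\gamma/3$ the number of bad queries is bounded by $\nu m+\sqrt{2\nu m\log(3/\gamma)}+\frac{2}{3}\log(3/\gamma)$, which is exactly the stated lower bound on $T$. This step ensures the $c\geq T$ abort in Algorithm~\ref{alg-priv-agg} is not triggered (because every good query will be shown to return a non-$\perp$ answer), so the algorithm finishes all $m$ rounds and the number of outputs not equal to $h^\mathtt{agg}_\infty(x_i)$ is at most $T$.

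Next, for each good query $x_i$, I would control the realized margin $\widehat{\Delta}(x_i)=2|\sum_k\hat{h}_k(x_i)-K/2|$ in expectation (over the random splits / training data), we have $|\E[\sum_k\hat{h}_k(x_i)\mid x_i]-K/2|=K\Delta_{n/K}(x_i)\geq K\xi$, and since the $\hat{h}_k$ are i.i.d.\ given $x_i$, Hoeffding's inequality together with a union bound over the $m$ queries shows that with probability at least $1-\gamma/3$, $\widehat{\Delta}(x_i)\geq 2K\xi-2\sqrt{(K/2)\log(6m/\gamma)}\geq K\xi$ for every good $x_i$, using the hypothesis $K\geq 2\log(3m/\gamma)/\xi^2$ (with a slightly looser constant absorbed). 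In particular the sign of $\sum_k\hat{h}_k(x_i)-K/2$ matches that of $\E[\hat{h}_1(x_i)\mid x_i]-1/2$, so whenever the SVT decides to release a label, that label equals $h^\mathtt{agg}_\infty(x_i)$.

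The remaining step is to verify that the SVT actually releases (rather than emits $\perp$) on every good query. The noise terms $\hat{w}-w=\mathtt{Lap}(\lambda)$ and the per-query $\mathtt{Lap}(2\lambda)$ are, by a union bound over the at most $m$ Laplace samples used, simultaneously bounded in absolute value by $O(\lambda\log(m/\gamma))$ with probability $\geq 1-\gamma/3$. On a good query the noiseless gap is $\mathrm{dist}_i-w\geq K\xi/2-1-3\lambda\log(4m/\delta)$, so we need $K\xi\gtrsim \lambda(\log(m/\delta)+\log(m/\gamma))$, which is precisely the second condition on $K$. A union bound over the three $\gamma/3$ failure events then delivers all three high-probability statements simultaneously.

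The main technical subtlety will be keeping the three sources of randomness cleanly separated: the random training sets of the teachers (which determine the $\hat{h}_k$ and thus whether concentration of $\sum_k\hat{h}_k(x_i)$ around $K\E[\hat{h}_1(x_i)\mid x_i]$ holds), the randomness of the public stream $x_1,\ldots,x_m$ (which governs the count of bad queries), and the internal Laplace noise of Algorithm~\ref{alg-priv-agg}. Conditioning in the right order and applying Hoeffding/Bernstein plus a union bound at each stage is routine but must be handled carefully so that the ``good $x_i$ implies released label equals $h^\mathtt{agg}_\infty(x_i)$'' statement is valid for every $i$ simultaneously rather than merely in expectation.
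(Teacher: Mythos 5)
Your proposal is correct and follows essentially the same route as the paper's proof: Bernstein's inequality to bound the count of low-margin queries by $T$, Hoeffding plus a union bound to lower-bound the realized margin $\widehat{\Delta}(x_i)$ on the high-margin queries (which is where the first condition on $K$ enters), Laplace tail bounds on the SVT noise to guarantee release (the second condition on $K$), and a union bound over the three $\gamma/3$ failure events. Your explicit remark that the concentration of $\sum_k \hat{h}_k(x_i)$ around its mean also pins down the \emph{sign}, so the released label equals $h^\mathtt{agg}_\infty(x_i)$, is a small but welcome clarification that the paper leaves implicit.
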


This proposition provides the utility guarantee to Algorithm~\ref{alg-priv-agg} and generalizes Lemma~\ref{lem-t} from fixing $\xi = 1/6$ into allowing much smaller $\xi$ at a cost of increasing $\nu$. 

Next, we state the learning bounds under the approximate-high margin condition.
\begin{theorem}\label{thm-agree-infty}
	Assume the learning problem with $n/K$ i.i.d. data points satisfies $(\nu,\xi)$-approximate high-margin condition and let $K,T$ be chosen according to Proposition~\ref{prop:SVT_works}, furthermore assume that the privacy parameter of choice $\epsilon \leq \log(2/\delta)$, then
	the output classifier $\hat{h}^S$ of Algorithm \ref{alg-psq} in the agnostic setting satisfies that with probability $\geq 1-2\gamma$,
	\begin{align*}
	\mathtt{Err}(\hat{h}^S) - \mathtt{Err}(h^{\mathtt{agg}}_{\infty}) &\leq \min_{h\in\cH}\Dis(h,h^{\mathtt{agg}}_{\infty}) + \frac{2T}{m}+ \tilde{O}\Big(\sqrt{\frac{d}{m}}\Big)\\
	&\leq \min_{h\in\cH}\Dis(h,h^{\mathtt{agg}}_{\infty}) + 2\nu + \tilde{O}\Big(\sqrt{\frac{d}{m}}\Big) .
	\end{align*}
\end{theorem}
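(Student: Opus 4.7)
The plan is to follow the same template as the proof of Theorem~\ref{thm-ug-psq}, with two key substitutions: (i) we replace the comparator $h^*$ by $h^{\mathtt{agg}}_\infty$, which plays the role of the new ``center'' that the teachers collectively agree on (even though $h^{\mathtt{agg}}_\infty$ need not belong to $\cH$); and (ii) since we no longer have the TNC low-noise structure, we switch from the uniform Bernstein bound to a standard VC/Hoeffding-type uniform convergence bound, which costs $\tilde{O}(\sqrt{d/m})$ rather than $\tilde{O}(d/m)$ on the public-data side. Note that $h^{\mathtt{agg}}_\infty$ is a deterministic function of the underlying distribution and of $n/K$, so it is a legitimate fixed comparator for uniform convergence.

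The chain of inequalities proceeds as follows. By the triangle inequality for the $0$-$1$ loss, $\mathtt{Err}(\hat{h}^S)-\mathtt{Err}(h^{\mathtt{agg}}_\infty) \leq \Dis(\hat{h}^S, h^{\mathtt{agg}}_\infty)$. Uniform convergence (Lemma~\ref{lem-dis-emp} in Hoeffding form, taking $z = h^{\mathtt{agg}}_\infty(x)$) bounds $\Dis(\hat{h}^S, h^{\mathtt{agg}}_\infty) \leq \widehat{\Dis}(\hat{h}^S, h^{\mathtt{agg}}_\infty) + \tilde{O}(\sqrt{d/m})$. Splitting the empirical disagreement through $\hat{h}^{\mathtt{priv}}$, using that $\hat{h}^S = \argmin_{h\in\cH}\widehat{\Dis}(h,\hat{h}^{\mathtt{priv}})$, and defining $\tilde{h} := \argmin_{h\in\cH}\widehat{\Dis}(h, h^{\mathtt{agg}}_\infty)$, we obtain
\begin{align*}
\widehat{\Dis}(\hat{h}^S, h^{\mathtt{agg}}_\infty) &\leq \widehat{\Dis}(\hat{h}^S, \hat{h}^{\mathtt{priv}}) + \widehat{\Dis}(\hat{h}^{\mathtt{priv}}, h^{\mathtt{agg}}_\infty) \\
&\leq \widehat{\Dis}(\tilde{h}, \hat{h}^{\mathtt{priv}}) + \widehat{\Dis}(\hat{h}^{\mathtt{priv}}, h^{\mathtt{agg}}_\infty) \\
&\leq \widehat{\Dis}(\tilde{h}, h^{\mathtt{agg}}_\infty) + 2\widehat{\Dis}(\hat{h}^{\mathtt{priv}}, h^{\mathtt{agg}}_\infty).
\end{align*}
A second application of uniform convergence, this time against the population minimizer of $\Dis(\cdot, h^{\mathtt{agg}}_\infty)$ over $\cH$, yields $\widehat{\Dis}(\tilde{h}, h^{\mathtt{agg}}_\infty) \leq \min_{h\in\cH}\Dis(h, h^{\mathtt{agg}}_\infty) + \tilde{O}(\sqrt{d/m})$.

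To close out, Proposition~\ref{prop:SVT_works} is invoked: under the $(\nu,\xi)$-approximate high-margin condition and with $K,T$ chosen as prescribed, Algorithm~\ref{alg-priv-agg} completes all $m$ queries and $\hat{h}^{\mathtt{priv}}(x_i) = h^{\mathtt{agg}}_\infty(x_i)$ for all but at most $T$ indices $i \in [m]$, so $\widehat{\Dis}(\hat{h}^{\mathtt{priv}}, h^{\mathtt{agg}}_\infty) \leq T/m$. Combining gives the first stated inequality; the second follows by plugging in the explicit choice $T = O\bigl(\nu m + \sqrt{\nu m \log(1/\gamma)} + \log(1/\gamma)\bigr)$, where the lower-order $\sqrt{\nu/m}$ and $(\log 1/\gamma)/m$ terms get absorbed into $\tilde{O}(\sqrt{d/m})$. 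The main obstacle is the careful bookkeeping of randomness: $h^{\mathtt{agg}}_\infty$ is deterministic given the distribution and $n/K$, but the realized teachers $\hat{h}_1,\dots,\hat{h}_K$ are random draws around it and the SVT mechanism adds further noise, so the high-probability event of Proposition~\ref{prop:SVT_works} must be union-bounded with the two uniform-convergence events. A second subtlety is that, unlike in Theorem~\ref{thm-ug-psq} where $h^*\in\cH$ forced $\widehat{\Dis}(\tilde{h},h^*) = 0$, here $h^{\mathtt{agg}}_\infty$ generally lies outside $\cH$, producing the unavoidable approximation-error term $\min_{h\in\cH}\Dis(h,h^{\mathtt{agg}}_\infty)$ in the final bound.
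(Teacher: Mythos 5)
Your proposal is correct and follows essentially the same route as the paper's proof: the identical triangle-inequality decomposition through $\hat{h}^{\mathtt{priv}}$, the same use of $\hat{h}^S = \argmin_{h\in\cH}\widehat{\Dis}(h,\hat{h}^{\mathtt{priv}})$ and $\tilde{h} = \argmin_{h\in\cH}\widehat{\Dis}(h,h^{\mathtt{agg}}_{\infty})$, and the same invocation of Proposition~\ref{prop:SVT_works} to get $\widehat{\Dis}(\hat{h}^{\mathtt{priv}},h^{\mathtt{agg}}_{\infty})\leq T/m$. If anything you are slightly more careful than the paper in explicitly converting $\widehat{\Dis}(\tilde{h},h^{\mathtt{agg}}_{\infty})$ to $\min_{h\in\cH}\Dis(h,h^{\mathtt{agg}}_{\infty})+\tilde{O}(\sqrt{d/m})$ via a second uniform-convergence step and in flagging the union bound over the randomness sources.
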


The voting classifier $\hat{h}^\mathtt{agg}$ is usually not in the original hypothesis class $\cH$, so we can take a wider view of the hypothesis class and define the voting hypothesis space $\mathrm{Vote}(\cH)$ where the learning problem becomes realizable. Note if the VC dimension of $\cH$ is $d$, then the VC dimension of $\mathrm{Vote}_K(\cH) \leq Kd$.  In practice, this suggests using ensemble methods such as AdaBoost for $K$ iterations.
\begin{theorem}\label{thm-vote-space}
	Under the same assumption of Theorem~\ref{thm-agree-infty}, suppose we train an ensemble classifier within the voting hypothesis space $\mathrm{Vote}_K(\cH)$ in the student domain, then the output classifier $\hat{h}^S$ of Algorithm \ref{alg-psq} in the agnostic setting satisfies that with probability $\geq 1-2\gamma$,
\begin{align*}
	\mathtt{Err}(\hat{h}^S) - \mathtt{Err}(h^{\mathtt{agg}}_{\infty}) \leq \frac{4T}{m} + \frac{5(Kd + \log(4/\gamma))}{m}= \tilde{O}\Big( \nu + \frac{\log(4/\gamma)}{m} + \frac{d \sqrt{\nu}}{\xi \sqrt{m}}\Big).
\end{align*}
%
\end{theorem}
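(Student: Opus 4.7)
The plan is to follow the same template as the proof of Theorem~\ref{thm-agree-infty}, but exploit the observation that the pseudo-labeled learning problem becomes (approximately) realizable once we enlarge the hypothesis class to $\mathrm{Vote}_K(\cH)$, in which the teacher aggregate $\hat{h}^{\mathtt{agg}}$ itself lives. The payoff is that uniform convergence over $\mathrm{Vote}_K(\cH)$ --- whose VC dimension is at most $Kd$ --- can be invoked in Bernstein (rather than Hoeffding) form, yielding the fast $Kd/m$ rate in place of $\sqrt{Kd/m}$, while the empirical disagreement $\widehat{\Dis}(\hat{h}^S, h^{\mathtt{agg}}_\infty)$ is already near zero because $\hat{h}^{\mathtt{agg}}$ fits the pseudo-labels almost perfectly.

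First I would start from $\mathtt{Err}(\hat{h}^S) - \mathtt{Err}(h^{\mathtt{agg}}_\infty) \leq \Dis(\hat{h}^S, h^{\mathtt{agg}}_\infty)$ and apply the uniform Bernstein bound (first part of Lemma~\ref{lem-dis-emp}) with the fixed reference labeling $z = h^{\mathtt{agg}}_\infty$, over the class $\mathrm{Vote}_K(\cH)$ of VC dimension $\leq Kd$. Combining with the elementary inequality $a + 2\sqrt{ab} + b \leq 2a + 2b$ --- exactly as in the derivation leading to Eq.~\eqref{eq:tnc_thm_deriv1} in the proof of Theorem~\ref{thm-ug-psq} --- yields
\[
\Dis(\hat{h}^S, h^{\mathtt{agg}}_\infty) \;\leq\; 2\,\widehat{\Dis}(\hat{h}^S, h^{\mathtt{agg}}_\infty) + \frac{5(Kd + \log(4/\gamma))}{m}.
\]

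Next I would bound the empirical disagreement by $2T/m$. Since $\hat{h}^{\mathtt{agg}} \in \mathrm{Vote}_K(\cH)$ is a feasible competitor to the ERM $\hat{h}^S$ on the pseudo-labeled data, we have $\widehat{\Dis}(\hat{h}^S, \hat{h}^{\mathtt{priv}}) \leq \widehat{\Dis}(\hat{h}^{\mathtt{agg}}, \hat{h}^{\mathtt{priv}})$, and then
\[
\widehat{\Dis}(\hat{h}^S, h^{\mathtt{agg}}_\infty) \;\leq\; \widehat{\Dis}(\hat{h}^{\mathtt{agg}}, \hat{h}^{\mathtt{priv}}) + \widehat{\Dis}(\hat{h}^{\mathtt{priv}}, h^{\mathtt{agg}}_\infty).
\]
The second term is at most $T/m$ by Proposition~\ref{prop:SVT_works}. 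For the first term I would rerun the core argument of that proposition: the $(\nu,\xi)$-approximate high-margin condition, together with a Bernstein tail bound, guarantees that the number of low-margin $x_i$'s is at most $T$; on the remaining high-margin points, a Hoeffding union bound with $K \geq 2\log(3m/\gamma)/\xi^2$ forces $\hat{h}^{\mathtt{agg}}(x_i) = h^{\mathtt{agg}}_\infty(x_i)$, and SVT's stability condition (satisfied by the choice of $K$) simultaneously forces $\hat{h}^{\mathtt{priv}}(x_i) = \hat{h}^{\mathtt{agg}}(x_i)$. Hence all three classifiers agree on $m-T$ points, so $\widehat{\Dis}(\hat{h}^{\mathtt{agg}}, \hat{h}^{\mathtt{priv}}) \leq T/m$.

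Assembling the pieces gives the explicit bound $4T/m + 5(Kd + \log(4/\gamma))/m$. The $\tilde{O}$ form then follows by plugging in $T = \tilde{O}(\nu m + \log(1/\gamma))$ and $K = \tilde{O}(\xi^{-2} + \sqrt{\nu m}\,\xi^{-1})$ from Proposition~\ref{prop:SVT_works} (treating the privacy parameter as a constant under the assumption $\epsilon \leq \log(2/\delta)$), which contributes a $\tilde{O}(d\sqrt{\nu}/(\xi\sqrt{m}))$ dominant term through $Kd/m$. The main obstacle is the third step --- certifying $\widehat{\Dis}(\hat{h}^{\mathtt{agg}}, \hat{h}^{\mathtt{priv}}) \leq T/m$ --- because Proposition~\ref{prop:SVT_works} as stated only compares $\hat{h}^{\mathtt{priv}}$ to $h^{\mathtt{agg}}_\infty$; the proof therefore needs to re-open that proposition's internal concentration chain to control $\hat{h}^{\mathtt{agg}}$ as well. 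Once this is in place, the rest of the argument is a straightforward reuse of the Theorem~\ref{thm-ug-psq} template with $h^*$ replaced by $h^{\mathtt{agg}}_\infty$ and $d$ replaced by $Kd$.
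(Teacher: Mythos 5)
Your proposal is correct and follows essentially the same template as the paper's proof: the same uniform Bernstein bound over $\mathrm{Vote}_K(\cH)$ (VC dimension at most $Kd$) yielding $2\widehat{\Dis}(\hat{h}^S,h^{\mathtt{agg}}_\infty)+\tfrac{5(Kd+\log(4/\gamma))}{m}$, the same triangle-inequality decomposition through $\hat{h}^{\mathtt{priv}}$, and the same invocation of Proposition~\ref{prop:SVT_works} to get $\widehat{\Dis}(\hat{h}^{\mathtt{priv}},h^{\mathtt{agg}}_\infty)\leq T/m$, arriving at the identical $4T/m$ constant. The one genuine deviation is the comparator in the ERM step: the paper uses $\tilde{h}=\argmin_{h\in\mathrm{Vote}_K(\cH)}\widehat{\Dis}(h,h^{\mathtt{agg}}_\infty)$ and kills the term $\widehat{\Dis}(\tilde{h},h^{\mathtt{agg}}_\infty)$ by asserting it equals $0$ because $h^{\mathtt{agg}}_\infty\in\mathrm{Vote}_K(\cH)$, whereas you use $\hat{h}^{\mathtt{agg}}$ as the competitor and bound $\widehat{\Dis}(\hat{h}^{\mathtt{agg}},\hat{h}^{\mathtt{priv}})\leq T/m$ directly. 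Your route is arguably the more robust of the two: $\hat{h}^{\mathtt{agg}}$ is manifestly a $K$-fold vote of members of $\cH$, while the paper's claim that the \emph{infinite}-ensemble classifier $h^{\mathtt{agg}}_\infty$ lies in $\mathrm{Vote}_K(\cH)$ is not obvious. Moreover, the step you flag as the ``main obstacle'' is actually immediate and needs no re-opening of Proposition~\ref{prop:SVT_works}'s concentration chain: on every query that Algorithm~\ref{alg-priv-agg} answers it outputs $\1(\sum_{k}\hat{h}_k(x_j)\geq K/2)=\hat{h}^{\mathtt{agg}}(x_j)$ verbatim, and the number of $\perp$ outputs is capped at $T$ by the cutoff counter, so $\widehat{\Dis}(\hat{h}^{\mathtt{agg}},\hat{h}^{\mathtt{priv}})\leq T/m$ holds deterministically once the algorithm finishes all $m$ rounds (which is what the proposition guarantees with high probability). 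Both routes produce the same final bound.
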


\begin{remark}\label{rmk:high-margin-of-tnc}
Whether the bounds in Theorem~\ref{thm-agree-infty}~and~\ref{thm-vote-space} will vanish as $m,n\rightarrow\infty$ depends strongly on how parameter $\nu$ and $\xi$ change as $n/K$ gets larger. Intuitively, if the learner converges to a single classifier $h^*$, as in the realizable case or under TNC, then we can show that the learning problem satisfy $(\nu,\xi)$-approximate high-margin condition with $\xi = 1/6$ and $\nu \leq \tilde{O}((dK/n)^{\frac{\tau}{2-\tau}})$. Substituting this quantities into Theorem~\ref{thm-agree-infty} and using the fact that $\nu$ also bounds the disagreement between $h^*$ and $h^\mathtt{agg}_\infty$ allows us obtain a bound that vanishes as $n$ gets larger. More generally, in the agnostic case, it is reasonable to assume that the  ``teachers'' will get \emph{more confident} in their individual prediction for most data points as $n/K \rightarrow \infty$. 
We argue this is a more modest requirement than requiring the ``teachers'' to get \emph{more accurate}.
\end{remark}
%

\subsection{PATE with Active Student Queries}  \label{sec-main_results_active}
In previous subsections, we have proved stronger learning bounds for PATE framework under TNC and in agnostic setting. However, all these results are based on the variants of PATE that aim at \emph{passively} releasing \emph{almost all} student queries. In this section we address the following question:

\begin{center}
Can we do even better if we cherry-pick queries to label?
\end{center}
The hope is that this allows us to spend privacy budget only on those queries that add new information for the interest of training a classifier, hence resulting in a more favorable privacy-utility tradeoff.  Without privacy constraints, this problem is known as active learning and it is often possible to save exponentially in the number of labels needed comparing to the passive learning model.

In Algorithm \ref{alg-asq}, we propose a new algorithm called PATE with Active Student Queries (PATE-ASQ) which uses the disagreement-based active learning algorithm (Algorithm \ref{alg-active_learning}) as the subroutine. Then we provide its utility guarantee.
\begin{algorithm}[!htbp]
	\caption{PATE-ASQ}
	\label{alg-asq}
	{\bf Input:}
	Labeled private teacher dataset $D^T$, unlabeled public student dataset $D^S$, privacy parameters $\epsilon, \delta > 0$, number of splits $K$, maximum number of queries $\ell$, failure probability $\gamma$.
	\begin{algorithmic}[1]
	    \STATE Randomly and evenly split the teacher dataset $D^T$ into $K$ parts $D^T_k \subseteq D^T$ where $k \in [K]$
	    \STATE Train $K$ classifiers $\hat{h}_k \in \cH$, one from each part $D^T_k$.
        \STATE Declare ``Labeling Service'' $\leftarrow$ Algorithm~\ref{alg-priv-agg-pate} with $\hat{h}_1,...,\hat{h}_K$,  $\ell$, $\epsilon,\delta$, with an unspecified ``nature''.
        \STATE Initiate an active learning oracle  (e.g., Algorithm~\ref{alg-active_learning}) with an iterator over $D^S$ being the ``data stream'', hypothesis class $\cH$, failure probability $\gamma$. Set the ``labeling service'' to be Algorithm~\ref{alg-priv-agg-pate} with parameter $\hat{h}_1,...,\hat{h}_K$,  $\ell$, $\epsilon,\delta$, and set the ``nature'' to be the ``request for label'' calls in the active learning oracle. 
        \STATE Set $\hat{h}^S$ to be the ``current output'' from active learning oracle.
	\end{algorithmic}
	{\bf Output:} Return $\hat{h}^S$.
\end{algorithm}



\begin{theorem}[Utility guarantee of Algorithm \ref{alg-asq}]\label{thm-ug-asq}
With probability at least $1-\gamma$, there exists universal constants $C_1,C_2$ such that for all
\begin{align*}
\alpha \geq C_1\max \bigg\{\eta^\frac{2}{2-\tau}\Big(\frac{dK\log(n/d)+ \log(2K/\gamma)}{n}\Big)^\frac{\tau}{2-\tau}, \frac{d\log((m+n)/d) + \log(2/\gamma)}{m}\bigg\},
\end{align*}
the output $\hat{h}^S$ of  Algorithm~\ref{alg-asq} with parameter $\ell,K$ satisfying
\begin{align*}
\ell = C_2\theta(\alpha)\Big( 1 + \log \big(\frac{1}{\alpha}\big) \Big) \bigg( d\log(\theta(\alpha)) + \log \Big(\frac{\log(1/\alpha)}{\gamma/2} \Big)\bigg)
\end{align*}
$$K = \frac{6\sqrt{\log(2n)}(\sqrt{\ell \log(1/\delta)} + \sqrt{\ell \log(1/\delta) + \epsilon \ell})}{\epsilon}$$
obeys that 
$$
\mathtt{Err}(\hat{h}^S) - \mathtt{Err}(h^*) \leq \alpha.
$$
Specifically, when we choose
\begin{align*}
\alpha= C_1\max \bigg\{\eta^\frac{2}{2-\tau}\Big(\frac{dK\log(n/d)+ \log(2K/\gamma)}{n}\Big)^\frac{\tau}{2-\tau}, \frac{d\log((m+n)/d) + \log(2/\gamma)}{m}\bigg\},  
\end{align*}
and also $\epsilon\leq \log(1/\delta)$, then it follows that
\begin{align*}
\mathtt{Err}(\hat{h}^S) - \mathtt{Err}(h^*) = \tilde{O}\bigg(\max\Big\{\big(\frac{d^{1.5}\sqrt{\theta(\alpha)\log(1/\delta)}}{n\epsilon}\big)^{\frac{\tau}{2-\tau}}, \frac{d}{m}\Big\}\bigg),
\end{align*}
where $\tilde{O}$ hides logarithmic factors in $m,n,1/\gamma$.
\end{theorem}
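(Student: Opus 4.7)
The plan is to reduce the analysis of PATE-ASQ to the standard label-complexity guarantee for disagreement-based active learning in a (near) realizable learning problem whose target is $h^*$. There are three pieces to control: the statistical error of each teacher $\hat{h}_k$, the error introduced by the Gaussian noise in the labeling oracle (Algorithm~\ref{alg-priv-agg-pate}), and the label complexity plus excess-risk guarantee of the active learner (Algorithm~\ref{alg-active_learning}).

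First I would apply Lemma~\ref{lem-tnc} with a union bound over the $K$ teachers so that, on a $1-\gamma/4$ event, every $\hat{h}_k$ satisfies $\Dis(\hat{h}_k,h^*)\leq C\bigl((dK\log(n/d)+\log(K/\gamma))/n\bigr)^{\tau/(2-\tau)}$. A pigeonhole argument in the spirit of Lemma~\ref{lem:pigeon_hole}, with $y$ replaced by $h^*$, then shows that except on an $O(\nu)$-fraction of $\cD_\cX$ (where $\nu$ is a constant multiple of the above bound), at least $5K/6$ of the teachers output $h^*(x)$, so $\widehat{\Delta}(x)\geq 2K/3$ and the noise-free aggregate $\hat{h}^{\mathtt{agg}}(x)$ equals $h^*(x)$.

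Next I would analyze the Gaussian mechanism inside the labeling oracle. The calibration in Algorithm~\ref{alg-priv-agg-pate} gives $\sigma\leq\bigl(\sqrt{\ell\log(1/\delta)}+\sqrt{\ell\log(1/\delta)+\epsilon\ell}\bigr)/\epsilon$, which combined with the specified $K=6\sqrt{\log(2n)}\,\sigma$ yields $K/6\geq\sigma\sqrt{\log(2n)}\geq\sigma\sqrt{2\log(4\ell/\gamma)}$ whenever $\ell=\mathrm{poly}(n)$. A Gaussian tail bound together with a union bound over the $\ell$ adaptively chosen queries then gives, on another $1-\gamma/4$ event, that at every query $x_j$ with $\widehat{\Delta}(x_j)\geq 2K/3$ the released pseudo-label equals $\hat{h}^{\mathtt{agg}}(x_j)=h^*(x_j)$. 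Combined with the previous step, the active learner sees labels that are consistent with $h^*$ on all but an $O(\nu)$-fraction of $\cD_\cX$.

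Finally I would invoke the classical label-complexity theorem for disagreement-based active learning (\citet{hanneke2014theory}, encoded in the threshold $U(j,\gamma_j)$ of Algorithm~\ref{alg-active_learning}). Since the induced learning problem still satisfies TNC with $h^*$ as its minimizer (the $O(\nu)$-fraction of teacher mislabeling and the $d/m$ uniform-convergence floor on the unlabeled pool are both below the target excess risk $\alpha$), the active learner returns $\hat{h}^S$ with $\mathtt{Err}(\hat{h}^S)-\mathtt{Err}(h^*)\leq\alpha$ after $\ell=C_2\theta(\alpha)(1+\log(1/\alpha))\bigl(d\log\theta(\alpha)+\log(\log(1/\alpha)/(\gamma/2))\bigr)$ label requests, provided $\alpha$ dominates both $\tilde{O}(d/m)$ and the teacher-induced floor $\tilde{O}\bigl((dK\log(n/d)/n)^{\tau/(2-\tau)}\bigr)$. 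Setting $\alpha$ equal to the maximum of these two floors, and plugging the chosen $K=\tilde{O}(\sqrt{\ell\log(1/\delta)}/\epsilon)$ (valid when $\epsilon\leq\log(1/\delta)$) into the teacher floor, yields the closed-form rate claimed in the theorem.

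The main obstacle is the circular coupling between $\alpha$, $\ell$, and $K$: the teacher error depends on $K$, $K$ depends on $\ell$ through the Gaussian calibration, and $\ell$ depends on $\alpha$ through the active-learning complexity. One must verify self-consistency, i.e., that the teacher floor evaluated at the prescribed $K$ is still at most $\alpha$, which is precisely what pins down the shape of the final rate. A secondary subtlety is handling the union bound over \emph{adaptive} queries for the Gaussian noise; this is clean because each query's noise is an independent draw and the event we union bound is simply $\{|\mathcal{N}(0,\sigma^2)|<K/6\}$, whose probability is independent of the adaptive querying rule of the active learner.
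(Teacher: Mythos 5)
Your proposal is correct in substance and follows the same overall decomposition as the paper: teachers are close to $h^*$ under TNC (Lemma~\ref{lem-tnc}), the privately released labels are close to $h^*$, the student problem is therefore nearly realizable so the agnostic active learning bound (Lemma~\ref{lem-a2l}) applies, and a triangle inequality finishes. The one place where your route genuinely differs is the middle step. You control the Gaussian mechanism by a per-query union bound over the $\ell$ adaptive queries combined with a population-level pigeonhole showing that outside an $O(\nu)$-measure region the margin is at least $2K/3$; the paper instead treats the output of Algorithm~\ref{alg-priv-agg-pate} as a \emph{randomized labeling function} $\tilde{h}^{\mathtt{priv}}$ that is well-defined and independent for every input (conditionally on the teachers), and bounds $\Dis(\tilde{h}^{\mathtt{priv}},h^*)\leq \frac{3}{K}\sum_k \Dis(\hat{h}_k,h^*)+\frac{1}{n}$ directly by a Gaussian tail bound plus Markov's inequality over $x\sim\cD_\cX$. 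The paper's version buys two things: it never conditions on a global event across all queries, so the labeling oracle remains a bona fide conditional label distribution $P(y|x)$ and Lemma~\ref{lem-a2l} applies without any bookkeeping about how conditioning perturbs label independence (your approach is fixable but needs you to argue the active-learning guarantee for the deterministic-label problem with target $\hat{h}^{\mathtt{agg}}$ and charge the noise-event failure separately); and it needs only a per-evaluation tail at level $\beta=1/n$ rather than $\gamma/(4\ell)$. On that last point, your inequality $\sqrt{\log(2n)}\geq\sqrt{2\log(4\ell/\gamma)}$ is false as written for typical $\ell,\gamma$ (the union bound actually requires $K\gtrsim\sigma\sqrt{\log(\ell/\gamma)}$, which the prescribed $K$ satisfies only up to constants when $\ell=\mathrm{poly}(n)$ and $\gamma\geq 1/\mathrm{poly}(n)$); this is a constant-factor slip, not a structural flaw. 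Also, the induced student problem is near-realizable rather than "still satisfying TNC" — what you need, and what Lemma~\ref{lem-a2l} consumes, is only that $\min_{h\in\cH}\Dis(\tilde{h}^{\mathtt{priv}},h)\leq\Dis(\tilde{h}^{\mathtt{priv}},h^*)=O(\nu)$. Your identification of the $\alpha$--$\ell$--$K$ circularity as the crux of assembling the final rate matches how the paper closes the argument.
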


\begin{remark}
The bound above resembles the learning bound we obtain using the passive student queries with Algorithm~\ref{alg-priv-agg} as the privacy procedure, except for the additional dependence on the disagreement coefficients. Interestingly, active learning achieves this bound without using the sophisticated (and often not practical) algorithmic components from DP, such as sparse sector technique to save privacy losses. Instead, we can get away with using simple Gaussian mechanism as in Algorithm~\ref{alg-priv-agg-pate}.
\end{remark}


\begin{remark}[Blackbox reduction, revisited]
In contrary to our discussion in Remark~\ref{rmk:reduction1}, notice that we are using Algorithm~\ref{alg-priv-agg-pate} instead of Algorithm~\ref{alg-priv-agg} as the labeling services, which allows us to reduce to any learner as a blackbox. This makes it possible to state formally results even for deep neural networks or other family of methods where obtaining ERM is hard but learning is conjectured to be easy in theory and in practice.
\end{remark}

\begin{remark}[Relationships between SVT and active learning]
	There is an intriguing analogy between the Algorithm~\ref{alg-priv-agg} which simply labels all queries with an advanced DP mechanism and Algorithm~\ref{alg-priv-agg-pate} which uses active learning with a simple DP mechanism. On a high level, both approaches are doing selection. Active learning selects those queries that are near the decision boundary to be informative for learning; the sparse-vector-technique approach \emph{essentially} selects those queries that are not stable to spend privacy budget on.
	
	One curious question is whether the two sets of selected data points are substantially overlapping. If not, then we might be able to combine the two and achieve even better private-utility tradeoff.
\end{remark}

\section{Experiments}
In this section, we present our empirical studies of PATE-PSQ and PATE-ASQ algorithms. Section \ref{sec:exp-set} describes how we set up our experiments, and Section \ref{sec:exp-result} show our results.

\subsection{Experimental Settings}\label{sec:exp-set}

\paragraph{Algorithms compared.} 
We focus on comparing the classification accuracy of the passive and active learning versions of PATE on a holdout test set (``Utility'') when both algorithms are calibrated to the same privacy budget $\epsilon$ (``privacy''). 
To set baselines, we also compare them with non-private versions of them (no noise added to the votes, or $\epsilon = +\infty$), denoted by PATE-PSQ-NP and PATE-ASQ-NP. We remark that the PATE-PSQ we implement is the Gaussian mechanism version \citep{papernot2018scalable}. While we have shown that it has higher \emph{asymptotic} sample complexity comparing to the more advanced version based on SVT \citep{bassily2018model} (Section~\ref{sec-main_results_tnc}), we found that the Gaussian mechanism version performs better for the realistically-sized datasets that we considered.
Linear models are used for all of these algorithms for simplicity.
For active learning, we follow the practical implementation of the disagreement-based active learning by \citet{yan2018active}, which does not require the learner to explicitly maintain the (exponentially large) region of disagreement.


\paragraph{Datasets.} We do our experiments on three binary classification datasets, mushroom, a9a, and real-sim. All of them are obtained from LIBSVM dataset website \footnote{\url{https://www.csie.ntu.edu.tw/~cjlin/libsvmtools/datasets/}}. See Table \ref{tab:dataset} for the statistics of them. If a dataset had been previously split into training and testing parts, we combine them together and record the total number of all data points. For all datasets, $80\%$ of all data points are randomly selected to be considered private and used to train teacher classifiers. $2\%$ of all data points are randomly selected as public student unlabeled data points. The remaining $18\%$ data points are reserved for testing. We repeat these random selection processes for $30$ times. 

\begin{table}[!htbp]
\centering
\caption{Statistics of datasets.}\label{tab:dataset}
\resizebox{\textwidth}{!}{
\begin{tabular}{ccccccc}
\noalign{\smallskip} \hline \noalign{\smallskip}
\textbf{Dataset}  & \textbf{\# All} & \textbf{\# Train} & \textbf{\# Unlabeled} & \textbf{\# Budget} & \textbf{\# Test} & \textbf{\# Dimension} \\ \noalign{\smallskip} \hline \noalign{\smallskip}
mushroom &  $8,124$            &  $6,499$        & $163$    & $49$            & $1,462$            & $112$     \\ \noalign{\smallskip} \hline \noalign{\smallskip}
a9a      &  $48,842$            & $39,073$         &   $977$   & $293$              &  $8,792$           & $123$     \\ \noalign{\smallskip} \hline \noalign{\smallskip}
real-sim      & $72,309$         & $57,847$     & $1,447$  & $434$             & $13,015$       & $20,958$      \\ \noalign{\smallskip} \hline \noalign{\smallskip}
\end{tabular}}
\end{table}

\paragraph{Parameter settings.} Number of teachers $K$ is set on all datasets so that each teacher classifier gets trained with approximately $100$ data points. $30\%$ of student unlabeled data points are set as the total budget of queries for PATE-ASQ and PATE-ASQ-NP. See Table \ref{tab:dataset}. $\epsilon=0.5, 1.0, 2.0$ and $\delta=1/n$ are set as privacy parameters for all datasets, where $n$ is number of private teacher data points. All privacy accounting and calibration are conducted via AutoDP \citep{wang2018subsample}, and the tight analytical calibration and composition of Gaussian mechanisms are due to \citep{balle2018improving}.


\paragraph{Privacy loss vs. privacy budget.} Besides the privacy budget parameter $\epsilon$ that the algorithms receive as an input, it is often the case that the active learning algorithm halts before exhausting the query budget of (30\% of the total number of unlabeled data points). Therefore the privacy loss incurred after running PATE-ASQ might be smaller than the prescribed privacy budget. We refer to the privacy loss $\epsilon_\mathtt{ex ~post}$, since it is determined by the output.

\begin{table}[t]
\centering
\caption{Utility and privacy results of different PATE models. \textbf{\# Queries} shows the number of queries actually answered in experiments. \textbf{Accuracy} is reported as $\mathtt{mean} \pm 1.96 \times \mathtt{standard\_error}/\sqrt{30}$, i.e., $98\%$ asymptotic confidence interval of the expected accuracy based on inverting Wald's test. All ``PATE-'' prefixes of methods are omitted to improve readability.} \label{tab:result}
\label{tab:exp}
\begin{tabular}{cccccc}
\noalign{\smallskip} \hline \noalign{\smallskip}
\textbf{Dataset}                   & \textbf{Method}   & \textbf{\# Queries} & $\epsilon$ & $\epsilon_\texttt{ex~post}$ & \textbf{Accuracy}               \\ \noalign{\smallskip} \hline \noalign{\smallskip}
\multirow{8}{*}{mushroom} & PSQ-NP & $163$        & $+\infty$   & $+\infty$ & $\mathbf{0.9773}\pm0.0006$ \\ 
                          & ASQ-NP & $47.3 \pm 0.2$        & $+\infty$   & $+\infty$ &  $0.9146\pm0.0036$                    \\
                          & PSQ & $163$        & $0.5$   & $0.5$ &   $0.6416\pm0.0036$                   \\
                          & ASQ & $40.1 \pm 0.7$        & $0.5$   & $\mathbf{0.4461}$ & $\mathbf{0.6418}\pm 0.0091$
                          \\
                          & PSQ & $163$        & $1.0$   & $1.0$ &   $0.7534\pm0.0045$                   \\
                          & ASQ & $42.9 \pm 0.5$        & $1.0$   & $\mathbf{0.9267}$ & $\mathbf{0.7727}\pm 0.0098$\\
                          & PSQ & $163$        & $2.0$   & $2.0$ &   $\mathbf{0.8974}\pm0.0027$                   \\
                          & ASQ & $46.5 \pm 0.3$        & $2.0$   & $\mathbf{1.9410}$ & $0.8858\pm 0.0059$\\\noalign{\smallskip} \hline \noalign{\smallskip}
\multirow{8}{*}{a9a}      & PSQ-NP & $977$        & $+\infty$  & $+\infty$ & $\mathbf{0.5555}\pm0.0157$ \\ 
                          & ASQ-NP & $225.6\pm5.0$        & $+\infty$   & $+\infty$& $0.5461\pm0.0160$                      \\
                          & PSQ & $977$        & $0.5$   & $0.5$ & $0.5040\pm0.0034$                      \\
                          & ASQ & $293$        & $0.5$   & $0.5$ & $\mathbf{0.5212}\pm0.0088$ \\
                          & PSQ & $977$        & $1.0$   & $1.0$ &   $0.5171\pm0.0050$                   \\
                          & ASQ & $290.8 \pm 0.8$        & $1.0$   & $\mathbf{0.9958}$ & $\mathbf{0.5369}\pm 0.0103$ \\
                          & PSQ & $977$        & $2.0$   & $2.0$ &   $0.5176\pm0.0070$                   \\
                          & ASQ & $290.3 \pm 0.9$        & $2.0$   & $\mathbf{1.9896}$ & $\mathbf{0.5543}\pm 0.0089$\\\noalign{\smallskip} \hline \noalign{\smallskip}
\multirow{8}{*}{real-sim}      & PSQ-NP & $1,447$       & $+\infty$   & $+\infty$ & $0.8234\pm0.0014$ \\ 
                          & ASQ-NP & $434$       & $+\infty$   & $+\infty$ & $\mathbf{0.8289}\pm0.0008$                     \\
                          & PSQ & $1,447$       & $0.5$   & $0.5$ & $0.6355\pm0.0065$                      \\
                          & ASQ & $434$       & $0.5$   & $0.5$ & $\mathbf{0.7389}\pm0.0014$ \\
                          & PSQ & $1,447$        & $1.0$   & $1.0$ &   $0.7550\pm0.0058$                   \\
                          & ASQ & $434$        & $1.0$   & $1.0$ & $\mathbf{0.8040}\pm 0.0009$            \\
                          & PSQ & $1,447$        & $2.0$   & $2.0$ &   $0.8025\pm0.0037$                   \\
                          & ASQ & $434$        & $2.0$   & $2.0$ & $\mathbf{0.8231}\pm 0.0009$\\\noalign{\smallskip} \hline \noalign{\smallskip}
\end{tabular}
\end{table}

\subsection{Experimental Results}\label{sec:exp-result}
The results are presented in Table \ref{tab:result}, where both utility (classification accuracy on the test set) and privacy (privacy budget $\epsilon$ and privacy loss $\epsilon_{\text{ex post}}$) metrics are reported. Best results in each category are marked in bold fonts. We make a few observations of the results below.
\begin{enumerate}
    \item Given the same privacy budget, ASQ performs substantially better than PSQ in most cases. The improvement is sometimes $10\%$ (real-sim / $\epsilon=0.5$). The only exception is when $\epsilon=2.0$ on the ``mushroom'' dataset, in which the active learning performed substantially worse than the passive-learning counterpart in the non-private baseline as well.
    \item ASQ incurs a smaller private loss $\epsilon_\mathtt{ex~post}$ than PSQ, due to possibly fewer queries being selected by the active learning algorithm than the pre-specified query budget.
    \item As $\epsilon$ increases, less noise is injected by the Gaussian mechanisms, and the performance improves for both PSQ and ASQ. In the regime of small $\epsilon$ (stronger privacy), we often see a greater improvement in ASQ.
    \item ASQ requires privately releasing a much smaller number of labels
    while maintaining comparable performances as PSQ. Although ASQ algorithms use up all labeling budget on real-sim datasets, ASQ algorithms do not run out of them on mushroom and a9a datasets in most cases. 
    \item ASQ-NP does not always perform better than PSQ-NP algorithms, which meets our understanding from active learning literature. It only performs better than PSQ-NP on real-sim datasets.
\end{enumerate}
\section{Conclusion}
Existing theoretical analysis shows that PATE framework consistently learns any VC-classes in the realizable setting, but not in the more general cases. We show that PATE learns any VC-classes under Tsybakov noise condition (TNC) with fast rates. When specializing to the realizable case, our results improve the best known sample complexity bound for both the public and private data. We show that PATE is incompatible with the agnostic learning setting because it is essentially trying to learn a different class of voting classifiers which could be better, worse, or comparable to the best classifier in the base-class. Lastly, we investigated the PATE framework with active learning and showed that simple Gaussian mechanism suffices for obtaining the same fast rates under TNC. In addition, our experiments on PATE-ASQ show it works as an efficient algorithm in practice.

Future work includes understanding different selections made by sparse vector technique and active learning, as well as addressing the open theoretical problem \emph{at large} --- developing ERM-oracle efficient algorithm for the private agnostic learning when a public unlabeled dataset is available.

\subsection*{Acknowledgments}
The work is partially supported by NSF CAREER Award \#2048091 and generous gifts from NEC Labs, Google and Amazon Web Services. The authors would like to thank Songbai Yan for sharing their code with a practical implementation of the disagreement-based active learning algorithms from \cite{yan2018active}.

\bibliography{asq}

\begin{thebibliography}{41}
\providecommand{\natexlab}[1]{#1}
\providecommand{\url}[1]{\texttt{#1}}
\expandafter\ifx\csname urlstyle\endcsname\relax
  \providecommand{\doi}[1]{doi: #1}\else
  \providecommand{\doi}{doi: \begingroup \urlstyle{rm}\Url}\fi

\bibitem[Abadi et~al.(2016)Abadi, Chu, Goodfellow, McMahan, Mironov, Talwar,
  and Zhang]{abadi2016deep}
Martin Abadi, Andy Chu, Ian Goodfellow, H~Brendan McMahan, Ilya Mironov, Kunal
  Talwar, and Li~Zhang.
\newblock Deep learning with differential privacy.
\newblock In \emph{Conference on Computer and Communications Security
  (CCS-16)}, pages 308--318, 2016.

\bibitem[Alon et~al.(2019)Alon, Bassily, and Moran]{bassily2019limits}
Noga Alon, Raef Bassily, and Shay Moran.
\newblock Limits of private learning with access to public data.
\newblock In \emph{Neural Information Processing Systems (NeurIPS-19)}, pages
  10342--10352, 2019.

\bibitem[Balle and Wang(2018)]{balle2018improving}
Borja Balle and Yu-Xiang Wang.
\newblock Improving gaussian mechanism for differential privacy: Analytical
  calibration and optimal denoising.
\newblock In \emph{International Conference in Machine Learning (ICML-18)},
  2018.

\bibitem[Bassily et~al.(2014)Bassily, Smith, and Thakurta]{bassily2014private}
Raef Bassily, Adam Smith, and Abhradeep Thakurta.
\newblock Private empirical risk minimization: Efficient algorithms and tight
  error bounds.
\newblock In \emph{Symposium on Foundations of Computer Science (FOCS-14)},
  pages 464--473, 2014.

\bibitem[Bassily et~al.(2018{\natexlab{a}})Bassily, Thakkar, and
  Thakurta]{bassily2018arxiv}
Raef Bassily, Om~Thakkar, and Abhradeep Thakurta.
\newblock Model-agnostic private learning via stability.
\newblock \emph{arXiv preprint arXiv:1803.05101}, 2018{\natexlab{a}}.

\bibitem[Bassily et~al.(2018{\natexlab{b}})Bassily, Thakkar, and
  Thakurta]{bassily2018model}
Raef Bassily, Om~Thakkar, and Abhradeep~Guha Thakurta.
\newblock Model-agnostic private learning.
\newblock In \emph{Neural Information Processing Systems (NeurIPS-18)}, pages
  7102--7112, 2018{\natexlab{b}}.

\bibitem[Beimel et~al.(2013)Beimel, Nissim, and
  Stemmer]{beimel2013characterizing}
Amos Beimel, Kobbi Nissim, and Uri Stemmer.
\newblock Characterizing the sample complexity of private learners.
\newblock In \emph{Innovations in Theoretical Computer Science Conference
  (ITCS-13)}, pages 97--110, 2013.

\bibitem[Beimel et~al.(2016)Beimel, Nissim, and Stemmer]{beimel2016private}
Amos Beimel, Kobbi Nissim, and Uri Stemmer.
\newblock Private learning and sanitization: Pure vs. approximate differential
  privacy.
\newblock \emph{Theory of Computing}, 12\penalty0 (890):\penalty0 1--61, 2016.

\bibitem[Boucheron et~al.(2005)Boucheron, Bousquet, and
  Lugosi]{boucheron2005theory}
St{\'e}phane Boucheron, Olivier Bousquet, and G{\'a}bor Lugosi.
\newblock Theory of classification: A survey of some recent advances.
\newblock \emph{ESAIM: probability and statistics}, 9:\penalty0 323--375, 2005.

\bibitem[Bousquet et~al.(2004)Bousquet, Boucheron, and
  Lugosi]{bousquet2004introduction}
Olivier Bousquet, St{\'e}phane Boucheron, and G{\'a}bor Lugosi.
\newblock Introduction to statistical learning theory.
\newblock \emph{Advanced Lectures on Machine Learning: ML Summer Schools},
  pages 169--207, 2004.

\bibitem[Bun and Steinke(2016)]{bun2016concentrated}
Mark Bun and Thomas Steinke.
\newblock Concentrated differential privacy: Simplifications, extensions, and
  lower bounds.
\newblock In \emph{Theory of Cryptography Conference (TCC-16)}, pages 635--658,
  2016.

\bibitem[Bun et~al.(2015)Bun, Nissim, Stemmer, and
  Vadhan]{bun2015differentially}
Mark Bun, Kobbi Nissim, Uri Stemmer, and Salil Vadhan.
\newblock Differentially private release and learning of threshold functions.
\newblock In \emph{Symposium on Foundations of Computer Science (FOCS-15)},
  pages 634--649, 2015.

\bibitem[Chaudhuri and Hsu(2011)]{chaudhuri2011sample}
Kamalika Chaudhuri and Daniel Hsu.
\newblock Sample complexity bounds for differentially private learning.
\newblock In \emph{Annual Conference on Learning Theory (COLT-11)}, pages
  155--186, 2011.

\bibitem[Chaudhuri et~al.(2011)Chaudhuri, Monteleoni, and
  Sarwate]{chaudhuri2011differentially}
Kamalika Chaudhuri, Claire Monteleoni, and Anand~D Sarwate.
\newblock Differentially private empirical risk minimization.
\newblock \emph{Journal of Machine Learning Research}, 12\penalty0
  (3):\penalty0 1069--1109, 2011.

\bibitem[Dagan and Feldman(2020)]{dagan2020pac}
Yuval Dagan and Vitaly Feldman.
\newblock Pac learning with stable and private predictions.
\newblock In \emph{Annual Conference on Learning Theory (COLT-20)}, pages
  1389--1410, 2020.

\bibitem[Dwork and Feldman(2018)]{dwork2018privacy}
Cynthia Dwork and Vitaly Feldman.
\newblock Privacy-preserving prediction.
\newblock In \emph{Annual Conference on Learning Theory (COLT-18)}, pages
  1693--1702, 2018.

\bibitem[Dwork and Roth(2014)]{dwork2014algorithmic}
Cynthia Dwork and Aaron Roth.
\newblock The algorithmic foundations of differential privacy.
\newblock \emph{Foundations and Trends in Theoretical Computer Science},
  9\penalty0 (3--4):\penalty0 211--407, 2014.

\bibitem[Dwork et~al.(2006)Dwork, McSherry, Nissim, and
  Smith]{dwork2006calibrating}
Cynthia Dwork, Frank McSherry, Kobbi Nissim, and Adam Smith.
\newblock Calibrating noise to sensitivity in private data analysis.
\newblock In \emph{Theory of Cryptography Conference (TCC-06)}, pages 265--284,
  2006.

\bibitem[Dwork et~al.(2010)Dwork, Rothblum, and Vadhan]{dwork2010boosting}
Cynthia Dwork, Guy~N Rothblum, and Salil Vadhan.
\newblock Boosting and differential privacy.
\newblock In \emph{Symposium on Foundations of Computer Science (FOCS-10)},
  pages 51--60, 2010.

\bibitem[Erlingsson et~al.(2014)Erlingsson, Pihur, and
  Korolova]{erlingsson2014rappor}
{\'U}lfar Erlingsson, Vasyl Pihur, and Aleksandra Korolova.
\newblock Rappor: Randomized aggregatable privacy-preserving ordinal response.
\newblock In \emph{Conference on Computer and Communications Security
  (CCS-14)}, pages 1054--1067, 2014.

\bibitem[Hanneke(2014)]{hanneke2014theory}
Steve Hanneke.
\newblock Theory of disagreement-based active learning.
\newblock \emph{Foundations and Trends in Machine Learning}, 7\penalty0
  (2-3):\penalty0 131--309, 2014.

\bibitem[Hardt and Rothblum(2010)]{hardt2010multiplicative}
Moritz Hardt and Guy~N Rothblum.
\newblock A multiplicative weights mechanism for privacy-preserving data
  analysis.
\newblock In \emph{Symposium on Foundations of Computer Science (FOCS-20)},
  pages 61--70, 2010.

\bibitem[Kasiviswanathan et~al.(2011)Kasiviswanathan, Lee, Nissim,
  Raskhodnikova, and Smith]{kasiviswanathan2011can}
Shiva~Prasad Kasiviswanathan, Homin~K Lee, Kobbi Nissim, Sofya Raskhodnikova,
  and Adam Smith.
\newblock What can we learn privately?
\newblock \emph{SIAM Journal on Computing}, 40\penalty0 (3):\penalty0 793--826,
  2011.

\bibitem[Machanavajjhala et~al.(2008)Machanavajjhala, Kifer, Abowd, Gehrke, and
  Vilhuber]{machanavajjhala2008privacy}
Ashwin Machanavajjhala, Daniel Kifer, John Abowd, Johannes Gehrke, and Lars
  Vilhuber.
\newblock Privacy: Theory meets practice on the map.
\newblock In \emph{International Conference on Data Engineering (ICDE-08)},
  pages 277--286, 2008.

\bibitem[Mammen and Tsybakov(1999)]{mammen1999smooth}
Enno Mammen and Alexandre~B Tsybakov.
\newblock Smooth discrimination analysis.
\newblock \emph{The Annals of Statistics}, 27\penalty0 (6):\penalty0
  1808--1829, 1999.

\bibitem[McMahan et~al.(2018)McMahan, Ramage, Talwar, and
  Zhang]{mcmahan2018learning}
H~Brendan McMahan, Daniel Ramage, Kunal Talwar, and Li~Zhang.
\newblock Learning differentially private recurrent language models.
\newblock In \emph{International Conference on Learning Representations
  (ICLR-18)}, 2018.

\bibitem[Nandi and Bassily(2020)]{nandi2020privately}
Anupama Nandi and Raef Bassily.
\newblock Privately answering classification queries in the agnostic pac model.
\newblock In \emph{International Conference on Algorithmic Learning Theory
  (ALT-20)}, pages 687--703, 2020.

\bibitem[Nissim et~al.(2007)Nissim, Raskhodnikova, and Smith]{nissim2007smooth}
Kobbi Nissim, Sofya Raskhodnikova, and Adam Smith.
\newblock Smooth sensitivity and sampling in private data analysis.
\newblock In \emph{Symposium on Theory of Computing (STOC-07)}, pages 75--84,
  2007.

\bibitem[Papernot et~al.(2017)Papernot, Abadi, Erlingsson, Goodfellow, and
  Talwar]{papernot2017semi}
Nicolas Papernot, Mart{\'\i}n Abadi, {\'U}lfar Erlingsson, Ian Goodfellow, and
  Kunal Talwar.
\newblock Semi-supervised knowledge transfer for deep learning from private
  training data.
\newblock In \emph{International Conference on Learning Representations
  (ICLR-17)}, 2017.

\bibitem[Papernot et~al.(2018)Papernot, Song, Mironov, Raghunathan, Talwar, and
  Erlingsson]{papernot2018scalable}
Nicolas Papernot, Shuang Song, Ilya Mironov, Ananth Raghunathan, Kunal Talwar,
  and {\'U}lfar Erlingsson.
\newblock Scalable private learning with pate.
\newblock In \emph{International Conference on Learning Representations
  (ICLR-18)}, 2018.

\bibitem[Shalev-Shwartz et~al.(2010)Shalev-Shwartz, Shamir, Srebro, and
  Sridharan]{shalev2010learnability}
Shai Shalev-Shwartz, Ohad Shamir, Nathan Srebro, and Karthik Sridharan.
\newblock Learnability, stability and uniform convergence.
\newblock \emph{Journal of Machine Learning Research}, 11\penalty0
  (90):\penalty0 2635--2670, 2010.

\bibitem[Shokri and Shmatikov(2015)]{shokri2015privacy}
Reza Shokri and Vitaly Shmatikov.
\newblock Privacy-preserving deep learning.
\newblock In \emph{Conference on Computer and Communications Security
  (CCS-15)}, pages 1310--1321, 2015.

\bibitem[Thakurta and Smith(2013)]{thakurta2013differentially}
Abhradeep~Guha Thakurta and Adam Smith.
\newblock Differentially private feature selection via stability arguments, and
  the robustness of the lasso.
\newblock In \emph{Annual Conference on Learning Theory (COLT-13)}, pages
  819--850, 2013.

\bibitem[Tsybakov(2004)]{tsybakov2004optimal}
Alexander~B Tsybakov.
\newblock Optimal aggregation of classifiers in statistical learning.
\newblock \emph{The Annals of Statistics}, 32\penalty0 (1):\penalty0 135--166,
  2004.

\bibitem[Vapnik(1995)]{vapnik1995nature}
Vladimir~N Vapnik.
\newblock \emph{The nature of statistical learning theory}.
\newblock Springer, 1995.

\bibitem[Wang et~al.(2015)Wang, Fienberg, and Smola]{wang2015privacy}
Yu-Xiang Wang, Stephen Fienberg, and Alex Smola.
\newblock Privacy for free: Posterior sampling and stochastic gradient monte
  carlo.
\newblock In \emph{International Conference on Machine Learning (ICML-15)},
  pages 2493--2502, 2015.

\bibitem[Wang et~al.(2016)Wang, Lei, and Fienberg]{wang2016learning}
Yu-Xiang Wang, Jing Lei, and Stephen~E. Fienberg.
\newblock Learning with differential privacy: Stability, learnability and the
  sufficiency and necessity of erm principle.
\newblock \emph{Journal of Machine Learning Research}, 17\penalty0
  (183):\penalty0 1--40, 2016.

\bibitem[Wang et~al.(2019)Wang, Balle, and Kasiviswanathan]{wang2018subsample}
Yu-Xiang Wang, Borja Balle, and Shiva Kasiviswanathan.
\newblock Subsampled r{\'e}nyi differential privacy and analytical moments
  accountant.
\newblock In \emph{International Conference on Artificial Intelligence and
  Statistics (AISTATS-19)}, 2019.

\bibitem[Yan et~al.(2018)Yan, Chaudhuri, and Javidi]{yan2018active}
Songbai Yan, Kamalika Chaudhuri, and Tara Javidi.
\newblock Active learning with logged data.
\newblock In \emph{International Conference on Machine Learning (ICML-18)},
  pages 5521--5530, 2018.

\bibitem[Zhang and Chaudhuri(2014)]{zhang2014beyond}
Chicheng Zhang and Kamalika Chaudhuri.
\newblock Beyond disagreement-based agnostic active learning.
\newblock In \emph{Neural Information Processing Systems (NeurIPS-14)}, pages
  442--450, 2014.

\bibitem[Zhao et~al.(2019)Zhao, Papernot, Singh, Polyzotis, and
  Odena]{zhao2019improving}
Zhengli Zhao, Nicolas Papernot, Sameer Singh, Neoklis Polyzotis, and Augustus
  Odena.
\newblock Improving differentially private models with active learning.
\newblock \emph{arXiv preprint arXiv:1910.01177}, 2019.

\end{thebibliography}

\newpage
\appendix

\section{Proofs of Stated Technical Results}\label{sec:refined_proofs}

\subsection{Proofs of Existing Results}
In this subsection, we provide the privacy analysis as well as reproving the results of \citet{bassily2018model} in our notation so that it becomes clear where the improvement is coming from.

\begin{theorem}[Restatement of Theorem \ref{thm-pg-agg}]
	Algorithm~\ref{alg-priv-agg-pate}~and~\ref{alg-priv-agg} are both ($\epsilon,\delta$)-DP.
\end{theorem}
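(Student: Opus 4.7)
The common backbone is the \emph{sample-and-aggregate} structure: since the $K$ teachers are trained on disjoint partitions of the private dataset, any single private record influences at most one teacher's output. Consequently, on any fixed query $x$, the vote count $\sum_{k=1}^K \hat{h}_k(x)$ has $\ell_1$-sensitivity $1$, and the margin statistic $\widehat{\Delta}(x)$ has sensitivity $2$, so the quantity $\mathrm{dist}_j = \max\{0,\lceil \widehat{\Delta}(x_j)/2\rceil - 1\}$ has sensitivity $1$. All of the privacy analysis reduces to composing noisy releases of these unit-sensitivity queries. I will argue both algorithms within the concentrated differential privacy (zCDP) framework of \citet{bun2016concentrated}, which yields the improved constants advertised in the statement, and then convert back to $(\epsilon,\delta)$-DP.

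\textbf{Algorithm~\ref{alg-priv-agg-pate} (Gaussian).} Each of the $\ell$ outputs is a post-processing of $\sum_{k=1}^K \hat{h}_k(x_j) + \mathcal{N}(0,\sigma^2)$, which is the Gaussian mechanism on a unit-sensitivity query and therefore $\tfrac{1}{2\sigma^2}$-zCDP. Since the sequence $x_1,\ldots,x_\ell$ is chosen adaptively as a function of previously released outputs, I invoke the adaptive composition theorem of zCDP to conclude that the whole mechanism is $\tfrac{\ell}{2\sigma^2}$-zCDP. The conversion lemma $\rho\text{-zCDP} \Rightarrow (\rho + 2\sqrt{\rho\log(1/\delta)},\delta)$-DP then yields $(\epsilon,\delta)$-DP precisely when $\sigma$ is chosen so that $\sqrt{2\ell\log(1/\delta)/\sigma^2} + \ell/(2\sigma^2) = \epsilon$, which is exactly the calibration in Line~1. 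The indicator thresholding in Line~4 is post-processing and free.

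\textbf{Algorithm~\ref{alg-priv-agg} (SVT).} I will treat this as an instance of the sparse vector technique with an integrated ``distance to instability'' release of the majority vote. The key points are: (i) the noisy threshold $\hat w$ and the noisy queries $\widehat{\mathrm{dist}}_j$ use Laplace noise with scales $\lambda$ and $2\lambda$, respectively; (ii) the algorithm terminates after at most $T$ ``below threshold'' $\perp$ events, and the threshold is refreshed with fresh $\mathrm{Lap}(\lambda)$ noise after each such event. A standard coupling argument (comparing neighboring datasets and shifting the Laplace noise by the sensitivity) shows that releasing the transcript of above/below-threshold indicators across all $\ell$ queries costs privacy only through the $T$ refreshes and not through the $\ell$ query noises. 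For the above-threshold queries, the non-noisy majority vote $\hat h^{\mathtt{agg}}(x_j)$ is released for free via the distance-to-instability argument: if $\widehat{\mathrm{dist}}_j > \hat w$ holds for neighboring datasets simultaneously (which the Laplace coupling guarantees up to a multiplicative $e^\epsilon$ factor), then the unperturbed majority vote is unchanged on both datasets and hence reveals nothing new. Converted into zCDP, each of the $T$ threshold refreshes contributes $O(1/\lambda^2)$ and each released distance comparison contributes $O(1/\lambda^2)$ under the shift coupling, so the total zCDP parameter is $\rho = O(T/\lambda^2)$. Solving the zCDP$\to (\epsilon,\delta)$-DP relation with $\rho \leq \epsilon^2/(8\log(2/\delta))$ reproduces the calibration $\lambda = (\sqrt{2T(\epsilon+\log(2/\delta))} + \sqrt{2T\log(2/\delta)})/\epsilon$ stated in Line~2.

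\textbf{Main obstacle.} The delicate part is the SVT analysis, specifically justifying that the $\ell$ above/below-threshold decisions do not each cost privacy, only the $T$ below-threshold events do, and that releasing $\hat h^{\mathtt{agg}}(x_j)$ at above-threshold rounds is genuinely free. I would handle this via the standard noise-shifting coupling used in the proof of AboveThreshold, but with a zCDP accounting of the Laplace noise (which requires either bounding the Laplace moment generating function and using approximate/concentrated DP conversion, or a direct analysis of Laplace-SVT under zCDP) in order to recover the $\sqrt{T\log(1/\delta)}$ dependence in $\lambda$ instead of the looser $T\log(1/\delta)$ dependence that na\"\i ve pure-DP composition would give. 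Once this composition bound is established, the remaining steps are routine.
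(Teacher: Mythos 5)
Your treatment of Algorithm~\ref{alg-priv-agg-pate} is correct and is essentially the paper's proof: unit sensitivity of the vote count, $\frac{\ell}{2\sigma^2}$-zCDP by adaptive composition of Gaussian mechanisms, and conversion to $(\epsilon,\delta)$-DP via the calibration in Line~1. The overall architecture of your Algorithm~\ref{alg-priv-agg} argument (an SVT transcript paid for through the $T$ counted events, zCDP accounting of the Laplace noise, distance-to-instability for the released votes) is also the route the paper takes.

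The gap is in your claim that the majority votes $\hat{h}^{\mathtt{agg}}(x_j)$ are released ``for free'' because ``if $\widehat{\mathrm{dist}}_j > \hat w$ holds for neighboring datasets simultaneously \ldots then the unperturbed majority vote is unchanged on both datasets.'' That implication is false: the event that the \emph{noisy} statistic clears the \emph{noisy} threshold does not certify that the true distance to instability is at least $1$. When $\mathrm{dist}_j = 0$ (the vote is one teacher away from flipping), the test can still pass with positive probability, and in that case $\hat{h}^{\mathtt{agg}}(x_j)$ may genuinely differ between $D$ and $D'$; since the released vote is a deterministic function of the data, no multiplicative $e^{\epsilon}$ factor from the SVT shift coupling can absorb this --- the two output distributions put mass on disjoint values of that coordinate. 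This is exactly where the additive $\delta$ budget and the specific setting $w = 3\lambda\log(2(\ell+T)/\delta)$ enter, and your proposal never uses $w$. The paper closes the gap with a case analysis: either every above-threshold $j$ has $\hat{h}^{\mathtt{agg}}(x_j)$ identical on $D$ and $D'$, in which case appending the votes is $0$-DP and only the indicator transcript (a $T$-fold adaptive composition of cutoff-$1$ SVTs, each $(2/\lambda)$-DP, hence $(2/\lambda^2)$-zCDP, totalling $2T/\lambda^2$-zCDP and therefore $(\epsilon,\delta/2)$-DP under the stated $\lambda$) costs privacy; or some $j$ with $\mathrm{dist}_j = 0$ passed the test, an event of probability at most $\delta/2$ by a Laplace tail bound and a union bound over the $\ell+T$ Laplace variables, thanks to the choice of $w$. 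Summing the two cases gives $(\epsilon,\delta)$-DP. Without this step your $\delta$ accounting does not close.
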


The proof for Algorithm~\ref{alg-priv-agg-pate} follows straightforwardly from Gaussian mechanism because the number of ``teachers'' who predict $1$ will have a global sensitivity of $1$. The proof for Algorithm~\ref{alg-priv-agg} is more delicate. It follows the arguments in the proof of Theorem 3.6 of  \citet{bassily2018arxiv} for the most part, which combines the \emph{sparse vector technique} (SVT)  \citep{hardt2010multiplicative} with the \emph{distance to stability} approach from  \citet{thakurta2013differentially}. The only difference in the stated result here is that we used the modern CDP approach to handle the composition which provides tighter constants.

\begin{proof}
	First note that the global sensitivity (Definition \ref{def-global-sen}) of the vote count is $1$. Algorithm~\ref{alg-priv-agg-pate} is a straightforward adaptive composition of $\ell$ Gaussian mechanisms (Lemma \ref{lem-gau}), which satisfies  $\frac{\ell}{2\sigma^2}$-zCDP. By Lemma~\ref{lem:cdp2dp}, we get that the choice of $\sigma$ gives us ($\epsilon,\delta$)-differential privacy.
	
	Let us now address Algorithm~\ref{alg-priv-agg}. First note that $\widehat{\Delta}(x_j)$ as a function of the input dataset $D$ has a global sensitivity of $2$ for all $x_j$, thus $\mathrm{dist}_j$ has a global sensitivity of $1$.
	Following the proof of Theorem 3.6 of  \citet{bassily2018arxiv}, Algorithm~\ref{alg-priv-agg} can be considered a composition of Sparse Vector Technique (SVT) (Algorithm \ref{alg-svt}), which outputs a binary vector of $\{\perp,\top\}$ indicating the failures and successes of passing the screening by SVT, and the distance-to-stability mechanism (Algorithm \ref{alg-dist-ins}) which outputs $\{\hat{h}^{\mathtt{agg}}(x_j)\}$ for all coordinates where the output is $\perp$.
	Check that the length of this binary vector is random and is between $T$ and $\ell$. The  number of $\top$ is smaller than $T$. If $\{\hat{h}^{\mathtt{agg}}(x_j)\}$ is not revealed, then this would be the standard SVT, and the challenge is to add the additional outputs.
	
	The key trick of the proof inspired from the privacy analysis (Lemma \ref{lem-pg-dist-ins}) of the distance-to-stability is to discuss the two cases. In the first case, assume for all $j$ such that the output is $\perp$, $\hat{h}^{\mathtt{agg}}(x_j)$ remains the same over $D,D'$, then adding $\hat{h}^{\mathtt{agg}}(x_j)$ to the output obeys $0$-DP; in the second case, assume that there exists some $j$ where we output $\perp$ such that, $\hat{h}^{\mathtt{agg}}(x_j)$ is different under $D$ and $D'$, then for all these $j$ we know that $\mathrm{dist}_j = 0$ for both $D$ and $D'$. By the choice of $\lambda, w$, we know that the second case happens with probability at most $\delta/2$ using the tail of Laplace distribution and a union bound over all $\ell+T$ independent Laplace random variables. 
	Note that this holds uniformly over all possible adaptive choices of the nature, since this depends only on the added noise.
	
	Conditioning on the event that the second case does not happen, the output of the algorithm is only the binary vector of $\{\perp,\top\}$ from SVT. The SVT with cutoff $T$ is an adaptive composition of $T$ SVTs with cutoff$=1$. By our choice of parameter $\lambda$, each such SVT with cutoff$=1$ obeys pure-DP with privacy parameter $2/\lambda$, hence also satisfy CDP with parameter $2/\lambda^2$ by Proposition~1.4 of  \citet{bun2016concentrated}.
	Composing over $T$ SVTs, we get a CDP parameter of $2T/\lambda^2$. By Proposition~1.3 of  \citet{bun2016concentrated} (Lemma~\ref{lem:cdp2dp}), we can convert CDP to DP. The choice of $\lambda$ is chosen such that the composed mechanism obeys $(\epsilon,\delta/2)$-DP.  Combining with the second case above, this establishes the $(\epsilon,\delta)$-DP of Algorithm~\ref{alg-priv-agg}.
\end{proof}

\begin{theorem}[Restatement of Theorem \ref{thm-ug-bassily}] \label{thm:reproving_bassilyetal}
	Set
	$$T = 3\Big(\E[\mathtt{Err}(\hat{h}_1)] m + \sqrt{\frac{m\log(m/\beta)}{2}}\Big),$$
	$$K = O \Big(\frac{\log(mT / \min(\delta,\beta))\sqrt{T\log(1/\delta)}}{\epsilon}\Big).$$ Let $\hat{h}^S$ be the output of Algorithm~\ref{alg-psq} that uses Algorithm~\ref{alg-priv-agg} for privacy aggregation. With probability at least $1-\beta$ (over the randomness of the algorithm and the randomness of all data points drawn i.i.d.), we have
	$$\mathtt{Err}(\hat{h}^S) \leq  \tilde{O}\Big(\frac{d^2m \log(1/\delta)}{n^2\epsilon^2}  + \sqrt{\frac{d}{m}}\Big)$$
	for the realizable case, and
	$$\mathtt{Err}(\hat{h}^S) \leq  13\mathtt{Err}(h^*) +  \tilde{O}\Big(\frac{m^{1/3}d^{2/3}}{n^{2/3}\epsilon^{2/3}}  + \sqrt{\frac{d}{m}}\Big)$$
	for the agnostic case. 
\end{theorem}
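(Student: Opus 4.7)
My plan is to control $\mathtt{Err}(\hat{h}^S)$ via a four-way decomposition: (a) the statistical error of each teacher $\hat{h}_k$ trained on a size-$n/K$ split; (b) the boosting behavior of the majority vote $\hat{h}^{\mathtt{agg}}$, controlled through the pigeonhole Lemma~\ref{lem:pigeon_hole}; (c) the $\perp$-labels on at most $T$ public points where SVT deems the vote unstable; and (d) the uniform-convergence error of fitting $\hat{h}^S$ to the pseudo-labels. The technical work is to show $T$ and $K$ can be chosen so that the hypothesis of Lemma~\ref{lem:utility_svt} is met with high probability, and then to tune $K$ to balance teacher error against the DP-noise requirement.

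First I would bound the expected teacher error by applying standard VC generalization to ERM on each size-$n/K$ split: the realizable case gives $\E[\mathtt{Err}(\hat{h}_1)] = \tilde{O}(dK/n)$, while the agnostic case gives $\E[\mathtt{Err}(\hat{h}_1)] \le \mathtt{Err}(h^*) + \tilde{O}(\sqrt{dK/n})$. Hoeffding plus a union bound over $k\in[K]$ then shows that with probability $\ge 1-\beta/2$, every teacher makes at most $B := m\,\E[\mathtt{Err}(\hat{h}_1)] + \sqrt{m\log(mK/\beta)/2}$ mistakes on $D^S$ relative to the true labels. By Lemma~\ref{lem:pigeon_hole} (with $y$ playing the role of the target), at most $3B$ public points have fewer than $2K/3$ teachers agreeing with $y$; on the complement, $\widehat{\Delta}(x_j) > K/3$. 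Setting $T = 3B$ and the stated $K = \tilde{O}(\sqrt{T\log(1/\delta)}/\epsilon)$ makes the hypothesis of Lemma~\ref{lem:utility_svt} hold, so with high probability Algorithm~\ref{alg-priv-agg} finishes all $m$ queries and outputs $\hat{h}^{\mathtt{agg}}(x_j) = y_j$ on all but at most $T$ indices.

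Next I would apply uniform Hoeffding-style convergence for the VC-class $\cH$ to obtain $\mathtt{Err}(\hat{h}^S) \le \widehat{\mathtt{Err}}(\hat{h}^S) + \tilde{O}(\sqrt{d/m})$ where the empirical risk is measured against the true $y$. Since the pseudo-label $\hat{y}_j^S$ agrees with $y_j$ on at least $m-T$ of the public points by Step~2, and since $\hat{h}^S$ is the ERM on the pseudo-labels, using $h^*$ as a candidate in the ERM comparison bounds $\widehat{\mathtt{Err}}(\hat{h}^S)$ (against $y$) by $\widehat{\mathtt{Err}}(h^*) + 2T/m$. In the realizable case $\widehat{\mathtt{Err}}(h^*) = 0$, directly giving the first claim. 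In the agnostic case $\widehat{\mathtt{Err}}(h^*) \le \mathtt{Err}(h^*) + \tilde{O}(\sqrt{1/m})$, and the constant $13$ arises by chaining four triangle-inequality layers --- $\hat{h}^S$ vs. pseudo-labels, pseudo-labels vs. $\hat{h}^{\mathtt{agg}}$, $\hat{h}^{\mathtt{agg}}$ vs. individual teacher (picking up the factor $3$ from Lemma~\ref{lem:pigeon_hole}), and teacher vs. $h^*$ --- with each absorbing a small multiplicative constant.

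Finally I would optimize $K$ by noting $T/m \asymp \E[\mathtt{Err}(\hat{h}_1)]$ together with the SVT constraint $K \asymp \sqrt{T\log(1/\delta)}/\epsilon$. Realizable: solving $K^2 \asymp mdK\log(1/\delta)/(n\epsilon^2)$ yields $K \asymp md\log(1/\delta)/(n\epsilon^2)$, so $dK/n \asymp d^2m\log(1/\delta)/(n^2\epsilon^2)$, which combined with the $\sqrt{d/m}$ generalization term matches the claimed bound. Agnostic: the dominant component of $T$ is $m\sqrt{dK/n}$, so solving $K^2 \asymp m\sqrt{dK/n}\log(1/\delta)/\epsilon^2$ gives $K \asymp (m^2 d/n)^{1/3}/\epsilon^{4/3}$ up to logs, which when substituted back yields the $\tilde{O}(m^{1/3}d^{2/3}/(n^{2/3}\epsilon^{2/3}))$ rate. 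The main obstacle is the agnostic bookkeeping that produces the explicit constant $13$: one must chain the four triangle-inequality layers above and absorb the factor-of-$3$ boosting from Lemma~\ref{lem:pigeon_hole} without degrading the rate, a step that is conceptually straightforward but algebraically delicate. The remaining ingredients are standard concentration and VC bounds.
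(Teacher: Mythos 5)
Your proposal is correct and follows essentially the same route as the paper's proof: Hoeffding to bound each teacher's mistake count on the public data, the pigeonhole Lemma~\ref{lem:pigeon_hole} plus Lemma~\ref{lem:utility_svt} to certify that the SVT-released pseudo-labels match $y$ on all but $T$ points, a self-consistent resolution of the $T$--$K$ coupling, and a uniform-convergence/ERM reduction yielding $\widehat{\mathtt{Err}}(\hat{h}^S)\leq \widehat{\mathtt{Err}}(h^*)+2T/m$. The only (immaterial) deviation is that in the agnostic case you use $h^*$ as the comparator in the ERM step where the paper uses $\hat{h}_1$; both give the stated rates and the same constant $13 = 1 + 2\cdot 2\cdot 3\,$.
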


\begin{proof}
The analysis essentially follows the proof of Theorem~\ref{thm-ug-psq} by replacing $h^*$ with $y$. First, by Hoeffding's inequality, with probability $1-\beta$ over the teacher data points, the total number of mistakes made by each teacher classifier is at most $m\E[\mathtt{Err}(\hat{h}_1)] + \sqrt{m\log(m/\beta)/2}$, which is $B$ in Lemma \ref{lem:pigeon_hole}. Then following Lemma \ref{lem:pigeon_hole}, by choose $T=3B=3(m\E[\mathtt{Err}(\hat{h}_1)] + \sqrt{m\log(m/\beta)/2})$, we ensure that the majority voting classifiers are correct and have high margin in at least $m-T$ examples.

\noindent\textbf{In the realizable setting.} Since $\mathtt{Err}(h^*)=0$ and by standard statistical learning theory in the realizable case (Lemma \ref{lem-excess}), for each teacher classifier $\hat{h}_k$ we have
\begin{align*}
\mathtt{Err}(\hat{h}_k) \leq 4\frac{d \log(n/K) + \log(4/\gamma)}{n/K}.
\end{align*}
Substitute our choice of $K=\tilde{O}(\sqrt{T\log(1/\delta)}/\epsilon)$ as in Lemma~\ref{lem:utility_svt} we get that w.h.p.
\begin{align*}
\mathtt{Err}(\hat{h}_k) \leq \tilde{O}\Big(\frac{d\sqrt{T\log(1/\delta)}}{n\epsilon}\Big).
\end{align*}
Plug in the bound into our choice of $T=3(m\E[\mathtt{Err}(\hat{h}_1)] + \sqrt{m\log(m/\beta)/2})$, we get 
\begin{align*}
T\leq \tilde{O}\Big(\frac{dm\sqrt{T\log(1/\delta)}}{n\epsilon}+\sqrt{\frac{m\log(m/\beta)}{2} }\Big).
\end{align*}
By solving the quadratic inequality, we get that $T$ obeys
$$
T \leq \tilde{O}\Big(\frac{d^2m^2\log(1/\delta)}{n^2\epsilon^2} + \sqrt{m} \Big).
$$
Recall that this choice of $K$ and $T$ ensures that Algorithm~\ref{alg-priv-agg} will have at most $T$ unstable queries during the $m$ rounds, which implies that with high probability, the privately released pseudo-labels to those ``stable'' queries are the same as the corresponding true labels.

Now the next technical subtlety is to deal with the dependence in the student learning problem created by the pseudo-labels via a reduction to an ERM learner.  
By the standard Hoeffding-style uniform convergence bound (Lemma \ref{lem-dis-emp}),
\begin{align}
\mathtt{Err}(\hat{h}^S) \leq&\  \widehat{\mathtt{Err}}(\hat{h}^S) + \tilde{O}\Big(\sqrt{\frac{d}{m}}\Big)\nonumber\\
\leq&\ \widehat{\mathtt{Err}}(\hat{h}^{\mathtt{priv}}) + \widehat{\Dis}(\hat{h}^{\mathtt{priv}}, \hat{h}^S) +  \tilde{O}\Big(\sqrt{\frac{d}{m}}\Big)\nonumber\\
\leq&\ 2\widehat{\mathtt{Err}}(\hat{h}^{\mathtt{priv}})  +  \tilde{O}\Big(\sqrt{\frac{d}{m}}\Big)\nonumber\\
\leq&\ \frac{2T}{m} +  \tilde{O}\Big(\sqrt{\frac{d}{m}}\Big)\nonumber\\
=&\  \tilde{O}\Big(\frac{d^2m\log(1/\delta)}{n^2\epsilon^2} + \sqrt{\frac{d}{m}}\Big).\label{eq:thm_bassily_deriv1}
\end{align}
where we applied the triangular inequality in the second line, used that $\hat{h}^S$ is the minimizer of $\widehat{\Dis}(\hat{h}^{\mathtt{priv}}, \cdot)$ in the third line,  and then combined Lemma~\ref{lem:utility_svt} and Lemma~\ref{lem:pigeon_hole} to show that under the appropriate choice of $T$ and $K$ with high probability, $\hat{h}^{\mathtt{priv}}(x_j)$ correctly returns $y_j$ except for up to $T$ example. Finally, the choice of $T$ is substituted.

%

\noindent\textbf{In agnostic setting.} By Lemma \ref{lem-excess}, with high probability, for all teacher classifier $\hat{h}_k$ for $k=1,...,K$, we have
\begin{align*}
\mathtt{Err}(\hat{h}_k) - \mathtt{Err}(h^*) \leq \tilde{O}\Big(\sqrt{\frac{d \log(n/K) + \log(4/\gamma)}{n/K}}\Big).
\end{align*}
Substitute the choice of $K=\tilde{O}(\sqrt{T\log(1/\delta)}/\epsilon)$ from Lemma \ref{lem:utility_svt}, we get
\begin{align*}
\mathtt{Err}(\hat{h}_k) \leq \mathtt{Err}(h^*)+  \tilde{O}\Big(\frac{d^{1/2}T^{1/4}}{n^{1/2}\epsilon^{1/2}}\Big).
\end{align*}
Plug in the above bound into our choice $T=3(m\E[\mathtt{Err}(\hat{h}_1)] + \sqrt{m\log(m/\beta)/2})$, we get that
\begin{equation}\label{eq:thm_bassily_deriv2}
T \leq 3 m \mathtt{Err}(h^*) + \tilde{O}(\sqrt{m}) + \tilde{O}\Big(\frac{m d^{1/2}T^{1/4}}{n^{1/2}\epsilon^{1/2}}\Big).
\end{equation}

Further, we can write
\begin{align}
T &\leq 2(3m \mathtt{Err}(h^*) + \tilde{O}(\sqrt{m})) \cdot \mathbbm{1}\bigg(\tilde{O}\Big(\frac{m d^{1/2}T^{1/4}}{n^{1/2}\epsilon^{1/2}}\Big) \leq \frac{T}{2}\bigg)\nonumber\\
&\quad + \bigg(2 \tilde{O}\Big(\frac{m d^{1/2}}{n^{1/2}\epsilon^{1/2}}\Big)\bigg)^{4/3}\cdot \mathbbm{1}\bigg(\tilde{O}\Big(\frac{m d^{1/2}T^{1/4}}{n^{1/2}\epsilon^{1/2}}\Big) > \frac{T}{2}\bigg)\nonumber\\
&\leq 6m \mathtt{Err}(h^*) + \tilde{O}(\sqrt{m}) + \tilde{O}\Big(\frac{m^{4/3} d^{2/3}}{n^{2/3}\epsilon^{2/3}}\Big),\label{eq:thm_bassily_deriv3}
\end{align}
where the first line talks about two cases of Inequality \eqref{eq:thm_bassily_deriv2}: (1) $T/2 \leq T - \tilde{O}(\frac{m d^{1/2}T^{1/4}}{n^{1/2}\epsilon^{1/2}})\leq 3m \mathtt{Err}(h^*) + \tilde{O}(\sqrt{m})$ if $\tilde{O}(\frac{m d^{1/2}T^{1/4}}{n^{1/2}\epsilon^{1/2}}) \leq T/2$, and (2) $T^{3/4} \leq 2 \tilde{O}(\frac{m d^{1/2}T^{1/4}}{n^{1/2}\epsilon^{1/2}})$ if $\tilde{O}(\frac{m d^{1/2}T^{1/4}}{n^{1/2}\epsilon^{1/2}}) > T/2$; The second line is due to the indicator function is always $\leq 1$.


Similar to the realizable case, now we apply a reduction to ERM. By the Hoeffding's style uniform convergence bound (implied by Lemma \ref{lem-dis-emp})
\begin{align*}
\mathtt{Err}(\hat{h}^S) &\leq \widehat{\mathtt{Err}}(\hat{h}^S) + \tilde{O}\Big(\sqrt{\frac{d}{m}}\Big)\\
&\leq\widehat{\mathtt{Err}}(\hat{h}^{\mathtt{priv}}) + \widehat{\Dis}(\hat{h}^{\mathtt{priv}}, \hat{h}^S)   + \tilde{O}\Big(\sqrt{\frac{d}{m}}\Big)\\
&\leq \widehat{\mathtt{Err}}(\hat{h}^{\mathtt{priv}}) + \widehat{\Dis}(\hat{h}^{\mathtt{priv}}, \hat{h}_1)   + \tilde{O}\Big(\sqrt{\frac{d}{m}}\Big)\\ 
&\leq 2\widehat{\mathtt{Err}}(\hat{h}^{\mathtt{priv}}) + \widehat{\mathtt{Err}}(\hat{h}_1) + \tilde{O}\Big(\sqrt{\frac{d}{m}}\Big)\\ 
&\leq \frac{2T}{m}  + \mathtt{Err}(h^*) + \tilde{O}\Big(\sqrt{\frac{d}{m}}\Big)\\
&\leq 13 \mathtt{Err}(h^*)+\tilde{O}\Big(\frac{m^{1/3}d^{2/3}}{n^{2/3}\epsilon^{2/3}} + \sqrt{\frac{d}{m}}\Big).
\end{align*}
where the second and fourth lines use the triangular inequality of $0-1$ error, the third line uses the fact that $\hat{h}^S$ is the empirical risk minimizer of the student learning problem with labels $\hat{h}^{\mathtt{priv}}$ and the fact that $h_1\in\cH$. The second last line follows from the fact that in those stable queries $\hat{h}^{\mathtt{priv}}(x_j)$ outputs $y_j$, and a standard agnostic learning bound. Finally, in the last line, we obtain 
the stated result by substituting the upper bound of $T$ from  \eqref{eq:thm_bassily_deriv3}.
\end{proof}

The results stated in Table \ref{tab:summary} are obtained by minimizing the bound by choosing a random subset of data points to privately release labels.

\subsection{Learning bound for PATE with Gaussian Mechanism}
In this subsection, we provide a theoretical analysis of the version of PATE from \citet{papernot2017semi,papernot2018scalable} that uses Gaussian mechanism to release the aggregated teacher labels. We will focus on the setting assuming $\tau$-TNC. 
Though this result is not our main contribution, we note that standard PATE is a practical algorithm and this is the first learning-theoretic guarantees of PATE.

\begin{theorem}[Utility guarantee of Algorithm \ref{alg-priv-agg-pate}]\label{thm:standard_pate_tnc}
	Assume the data distribution $\cD$ and the hypothesis class $\cH$ obey the Tsybakov noise condition with parameter $\tau$, then with probability at least $1-\gamma$, there exists universal constant $C$ such that the output $\hat{h}_S$ of  Algorithm~\ref{alg-priv-agg-pate} with parameter $K$ satisfying
	$$K = \frac{6\sqrt{\log(2n)}(\sqrt{m \log(1/\delta)} + \sqrt{m \log(1/\delta) + \epsilon m})}{\epsilon}$$
	obeys that 
	$$
	\mathtt{Err}(\hat{h}^S) - \mathtt{Err}(h^*) \leq \tilde{O}\bigg(\frac{d}{m} + \Big( \frac{d\sqrt{m}}{n\epsilon}\Big)^\frac{\tau}{2-\tau}\bigg).
	$$
	Specifically, in the realizable setting, then it follows that
	$$
	\mathtt{Err}(\hat{h}^S) - \mathtt{Err}(h^*) \leq \tilde{O}\Big(\frac{d}{m} + \frac{d\sqrt{m}}{n\epsilon}\Big).
	$$
\end{theorem}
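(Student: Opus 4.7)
My plan is to mirror the proof of Theorem~\ref{thm-ug-psq}, but with the role of the SVT-and-cutoff guarantee (Lemma~\ref{lem-t}) replaced by a direct Gaussian tail-bound argument, since Algorithm~\ref{alg-priv-agg-pate} simply perturbs every vote count by $\cN(0,\sigma^2)$. The end target is an upper bound on the empirical disagreement $\widehat{\Dis}(\hat{h}^\mathtt{priv}, h^*) = \frac{1}{m}\sum_{j=1}^m \1(\hat{h}^\mathtt{priv}(x_j)\neq h^*(x_j))$, which, once plugged into the triangular-inequality and uniform-Bernstein chain used for Theorem~\ref{thm-ug-psq}, yields
$$
\mathtt{Err}(\hat{h}^S) - \mathtt{Err}(h^*) \leq 4\widehat{\Dis}(\hat{h}^\mathtt{priv}, h^*) + \tilde{O}\!\left(\frac{d}{m}\right).
$$
This reduction is identical to the passive-query case (it uses only $h^*\in\cH$ and the ERM property of $\hat{h}^S$ against $\hat{h}^\mathtt{priv}$), so I would reuse it verbatim.

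Next I would partition the $m$ public queries by margin. Call index $j$ \emph{low-margin} if $\widehat{\Delta}(x_j) < K/3$ and \emph{high-margin} otherwise. For the low-margin set, the pigeon-hole argument of Lemma~\ref{lem:pigeon_hole} applied with $h^*(x_j)$ in place of the true label $y_j$, combined with Lemma~\ref{lem-one-teacher} (per-teacher mistake count against $h^*$) and the TNC disagreement bound of Lemma~\ref{lem-tnc}, gives
$$
\big|\{j : \widehat{\Delta}(x_j) < K/3\}\big| \;\leq\; \tilde{O}\!\left(m\Big(\tfrac{dK\log(n/d)}{n}\Big)^{\tau/(2-\tau)}\right)
$$
with high probability over the teacher sample. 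On the high-margin set the majority vote already equals $h^*(x_j)$, so a discrepancy $\hat{h}^\mathtt{priv}(x_j)\neq h^*(x_j)$ occurs only when the injected $\cN(0,\sigma^2)$ has magnitude at least $K/6$. A Gaussian tail bound plus a union bound over the $m$ query rounds shows that every high-margin answer equals $h^*(x_j)$ as soon as $K \gtrsim \sigma\sqrt{\log(m/\gamma)}$.

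The remaining step is to check that the stated choice of $K$ comfortably satisfies this requirement against the $\sigma$ produced by privacy calibration. Solving $\sqrt{2m\log(1/\delta)}/\sigma + m/(2\sigma^2) = \epsilon$ gives $\sigma = \big(\sqrt{m\log(1/\delta)} + \sqrt{m\log(1/\delta)+m\epsilon}\big)/(\sqrt{2}\,\epsilon)$, so $K = \Theta\big(\sqrt{\log(2n)}\cdot(\sqrt{m\log(1/\delta)} + \sqrt{m\log(1/\delta)+\epsilon m})/\epsilon\big)$ yields $K/\sigma = \Theta(\sqrt{\log n})$, which clears the $\sqrt{\log(m/\gamma)}$ bar in the usual polynomial regime for $m$ and $1/\gamma$. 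Combining the two groups gives
$$
\widehat{\Dis}(\hat{h}^\mathtt{priv}, h^*) \leq \tilde{O}\!\left(\Big(\tfrac{dK}{n}\Big)^{\tau/(2-\tau)}\right),
$$
and substituting $K = \tilde\Theta(\sqrt{m\log(1/\delta)}/\epsilon)$ produces the $(d\sqrt{m}/(n\epsilon))^{\tau/(2-\tau)}$ term; setting $\tau=1$ recovers the realizable corollary.

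The step I expect to be the main obstacle, relative to the cleaner SVT proof, is justifying the high-margin argument against the \emph{adaptive} sequence of public queries. The sequence $x_1,\dots,x_m$ may depend on earlier outputs, but the Gaussian noises in Algorithm~\ref{alg-priv-agg-pate} are drawn independently of the teachers and of nature's past moves, so a union bound over the $m$ rounds (rather than over the space of all possible queries) suffices to control the total flip probability on high-margin rounds. This is the only place the argument genuinely departs from the non-adaptive template, since the SVT version previously absorbed adaptivity into its privacy-aware selection; apart from this, no new ideas beyond those of Theorem~\ref{thm-ug-psq} are needed.
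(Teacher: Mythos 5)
Your proposal is correct, but it takes a genuinely different route from the paper's. You transplant the empirical machinery of the SVT analysis: bound $\widehat{\Dis}(\hat{h}^\mathtt{priv},h^*)$ by splitting the $m$ queries via the pigeonhole count of Lemma~\ref{lem:pigeon_hole} (with $h^*$ in place of $y$, fed by Lemmas~\ref{lem-tnc} and~\ref{lem-one-teacher}) and then kill the noise on the large-margin queries with a Gaussian tail bound union-bounded over the $m$ rounds. The paper instead exploits a structural feature of the Gaussian mechanism that the SVT version lacks: conditioned on the teachers, the noisy aggregate $\tilde{h}^\mathtt{priv}$ is a well-defined randomized classifier on all of $\cX$ with independent randomness per input, so the paper bounds the \emph{population} quantity $\Dis(\tilde{h}^\mathtt{priv},h^*)$ directly by Markov's inequality, getting $\Dis(\tilde{h}^\mathtt{priv},h^*)\leq \frac{3}{K}\sum_k \Dis(\hat h_k,h^*)+\frac{1}{n}$, and then plugs this into the generic learning bound (Lemma~\ref{lem-excess}) with $\tilde{h}^\mathtt{priv}$ as the labeling function --- no pigeonhole, no per-teacher empirical mistake counts, no union bound over queries. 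What each buys: the paper's argument is shorter and its choice $K=6\sigma\sqrt{2\log(2n)}$ is exactly what its Markov step requires, while your union bound over $m$ rounds technically needs $K\gtrsim \sigma\sqrt{\log(m/\gamma)}$, so the stated $K$ only suffices when $\log(m/\gamma)\lesssim\log n$ (otherwise you must inflate $K$ by a constant factor absorbed into $\tilde O$); conversely, your route stays entirely within the ERM/uniform-convergence framework of Theorem~\ref{thm-ug-psq} and so is uniform with the rest of the paper's passive analysis. Two small imprecisions to fix if you write this up: (i) high margin alone does not imply the noiseless majority equals $h^*(x_j)$ --- you should partition by the pigeonhole-bad set (queries where at least $K/3$ teachers disagree with $h^*$), whose complement has both margin $>K/3$ \emph{and} majority agreeing with $h^*$; and (ii) the adaptivity concern is moot here, since in the passive PATE-PSQ instantiation the $m$ public queries are fixed i.i.d. draws rather than adaptively chosen.
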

\begin{proof}
	By the triangular inequality of the $0-1$ error,
	\begin{align}
	\mathtt{Err}(\hat{h}^S) - \mathtt{Err}(h^*)&\leq \Dis(\hat{h}^S,h^*)\nonumber\\
	&\leq \Dis(\hat{h}^S, \tilde{h}^\texttt{priv}) + \Dis(\tilde{h}^\texttt{priv}, h^*)\nonumber\\
	&\leq 2 \Dis(\tilde{h}^\texttt{priv}, h^*) + 2\sqrt{\frac{(d + \log(4/\gamma))\Dis(\tilde{h}^\texttt{priv},h^*)}{m}} + \frac{4(d + \log(4/\gamma))}{m}\nonumber\\
	&\leq 4 \Dis(\tilde{h}^\texttt{priv},h^*) + \tilde{O} \Big(\frac{d}{m}\Big)\label{eq:tnc_pate_deriv1}
	\end{align}
	where the third line follows from the learning bound (Lemma \ref{lem-excess}) with $\tilde{h}^\texttt{priv}$ being the labeling function for the student dataset. The last line is due to $a + 2\sqrt{ab} + b \leq 2a + 2b$ for non-negative $a,b$.
	
	The remaining problem would be finding the upper bound of $\Dis(\tilde{h}^\mathtt{priv},h^*)$. First by Lemma~\ref{lem-tnc}, with probability at least $1-\gamma/2$, $\forall k \in[K]$ we have
	\begin{align*}
	\Dis(\hat{h}_k,h^*) \lesssim \eta^\frac{2}{2-\tau}\Big(\frac{dK\log(n/d)+ \log(2K/\gamma)}{n}\Big)^\frac{\tau}{2-\tau}.
	\end{align*}
	
	Next, conditioning on the teachers, $\tilde{h}^\mathtt{priv}$ is independent for each input and well-defined for all input. Let $Z\sim \cN(0,\sigma^2)$. By Gaussian-tail bound and Markov's inequality,
	\begin{align*}
	&\ \Dis(\tilde{h}^\mathtt{priv},h^*) \\
	\leq&\ \P \Big[|Z| \leq \sigma\sqrt{2\log\big(\frac{2}{\beta}\big)}\Big] \P \bigg[\sum_{k=1}^K \1(\hat{h}_k(x)\neq h^*(x))  \geq \frac{K}{2} - |Z| \bigg| |Z| \leq \sigma\sqrt{2\log\big(\frac{2}{\beta}\big)} \bigg]\\
	&\quad + \P \Big[|Z| > \sigma\sqrt{2\log\big(\frac{2}{\beta}\big)} \Big] \\
	\leq&\ \frac{1}{K/2 - \sigma\sqrt{2\log(2/\beta)}}\sum_{k=1}^K\E[\1(\hat{h}_k(x)\neq h^*(x))] + \beta\\
	\leq&\ \frac{3}{K}  \sum_{k=1}^K \Dis(\hat{h}_k,h^*) +  \frac{1}{n}\\
	\lesssim&\ \eta^\frac{2}{2-\tau}\Big(\frac{dK\log(n/d)+ \log(2K/\gamma)}{n}\Big)^\frac{\tau}{2-\tau}.
	\end{align*}
	In the last line, we choose $\beta = 1/n$ and applied the assumption that $K \geq 6\sigma\sqrt{2\log (2n)}$.
	
	Note that our choice of $\sigma$ satisfies that
	\begin{align*}
	\sqrt{\frac{2m \log(1/\delta)}{\sigma^2}} + \frac{m}{2\sigma^2} = \epsilon.
	\end{align*}
	Solve the equation and we find that
	\begin{align*}
	\sigma = \frac{\sqrt{2 m \log(1/\delta)} + \sqrt{2 m \log(1/\delta) + 2\epsilon m}}{2\epsilon}.
	\end{align*}
	Therefore, the choice of $K$ is
	$$K = \frac{6\sqrt{\log(2n)}(\sqrt{m \log(1/\delta)} + \sqrt{m \log(1/\delta) + \epsilon m})}{\epsilon} = \tilde{O}\Big(\frac{\sqrt{m}}{\epsilon}\Big),
	$$
	where $\epsilon$ is assumed to be small.
	Put everything together, and the excess risk bound is
	\begin{align*}
	\mathtt{Err}(\hat{h}^S) - \mathtt{Err}(h^*) \leq \tilde{O}\bigg(\frac{d}{m} + \Big(\frac{d\sqrt{m}}{n\epsilon}\Big)^\frac{\tau}{2-\tau}\bigg).
	\end{align*}
	
\end{proof}

\begin{remark}
When $m$ is sufficient large ($\frac{d}{m} < \Big(\frac{d\sqrt{m}}{n\epsilon}\Big)^\frac{\tau}{2-\tau}$), it suffices to use a subset of randomly chosen data points to optimize the bound and we obtain an excess risk bound of  $\tilde{O}\Big(\big(\frac{d^{3/2}}{n\epsilon}\big)^\frac{2\tau}{4-\tau} \Big)$. When $\tau=1$, this yields the $\frac{d}{(n\epsilon)^{2/3}}$ rate that matches \citet{bassily2018model}'s analysis of SVT-based PATE.  To avoid any confusions,  Gaussian mechanism-based PATE is still theoretically inferior comparing to SVT-based PATE as we established in Theorem~\ref{thm-ug-psq}.
	\end{remark}

\section{Deferred Proofs of Our Results in Main Paper}\label{sec:deferred_proofs}

In this section, we present full proofs of our results shown in the main paper.

\begin{proposition}[Restatement of Proposition \ref{prop:SVT_works}]
	Assume the learning problem with $n/K$ i.i.d. data points satisfies $(\nu,\xi)$-approximate high-margin condition.  Let Algorithm~\ref{alg-priv-agg} be instantiated with parameters
	$$T \geq  \nu m +  \sqrt{2\nu m \log \Big(\frac{3}{\gamma}\Big)} + \frac{2}{3}\log \Big(\frac{3}{\gamma}\Big)$$
	$$K \geq \max \Big\{ \frac{2\log(3m/\gamma)}{\xi^2}, \frac{3\lambda \big(\log (4m /\delta) +\log(3m/\gamma)\big)}{\xi}\Big\},\footnote{$\lambda = (\sqrt{2T(\epsilon + \log(2/\delta))} + \sqrt{2T\log(2/\delta)})/\epsilon$ according to Algorithm~\ref{alg-priv-agg}.}$$
	then with high probability (over the randomness of the $n$ i.i.d. samples of the private dataset, $m$ i.i.d. samples of the public dataset, and that of the randomized algorithm), Algorithm~\ref{alg-priv-agg} finishes all $m$ rounds and the output is the same as $h^\mathtt{agg}_{\infty}(x_i)$ for all but $T$ of the $i\in[m]$.
\end{proposition}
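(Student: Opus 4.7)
The plan is to decompose the event that the algorithm both finishes all $m$ rounds and agrees with $h^{\mathtt{agg}}_\infty$ on all but at most $T$ queries into three high-probability sub-events: (A) only a small fraction of the public queries have small expected margin; (B) on every public query with large expected margin, the $K$ teachers' realized votes concentrate enough to produce a large empirical margin $\widehat{\Delta}(x_j)$; and (C) the Laplace noises used inside Algorithm~\ref{alg-priv-agg} are uniformly bounded so that the SVT test fires on every large-margin query. The first event uses only the randomness of the public sample, the second uses the teacher training data conditionally on the public sample, and the third uses the internal coins of the algorithm, so the three may be combined by a union bound.

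For step (A), call $x_j$ \emph{bad} if $\Delta_{n/K}(x_j) \leq \xi$. The $(\nu,\xi)$-approximate high-margin condition says each $x_j$ is bad with probability at most $\nu$. Applying Bernstein's inequality to the sum of $m$ i.i.d.\ Bernoulli$(\leq \nu)$ indicators, the number of bad queries exceeds $\nu m + \sqrt{2\nu m \log(3/\gamma)} + \tfrac{2}{3}\log(3/\gamma)$ with probability at most $\gamma/3$, so with probability $\geq 1-\gamma/3$ the number of bad queries is bounded by the chosen $T$.

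For step (B), fix any good query $x_j$ and condition on $x_j$. By definition of $h^{\mathtt{agg}}_\infty$ and the margin, $\E[\hat{h}_k(x_j)\mid x_j]$ lies on the $h^{\mathtt{agg}}_\infty(x_j)$-side of $1/2$ by at least $\xi$, and the $\hat h_k$'s are independent across $k$ since they are trained on disjoint splits. Hoeffding's inequality then gives
\begin{equation*}
\Pr\!\left[\,\Bigl|\tfrac{1}{K}\sum_{k=1}^K \hat{h}_k(x_j) - \E[\hat{h}_1(x_j)\mid x_j]\Bigr| > \tfrac{\xi}{2}\,\right] \leq 2\exp(-K\xi^2/2).
\end{equation*}
With $K \geq 2\log(3m/\gamma)/\xi^2$ this is at most $\gamma/(3m)$, so a union bound over the $m$ queries yields that, with probability $\geq 1 - \gamma/3$, every good query satisfies $\widehat{\Delta}(x_j) \geq K\xi$ and the majority vote $\hat{h}^{\mathtt{agg}}(x_j)$ equals $h^{\mathtt{agg}}_\infty(x_j)$.

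Step (C) is a standard Laplace tail and union bound: with probability $\geq 1-\gamma/3$, every Laplace sample drawn in Algorithm~\ref{alg-priv-agg} (there are at most $m+T+1$ of them) has magnitude at most $O(\lambda \log(m/\gamma))$. Under this event, for a good query, $\mathrm{dist}_j \geq \lceil K\xi/2 \rceil - 1$ and the test $\widehat{\mathrm{dist}}_j > \hat{w}$ reduces to checking $K\xi/2 - O(\lambda\log(m/\gamma)) > 3\lambda\log(2(m+T)/\delta) + O(\lambda \log(m/\gamma))$, which is ensured by $K \geq 3\lambda(\log(4m/\delta) + \log(3m/\gamma))/\xi$. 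Hence every good query passes SVT and is assigned the label $\hat{h}^{\mathtt{agg}}(x_j) = h^{\mathtt{agg}}_\infty(x_j)$; only bad queries can produce $\perp$ or disagree with $h^{\mathtt{agg}}_\infty$, and by step (A) there are at most $T$ of them, so the counter $c$ never reaches $T$ and the algorithm terminates after $m$ rounds. The main obstacle is not any single inequality but bookkeeping the three independent sources of randomness and tuning the slack between $K\xi$ and the SVT threshold $w + O(\lambda \log(m/\gamma))$; the given choice of $K$ is essentially the smallest that leaves room for both the Hoeffding slack and the Laplace slack simultaneously.
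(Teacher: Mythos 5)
Your proposal is correct and follows essentially the same route as the paper's proof: a Bernstein bound over the public sample to control the number of small-expected-margin queries by $T$, a Hoeffding bound plus union bound over the independent teachers to show every large-expected-margin query has realized margin $\Omega(K\xi)$ and majority vote equal to $h^{\mathtt{agg}}_\infty$, and a Laplace tail bound plus union bound over the algorithm's internal noise to verify the SVT test fires on all such queries, with the three events combined at level $\gamma/3$ each. The only differences are cosmetic (you budget the Hoeffding slack as $\xi/2$ so that $\widehat{\Delta}(x_j)\geq K\xi$ directly, whereas the paper keeps the $\sqrt{2K\log(m/\gamma_3)}$ deviation explicit and absorbs it using $K\geq 2\log(3m/\gamma)/\xi^2$), and these do not affect correctness.
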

\begin{proof}
By the Bernstein's inequality, 	with probability $\geq 1-\gamma_2$ over the i.i.d. samples of the public data, the number of queries $j\in[m]$ with $\Delta_{n/K}(x_j)\leq\xi$ is smaller than $\nu m +  \sqrt{2\nu m \log (1/\gamma_2)} + \frac{2}{3}\log(1/\gamma_2)$. $T$ is an upper bound of the above quantity if we choose $\gamma_2 = \gamma/3$.
	
Conditioning on the above event, by Hoeffding's inequality and a union bound, with probability $\geq 1- \gamma_3$ over the i.i.d. samples of the private data (hence the $K$ i.i.d. teacher classifiers), for all $m-T$ queries with $\Delta_{n/K}(x_i)$ larger than $\xi$,  the realized margin (defined in \eqref{eq:margin}) obeys that
\begin{align*}
\widehat{\Delta}(x_j) \geq&\ \E[ \widehat{\Delta}(x_j)  | x_j] -  \sqrt{2K\log \Big(\frac{m}{\gamma_3}\Big)} \\
=&\  2K\Delta_{n/K}(x_i) -  \sqrt{2K\log \Big(\frac{m}{\gamma_3}\Big)} \\
\geq&\ 2K\xi -  \sqrt{2K\log \Big(\frac{m}{\gamma_3}\Big)}.
\end{align*}

It remains to check that under our choice of $T,K$,  $\widehat{\mathrm{dist}}_j > \hat{w} $ for all $j\in[m]$ except the (up to) $T$ exceptions.

	By the tail of Laplace distribution and a union bound, with probability $\geq 1-\gamma_1$, all $m$ Laplace random variables that perturb the distance to stability $\widehat{\mathrm{dist}}_j$ in Algorithm~\ref{alg-dist-ins} is larger than $-2\lambda\log((m+T)/(2\gamma_1))$ and all $T$ Laplace random variables that perturb the threshold $w$ is smaller than $\lambda \log((m+T)/(2\gamma_1))$, where $\lambda$ is chosen according to Algorithm~\ref{alg-priv-agg}. We simplify the above bound by using $T<m$.

It suffices that $K$ is chosen such that
$$
2K\xi -  \sqrt{2K\log \Big(\frac{m}{\gamma_3}\Big)} -  2\lambda\log\Big(\frac{m}{\gamma_1}\Big) > w + \lambda \log\Big(\frac{m}{\gamma_1}\Big).
$$
Substitute Algorithm~\ref{alg-priv-agg}'s choice $w = 3 \lambda \log (2(m+T) /\delta)\leq 3\lambda \log(4m / \delta)$. Assume $K \geq 2\log(m/\gamma_3)/\xi^2$, we have $2K\xi -  \sqrt{2K\log(m/\gamma_3)} \geq K\xi$, thus it suffices that further $K\xi > 3\lambda \big(\log (4m /\delta) +\log(m/\gamma_1) \big)$.

The proof is complete by taking $\gamma_2 = \gamma_3=\gamma/3$ and take union bound over all high probability events described above.
\end{proof}

\begin{theorem}[Restatement of Theorem \ref{thm-agree-infty}]
	Assume the learning problem with $n/K$ i.i.d. data points satisfies $(\nu,\xi)$-approximate high-margin condition and let $K,T$ be chosen according to Proposition~\ref{prop:SVT_works}, furthermore assume that the privacy parameter of choice $\epsilon \leq \log(2/\delta)$, then
	the output classifier $\hat{h}^S$ of Algorithm \ref{alg-psq} in the agnostic setting satisfies that with probability $\geq 1-2\gamma$,
	\begin{align*}
	\mathtt{Err}(\hat{h}^S) - \mathtt{Err}(h^{\mathtt{agg}}_{\infty}) \leq \min_{h\in\cH}\Dis(h,h^{\mathtt{agg}}_{\infty}) + \frac{2T}{m}+ \tilde{O}\Big(\sqrt{\frac{d}{m}}\Big)\leq \min_{h\in\cH}\Dis(h,h^{\mathtt{agg}}_{\infty}) + 2\nu + \tilde{O}\Big(\sqrt{\frac{d}{m}}\Big) .
	\end{align*}
\end{theorem}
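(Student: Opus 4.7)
The plan is to adapt the decomposition used in the proof of Theorem~\ref{thm-ug-psq}, replacing the role of $h^*$ (which the teachers concentrate on under TNC) by $h^{\mathtt{agg}}_\infty$ (which the teachers concentrate on under the approximate high-margin condition, by Proposition~\ref{prop:SVT_works}). The catch is that $h^{\mathtt{agg}}_\infty$ is no longer in $\cH$, so the ``realizable in the student problem'' argument from Theorem~\ref{thm-ug-psq} now picks up an extra unavoidable approximation-error term $\min_{h\in\cH}\Dis(h,h^{\mathtt{agg}}_\infty)$. Because we can no longer invoke TNC/Bernstein to sharpen the uniform convergence, I will use the standard Hoeffding-type uniform convergence bound on the student side, which is the source of the $\tilde{O}(\sqrt{d/m})$ term rather than the faster $\tilde{O}(d/m)$ rate in Theorem~\ref{thm-ug-psq}.

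Concretely, first I would write
\begin{align*}
\mathtt{Err}(\hat{h}^S) - \mathtt{Err}(h^{\mathtt{agg}}_\infty) \;\leq\; \Dis(\hat{h}^S,\,h^{\mathtt{agg}}_\infty)
\end{align*}
by the triangle inequality for the $0{-}1$ loss. Then I would apply Hoeffding-type uniform convergence (the first statement of Lemma~\ref{lem-dis-emp} with the pseudo-label vector $z = h^{\mathtt{agg}}_\infty(x)$) to pass to the empirical disagreement on the $m$ public points: with probability $\geq 1-\gamma$,
\begin{align*}
\Dis(\hat{h}^S,\,h^{\mathtt{agg}}_\infty) \;\leq\; \widehat{\Dis}(\hat{h}^S,\,h^{\mathtt{agg}}_\infty) + \tilde{O}\!\left(\sqrt{d/m}\right).
\end{align*}
Next I would split using the privately released labels $\hat{h}^{\mathtt{priv}}$ via the triangle inequality:
\begin{align*}
\widehat{\Dis}(\hat{h}^S,h^{\mathtt{agg}}_\infty) \;\leq\; \widehat{\Dis}(\hat{h}^S,\hat{h}^{\mathtt{priv}}) + \widehat{\Dis}(\hat{h}^{\mathtt{priv}},h^{\mathtt{agg}}_\infty).
\end{align*}

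For the first summand, I use that $\hat{h}^S$ is the ERM against the pseudo-labels $\hat{h}^{\mathtt{priv}}$, so for the comparator $\tilde{h} := \argmin_{h\in\cH}\Dis(h,h^{\mathtt{agg}}_\infty)$ we get $\widehat{\Dis}(\hat{h}^S,\hat{h}^{\mathtt{priv}}) \leq \widehat{\Dis}(\tilde{h},\hat{h}^{\mathtt{priv}}) \leq \widehat{\Dis}(\tilde{h},h^{\mathtt{agg}}_\infty)+\widehat{\Dis}(\hat{h}^{\mathtt{priv}},h^{\mathtt{agg}}_\infty)$, again by the triangle inequality. A further application of Lemma~\ref{lem-dis-emp} (this time with $z=h^{\mathtt{agg}}_\infty(x)$ and $h=\tilde{h}$, which is data-independent so the uniform bound is not even required) converts $\widehat{\Dis}(\tilde{h},h^{\mathtt{agg}}_\infty)$ into $\Dis(\tilde{h},h^{\mathtt{agg}}_\infty)+\tilde{O}(\sqrt{d/m}) = \min_{h\in\cH}\Dis(h,h^{\mathtt{agg}}_\infty)+\tilde{O}(\sqrt{d/m})$. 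For the second summand $\widehat{\Dis}(\hat{h}^{\mathtt{priv}},h^{\mathtt{agg}}_\infty)$, I invoke Proposition~\ref{prop:SVT_works} (which holds with probability $\geq 1-\gamma$ under the stated $(K,T)$), giving that $\hat{h}^{\mathtt{priv}}(x_i)$ differs from $h^{\mathtt{agg}}_\infty(x_i)$ on at most $T$ out of $m$ indices, so this summand is at most $T/m$. Chaining everything and taking a union bound over the two high-probability events (hence the $1-2\gamma$) yields the first inequality in the theorem.

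The second inequality is a direct substitution: the choice of $T$ in Proposition~\ref{prop:SVT_works} is $T \leq \nu m + \tilde{O}(\sqrt{\nu m} + \log(1/\gamma))$, so $2T/m \leq 2\nu + \tilde{O}(\sqrt{\nu/m})$, and the lower-order term can be absorbed into $\tilde{O}(\sqrt{d/m})$. The main conceptual obstacle here — and the reason for introducing the new comparator $h^{\mathtt{agg}}_\infty$ — is that Algorithm~\ref{alg-psq} is fundamentally trying to fit the voting classifier rather than $h^*$, and the bound must reflect that via the approximation term $\min_{h\in\cH}\Dis(h,h^{\mathtt{agg}}_\infty)$. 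The technical obstacle is bookkeeping: making sure the triangle-inequality split is arranged so that the ERM optimality of $\hat{h}^S$ applies against an in-class comparator $\tilde{h}\in\cH$, since $h^{\mathtt{agg}}_\infty$ itself is not admissible as a direct comparator inside $\cH$.
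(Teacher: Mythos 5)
Your proposal is correct and follows essentially the same route as the paper's proof: the identical triangle-inequality decomposition through $\hat{h}^{\mathtt{priv}}$, ERM optimality of $\hat{h}^S$ against an in-class comparator $\tilde{h}$, uniform convergence with $z=h^{\mathtt{agg}}_\infty(x)$, and Proposition~\ref{prop:SVT_works} to bound $\widehat{\Dis}(\hat{h}^{\mathtt{priv}},h^{\mathtt{agg}}_\infty)\leq T/m$. The only (harmless, arguably cleaner) deviation is that you take $\tilde{h}$ to be the population minimizer of $\Dis(\cdot,h^{\mathtt{agg}}_\infty)$ and add an explicit pointwise concentration step, whereas the paper uses the empirical minimizer.
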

\begin{proof}
We follow a similar argument as in the proof of Theorem~\ref{thm-ug-psq}, but replace $h^*$ with $h^{\mathtt{agg}}_{\infty}$. Define $\tilde{h} = \argmin_{h \in \cH} \widehat{\Dis}(h, h^{\mathtt{agg}}_{\infty})$. By the triangular inequality of the $0-1$ error and Lemma~\ref{lem-dis-emp} in Appendix \ref{sec-lemma},
\begin{align}
\mathtt{Err}(\hat{h}^S) - \mathtt{Err}(h^\mathtt{agg}_{\infty})\leq \Dis(\hat{h}^S,h^\mathtt{agg}_{\infty})
\leq \widehat{\Dis}(\hat{h}^S,h^\mathtt{agg}_{\infty}) + \tilde{O}\Big(\sqrt{\frac{d}{m}}\Big).\label{eq:agn-derive1}
\end{align}
By the triangular inequality, we have $\widehat{\Dis}(\hat{h}^S,h^\mathtt{agg}_{\infty}) \leq \widehat{\Dis}(\hat{h}^S,\hat{h}^\mathtt{priv}) + \widehat{\Dis}(\hat{h}^\mathtt{priv},h^\mathtt{agg}_{\infty})$, therefore,
\begin{align*}
\eqref{eq:agn-derive1} &\leq \widehat{\Dis}(\hat{h}^S,\hat{h}^\mathtt{priv}) + \widehat{\Dis}(\hat{h}^\mathtt{priv},h^\mathtt{agg}_{\infty}) + \tilde{O}\Big(\sqrt{\frac{d}{m}}\Big)\\
&\leq \widehat{\Dis}(\tilde{h},\hat{h}^\mathtt{priv}) + \widehat{\Dis}(\hat{h}^\mathtt{priv},h^\mathtt{agg}_{\infty}) + \tilde{O}\Big(\sqrt{\frac{d}{m}}\Big)\\
&\leq \widehat{\Dis}(\tilde{h},{h}^\mathtt{agg}_
\infty) + 2\widehat{\Dis}(\hat{h}^\mathtt{priv},h^\mathtt{agg}_{\infty}) + \tilde{O}\Big(\sqrt{\frac{d}{m}}\Big)\\
&\leq \min_{h\in\cH}\Dis(h,h^{\mathtt{agg}}_{\infty}) + 2\widehat{\Dis}(\hat{h}^\mathtt{priv},h^\mathtt{agg}_{\infty}) + \tilde{O}\Big(\sqrt{\frac{d}{m}}\Big).
\end{align*}
In the second line, we applied the fact that $\hat{h}^S = \argmin_{h \in \cH} \widehat{\Dis}(h, \hat{h}^{\mathtt{priv}})$; in the third line, we applied triangular inequality again and the last line is true because $\tilde{h} = \argmin_{h \in \cH} \widehat{\Dis}(h, h^{\mathtt{agg}}_{\infty})$.

Recall that $T$ is the unstable cutoff in Algorithm \ref{alg-psq}. The proof completes by invoking Proposition~\ref{prop:SVT_works} which implies that $\widehat{\Dis}(\hat{h}^\mathtt{priv},h^\mathtt{agg}_\infty) \leq T/m$ with high probability.

\end{proof}

\begin{theorem}[Restatement of Theorem \ref{thm-vote-space}]
	Under the same assumption of Theorem~\ref{thm-agree-infty}, suppose we train an ensemble classifier within the voting hypothesis space $\mathrm{Vote}_K(\cH)$ in the student domain, then the output classifier $\hat{h}^S$ of Algorithm \ref{alg-psq} in the agnostic setting satisfies that with probability $\geq 1-2\gamma$,
\begin{align*}
	\mathtt{Err}(\hat{h}^S) - \mathtt{Err}(h^{\mathtt{agg}}_{\infty}) \leq \frac{4T}{m} + \frac{5(Kd + \log(4/\gamma))}{m}= \tilde{O}\Big( \nu + \frac{\log(4/\gamma)}{m} + \frac{d \sqrt{\nu}}{\xi \sqrt{m}}\Big).
\end{align*}
%
\end{theorem}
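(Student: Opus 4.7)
The plan is to exploit a single observation: although $h^\mathtt{agg}_\infty$ need not lie in $\mathrm{Vote}_K(\cH)$, the finite voting classifier $\hat{h}^\mathtt{agg}$ does, and by the SVT guarantee (Lemma~\ref{lem:utility_svt} / Proposition~\ref{prop:SVT_works}) $\hat{h}^\mathtt{agg}$ agrees with the released pseudo-labels $\hat{h}^\mathtt{priv}$ on all but $T$ of the $m$ student points. Thus the student problem of fitting $\hat{h}^\mathtt{priv}$ over the enlarged class $\mathrm{Vote}_K(\cH)$ is \emph{approximately realizable}, so I can replace the slow Hoeffding rate $\tilde{O}(\sqrt{d/m})$ used in Theorem~\ref{thm-agree-infty} by the fast Bernstein rate with the enlarged VC-dimension $Kd$, and the net cost of enlarging the class is more than offset by dropping a $\sqrt{m}$ factor.

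More concretely, I would first write $\mathtt{Err}(\hat{h}^S)-\mathtt{Err}(h^\mathtt{agg}_\infty)\leq \Dis(\hat{h}^S,h^\mathtt{agg}_\infty)$ and bound the empirical disagreement by
\begin{align*}
\widehat{\Dis}(\hat{h}^S,h^\mathtt{agg}_\infty)\leq \widehat{\Dis}(\hat{h}^S,\hat{h}^\mathtt{priv})+\widehat{\Dis}(\hat{h}^\mathtt{priv},h^\mathtt{agg}_\infty).
\end{align*}
The second term is at most $T/m$ directly from Proposition~\ref{prop:SVT_works}. For the first, since $\hat{h}^\mathtt{agg}\in\mathrm{Vote}_K(\cH)$ and $\hat{h}^S$ is the empirical minimizer of $\widehat{\Dis}(\,\cdot\,,\hat{h}^\mathtt{priv})$ within $\mathrm{Vote}_K(\cH)$, I get $\widehat{\Dis}(\hat{h}^S,\hat{h}^\mathtt{priv})\leq\widehat{\Dis}(\hat{h}^\mathtt{agg},\hat{h}^\mathtt{priv})\leq T/m$ by Lemma~\ref{lem:utility_svt}. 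Hence $\widehat{\Dis}(\hat{h}^S,h^\mathtt{agg}_\infty)\leq 2T/m$. Applying the uniform Bernstein bound of Lemma~\ref{lem-dis-emp} to the pair $(\hat{h}^S,h^\mathtt{agg}_\infty)$ inside $\mathrm{Vote}_K(\cH)$, and using $a+2\sqrt{ab}+4b\leq 2a+5b$, yields $\Dis(\hat{h}^S,h^\mathtt{agg}_\infty)\leq 2\widehat{\Dis}(\hat{h}^S,h^\mathtt{agg}_\infty)+\tfrac{5(Kd+\log(4/\gamma))}{m}\leq \tfrac{4T}{m}+\tfrac{5(Kd+\log(4/\gamma))}{m}$, which is the first form of the bound. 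Substituting $T=\tilde{O}(\nu m+\log(1/\gamma))$ and $K=\tilde{O}(\sqrt{T}/\xi)=\tilde{O}(\sqrt{\nu m}/\xi)$ from Proposition~\ref{prop:SVT_works} converts $Kd/m$ into $\tilde{O}\bigl(d\sqrt{\nu}/(\xi\sqrt{m})\bigr)$ and $T/m$ into $\tilde{O}(\nu)$, producing the stated closed form.

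The main obstacle is a bookkeeping issue rather than a conceptual one: $\hat{h}^\mathtt{priv}$ is defined only on the $m$ student points, so every step involving it must stay at the empirical level, and the transition to population disagreement is made only on the pair $(\hat{h}^S,h^\mathtt{agg}_\infty)$, both of which are bona fide classifiers on $\cX$. A second minor point is the VC-dimension bound for $\mathrm{Vote}_K(\cH)$; I would simply invoke the paper's stated inequality $\mathrm{VCdim}(\mathrm{Vote}_K(\cH))\leq Kd$ (a tighter $O(Kd\log K)$ via Sauer's lemma would only affect hidden logs absorbed by the $\tilde{O}$) and confirm that the high-probability events for Proposition~\ref{prop:SVT_works} and for Lemma~\ref{lem-dis-emp} can be combined by a union bound, giving the overall probability $\geq 1-2\gamma$.
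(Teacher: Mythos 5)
Your proposal is correct and reaches the paper's bound with the same skeleton --- triangle inequality on empirical disagreements, the $T/m$ bound from Proposition~\ref{prop:SVT_works}, and the uniform Bernstein bound of Lemma~\ref{lem-dis-emp} over $\mathrm{Vote}_K(\cH)$ with dimension $Kd$ --- but it differs in one step in a way worth noting. The paper routes the ERM comparison through $\tilde{h} = \argmin_{h \in \mathrm{Vote}_K(\cH)} \widehat{\Dis}(h, h^{\mathtt{agg}}_{\infty})$ and needs the claim that $h^{\mathtt{agg}}_{\infty} \in \mathrm{Vote}_K(\cH)$ to conclude $\widehat{\Dis}(\tilde{h}, h^{\mathtt{agg}}_{\infty}) = 0$, paying two triangle inequalities to reach $4\widehat{\Dis}(\hat{h}^{\mathtt{priv}}, h^{\mathtt{agg}}_{\infty}) \leq 4T/m$; you instead compare against the finite ensemble $\hat{h}^{\mathtt{agg}}$, which is in $\mathrm{Vote}_K(\cH)$ by construction, and bound $\widehat{\Dis}(\hat{h}^{\mathtt{agg}}, \hat{h}^{\mathtt{priv}}) \leq T/m$ directly from the mechanics of Algorithm~\ref{alg-priv-agg} (at most $T$ outputs are $\perp$ once the algorithm completes all $m$ rounds, and every non-$\perp$ output equals $\hat{h}^{\mathtt{agg}}(x_j)$). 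This yields the same $2T/m$ empirical disagreement and the same final constants, while sidestepping the membership assertion $h^{\mathtt{agg}}_{\infty} \in \mathrm{Vote}_K(\cH)$, which is the shakiest link in the paper's version (the thresholded expectation defining $h^{\mathtt{agg}}_{\infty}$ need not literally be a $K$-vote over $\cH$). The order in which you apply Bernstein (after the empirical chain rather than before) is immaterial. One small bookkeeping correction: the bound $\widehat{\Dis}(\hat{h}^{\mathtt{agg}}, \hat{h}^{\mathtt{priv}}) \leq T/m$ should be attributed to the structure of Algorithm~\ref{alg-priv-agg} under the completion event of Proposition~\ref{prop:SVT_works} rather than to Lemma~\ref{lem:utility_svt}, which is stated for the realizable margin condition $\widehat{\Delta}(x_i) \geq K/3$.
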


\begin{proof}
Define $\hat{h}^S = \argmin_{h \in \mathrm{Vote}_K(\cH)} \widehat{\Dis}(h, \hat{h}^{\mathtt{priv}})$ and $\tilde{h} = \argmin_{h \in \mathrm{Vote}_K(\cH)} \widehat{\Dis}(h, h^{\mathtt{agg}}_{\infty})$. By the triangular inequality of the $0-1$ error,
\begin{align}
\mathtt{Err}(\hat{h}^S) - \mathtt{Err}(h^\mathtt{agg}_{\infty}) &\leq
\Dis(\hat{h}^S, h^\mathtt{agg}_\infty)\nonumber\\
&\leq \widehat{\Dis}(\hat{h}^S,h^\mathtt{agg}_{\infty}) + 2\sqrt{\frac{(Kd + \log(4/\gamma))\widehat{\Dis}(\hat{h}^S, h^\mathtt{agg}_{\infty})}{m}} + \frac{4(Kd + \log(4/\gamma))}{m}\nonumber\\
&\leq 2\widehat{\Dis}(\hat{h}^S,h^\mathtt{agg}_{\infty}) + \frac{5(Kd + \log(4/\gamma))}{m},\label{eq:agn_thm_deriv1}
\end{align}
where the second line follows from the first statement of Lemma \ref{lem-dis-emp} in Appendix \ref{sec-lemma} with $z = h^\mathtt{agg}_{\infty}(x)$ and the third line is due to $a + 2\sqrt{ab} + b \leq 2a + 2b$ for non-negative $a, b$.

By the triangular inequality, we have $\widehat{\Dis}(\hat{h}^S,h^\mathtt{agg}_{\infty}) \leq \widehat{\Dis}(\hat{h}^S,\hat{h}^\mathtt{priv}) + \widehat{\Dis}(\hat{h}^\mathtt{priv},h^\mathtt{agg}_{\infty})$, therefore,
\begin{align*}
\eqref{eq:agn_thm_deriv1} & \leq 2\widehat{\Dis}(\hat{h}^S,\hat{h}^\mathtt{priv}) + 2\widehat{\Dis}(\hat{h}^\mathtt{priv},h^\mathtt{agg}_{\infty}) + \frac{5(Kd + \log(4/\gamma))}{m}\\
& \leq 2\widehat{\Dis}(\tilde{h},\hat{h}^\mathtt{priv}) + 2\widehat{\Dis}(\hat{h}^\mathtt{priv},h^\mathtt{agg}_{\infty}) + \frac{5(Kd + \log(4/\gamma))}{m}\\
& \leq 2\widehat{\Dis}(\tilde{h},h^\mathtt{agg}_{\infty}) + 4\widehat{\Dis}(\hat{h}^\mathtt{priv},h^\mathtt{agg}_{\infty}) + \frac{5(Kd + \log(4/\gamma))}{m}\\
& \leq 4\widehat{\Dis}(\hat{h}^\mathtt{priv},h^\mathtt{agg}_{\infty}) + \frac{5(Kd + \log(4/\gamma))}{m}.
\end{align*}
In the second line, we applied the fact that $\hat{h}^S = \argmin_{h \in \mathrm{Vote}_K(\cH)} \widehat{\Dis}(h, \hat{h}^{\mathtt{priv}})$; in the third line, we applied triangular inequality again and the last line is true because $\widehat{\Dis}(\tilde{h},h^\mathtt{agg}_{\infty}) = 0$ since $\tilde{h}$ is the minimizer and that $h^\mathtt{agg}_{\infty} \in \mathrm{Vote}_K(\cH)$.

Recall that $T$ is the unstable cutoff in Algorithm \ref{alg-psq}. The proof completes by using that $\widehat{\Dis}(\hat{h}^\mathtt{priv},h^\mathtt{agg}_\infty) \leq T/m$ with probability $1-\gamma$ according to Proposition~\ref{prop:SVT_works} and substitute the choices of $T$ and $K$ accordingly.
\end{proof}

\begin{lemma}\label{lem-aa-err}
If the disagreement-based agnostic active learning algorithm is given a stream of $m$ unlabeled data points, then with probability at least $1-\gamma$, the algorithm returns a hypothesis $h$ obeying that,
\begin{align*}
\mathtt{Err}(h) - \mathtt{Err}(h^*) &\lesssim \frac{d \log (\theta(d/m)) + \log (1/\gamma)}{m}+ \sqrt{\frac{\mathtt{Err}(h^*)( d \log (\theta(\mathtt{Err}(h^*)) + \log (1/\gamma))}{m}}.
\end{align*}
\end{lemma}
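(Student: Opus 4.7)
The plan is to adapt the standard analysis of disagreement-based agnostic active learning (see Hanneke, 2014, Chapter 5), which establishes that after processing $m$ unlabeled i.i.d.\ points (even without exhausting a prescribed label budget), the returned hypothesis enjoys the claimed excess risk bound. Since the Algorithm~\ref{alg-active_learning} we stated is essentially Hanneke's ``$\mathrm{RobustCAL}$'' variant with a Bernstein-type elimination radius $U(j,\gamma_j)$, the main work is to carefully book-keep high-probability events across the $O(\log m)$ update epochs and invoke uniform Bernstein concentration localized to the current version space $V$.

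First I would establish that $h^* \in V$ at every update epoch, with probability $\geq 1-\gamma/2$. The elimination rule only removes $h$ from $V$ if $(\widehat{\mathtt{Err}}_Q(h) - \min_{g\in V}\widehat{\mathtt{Err}}_Q(g))|Q| > U(j,\gamma_j)\, j$. Applying the localized Bernstein inequality (together with standard symmetrization and a union bound over the dyadic epochs $j = 2^0, 2^1, \ldots, 2^{\lfloor \log_2 m \rfloor}$, where $\gamma_j = \gamma/(\log_2(2j))^2$ ensures $\sum_j \gamma_j \leq \gamma/2$) gives that for each $h \in V$, the empirical risk difference on the labeled subsample $Q$ concentrates around its expectation at scale $U(j,\gamma_j)$. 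Hence $h^*$ survives all elimination rounds w.h.p.

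Next I would bound, at each epoch $j$, the diameter of the surviving version space in terms of the excess risk. Because labels are only collected inside $\mathtt{DIS}(V)$, and $h, h^*$ agree outside $\mathtt{DIS}(V)$, the empirical risk differences on $Q$ faithfully track true excess risks $\mathtt{Err}(h) - \mathtt{Err}(h^*)$ up to a multiplicative factor of $\Pr[x \in \mathtt{DIS}(V)]$, which by the disagreement coefficient definition is at most $\theta(r)\cdot r$ when $V \subseteq B(h^*, r)$. Combining with Bernstein's inequality on the labeled data yields a recursive inequality of the form
\begin{equation*}
r_{j+1} \;\lesssim\; \frac{d\log\theta(r_j) + \log(1/\gamma_j)}{j} + \sqrt{\frac{\mathtt{Err}(h^*)\bigl(d\log\theta(r_j) + \log(1/\gamma_j)\bigr)}{j}},
\end{equation*}
where $r_j$ is an upper bound on $\max_{h\in V_j}(\mathtt{Err}(h) - \mathtt{Err}(h^*))$. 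Solving this recursion (a standard fixed-point argument, since the right-hand side is increasing in $r_j$ only through the mild $\log \theta(r_j)$ factor, and $\theta$ is non-increasing in its argument) yields $r_j \lesssim \frac{d\log\theta(d/j) + \log(1/\gamma)}{j} + \sqrt{\mathtt{Err}(h^*)(d\log\theta(\mathtt{Err}(h^*)) + \log(1/\gamma))/j}$.

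Finally I would apply this bound at $j = m$ (the total number of unlabeled points streamed), noting that the ``Current Output'' is any element of the current $V$ and therefore inherits the excess-risk bound $r_m$. The main obstacle is the correct handling of the $\log\theta$ factors in the recursion: once $r_j$ drops below $\mathtt{Err}(h^*)$ the Bernstein-type term dominates, and one must argue that $\theta(r_j)$ saturates at $\theta(\mathtt{Err}(h^*))$, giving the second term in the stated bound; above $\mathtt{Err}(h^*)$, the variance term is dominated by the range term and $\theta$ is evaluated at scale $d/j$, giving the first term. A union bound over the $O(\log m)$ epochs absorbs into the $\log(1/\gamma)$ factor via the choice $\gamma_j = \gamma/(\log_2(2j))^2$, completing the argument.
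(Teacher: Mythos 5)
Your proposal is correct and rests on the same underlying machinery as the paper, but you take a much longer road: the paper's entire proof is a two-line citation. It invokes Lemma~3.1 of Hanneke (2014), which states directly that every hypothesis surviving in the version space $V$ satisfies $\mathtt{Err}(h) - \mathtt{Err}(h^*) \leq 2\,U(m,\gamma)$, and then simply substitutes the explicit formula for $U(j,\gamma_j)$ given in Algorithm~\ref{alg-active_learning} at $j=m$. What you have written is, in effect, a re-derivation of that cited lemma from first principles: showing $h^*$ survives all dyadic elimination epochs via localized Bernstein concentration and the union bound built into $\gamma_j = \gamma/(\log_2(2j))^2$, using the fact that hypotheses in $V$ agree outside $\mathtt{DIS}(V)$ so that empirical risk differences on $Q$ (renormalized by $j$ rather than $|Q|$) track true excess risks, and solving the resulting fixed-point recursion for the version-space radius. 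The one place your sketch is slightly loose is the claim that empirical risk differences track excess risks ``up to a multiplicative factor of $\Pr[x\in\mathtt{DIS}(V)]$'': in the algorithm as stated the elimination threshold is compared against $U(j,\gamma_j)\,j$ rather than $U\cdot|Q|$, so the rescaling by the disagreement mass is already absorbed into the normalization, and the disagreement coefficient enters the excess-risk bound only through the $\log\theta(\cdot)$ factors (its multiplicative role is in the label complexity, not the risk). This does not affect the conclusion. In short: your argument is a valid self-contained proof; the paper's is a citation plus substitution. The self-contained route buys transparency about where $\theta$ and the two terms of the bound come from, at the cost of re-proving a known result.
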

\begin{proof}
From Lemma 3.1 of \citet{hanneke2014theory}, we learn that for any hypothesis $h$ survive in version space $V$ must satisfy
\begin{align*}
\mathtt{Err}(h) - \mathtt{Err}(h^*) \leq 2 U(m, \gamma).
\end{align*}
Then by the definition of $U(m, \gamma)$ shown in Algorithm \ref{alg-active_learning}, we have
\begin{align*}
\mathtt{Err}(h) - \mathtt{Err}(h^*) &\lesssim \frac{d \log (\theta(d/m)) + \log (1/\gamma)}{m}+ \sqrt{\frac{\mathtt{Err}(h^*)( d \log (\theta(\mathtt{Err}(h^*)) + \log (1/\gamma))}{m}}.
\end{align*}
\end{proof}

\begin{theorem}[Restatement of Theorem \ref{thm-ug-asq}]
With probability at least $1-\gamma$, there exists universal constants $C_1,C_2$ such that for all
\begin{align*}
\alpha \geq C_1\max \bigg\{\eta^\frac{2}{2-\tau}\Big(\frac{dK\log(n/d)+ \log(2K/\gamma)}{n}\Big)^\frac{\tau}{2-\tau}, \frac{d\log((m+n)/d) + \log(2/\gamma)}{m}\bigg\},
\end{align*}
the output $\hat{h}^S$ of  Algorithm~\ref{alg-asq} with parameter $\ell,K$ satisfying
\begin{align*}
\ell = C_2\theta(\alpha)\Big( 1 + \log \big(\frac{1}{\alpha}\big) \Big) \bigg( d\log(\theta(\alpha)) + \log \Big(\frac{\log(1/\alpha)}{\gamma/2} \Big)\bigg)
\end{align*}
$$K = \frac{6\sqrt{\log(2n)}(\sqrt{\ell \log(1/\delta)} + \sqrt{\ell \log(1/\delta) + \epsilon \ell})}{\epsilon}$$
obeys that 
$$
\mathtt{Err}(\hat{h}^S) - \mathtt{Err}(h^*) \leq \alpha.
$$
Specifically, when we choose
\begin{align*}
\alpha= C_1\max \bigg\{\eta^\frac{2}{2-\tau}\Big(\frac{dK\log(n/d)+ \log(2K/\gamma)}{n}\Big)^\frac{\tau}{2-\tau}, \frac{d\log((m+n)/d) + \log(2/\gamma)}{m}\bigg\},  
\end{align*}
and also $\epsilon\leq \log(1/\delta)$, then it follows that
\begin{align*}
\mathtt{Err}(\hat{h}^S) - \mathtt{Err}(h^*) = \tilde{O}\bigg(\max\Big\{\big(\frac{d^{1.5}\sqrt{\theta(\alpha)\log(1/\delta)}}{n\epsilon}\big)^{\frac{\tau}{2-\tau}}, \frac{d}{m}\Big\}\bigg),
\end{align*}
where $\tilde{O}$ hides logarithmic factors in $m,n,1/\gamma$.
\end{theorem}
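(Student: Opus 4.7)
The plan is to reduce the excess risk of $\hat{h}^S$ to three pieces: (a) the disagreement of each individual teacher with $h^*$, (b) the extra disagreement introduced by the Gaussian-mechanism aggregation in Algorithm~\ref{alg-priv-agg-pate}, and (c) the excess risk of the active learner on the synthetic labeling problem it actually sees. Pieces (a)--(b) follow the blueprint of Theorem~\ref{thm:standard_pate_tnc}; piece (c) is where the active-learning guarantee (Lemma~\ref{lem-aa-err}) enters. The final bound is obtained by a triangle inequality and careful calibration of $K$ and $\ell$.

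First I would control the teacher ensemble. Under the Tsybakov noise condition, Lemma~\ref{lem-tnc} gives, with probability $\geq 1-\gamma/2$, a uniform bound $\Dis(\hat{h}_k,h^*)\lesssim \eta^{2/(2-\tau)}((dK\log(n/d)+\log(2K/\gamma))/n)^{\tau/(2-\tau)}$. Next, conditioning on the teachers, the pseudo-label $\tilde{h}^{\mathtt{priv}}(x)$ is $\1(\sum_k \hat{h}_k(x)+Z\geq K/2)$ with a fresh Gaussian $Z\sim\cN(0,\sigma^2)$ per query; the same Markov / Gaussian-tail trick used in the proof of Theorem~\ref{thm:standard_pate_tnc} (with $\beta=1/n$ and $K\geq 6\sigma\sqrt{\log(2n)}$) yields $\Dis(\tilde{h}^{\mathtt{priv}},h^*)\lesssim \frac{1}{K}\sum_k\Dis(\hat{h}_k,h^*)+\frac{1}{n}$, which is again of the same order as the TNC-based teacher bound. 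The lower-bound hypothesis on $\alpha$ is exactly what is needed so that this quantity is $\leq \alpha/3$.

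Second, I would interpret the active-learning subroutine as running on a \emph{synthetic} learning problem whose marginal on $\cX$ is unchanged (so the disagreement coefficient $\theta(\alpha)$ is the same) but whose labels are the conditionally independent Gaussian-aggregated pseudo-labels $\tilde{h}^{\mathtt{priv}}(x)$. Since $h^*\in\cH$, the best-in-class error on this synthetic problem is at most $\Dis(h^*,\tilde{h}^{\mathtt{priv}})\lesssim \alpha$ by Step~1--2. Applying Lemma~\ref{lem-aa-err} to the synthetic problem with the stated choice of $\ell$ (the standard Hanneke label-complexity formula for target excess risk $\alpha$), and using that the low-noise term $\sqrt{\mathtt{Err}(h^*_{\mathrm{synth}})\cdot(\ldots)/\ell}$ is absorbed because $\mathtt{Err}(h^*_{\mathrm{synth}})\lesssim \alpha$, we obtain $\Dis(\hat{h}^S,\tilde{h}^{\mathtt{priv}})-\Dis(h^*,\tilde{h}^{\mathtt{priv}})\leq \alpha/3$ with probability $\geq 1-\gamma/2$. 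The second condition on $\alpha$, namely $\alpha\gtrsim d\log((m+n)/d)/m$, ensures the unlabeled stream of size $m$ is large enough to drive the active learner's excess risk to $\alpha$. A triangle inequality $\mathtt{Err}(\hat{h}^S)-\mathtt{Err}(h^*)\leq \Dis(\hat{h}^S,h^*)\leq \Dis(\hat{h}^S,\tilde{h}^{\mathtt{priv}})+\Dis(\tilde{h}^{\mathtt{priv}},h^*)$, together with a union bound over the two high-probability events, then yields the desired $\alpha$-bound. The ``specifically'' clause follows by substituting the smallest admissible $\alpha$ and using that $\sigma = \Theta(\sqrt{\ell\log(1/\delta)}/\epsilon)$ when $\epsilon\leq\log(1/\delta)$, so $K=\tilde{O}(\sqrt{\ell\log(1/\delta)}/\epsilon)=\tilde{O}(\sqrt{d\theta(\alpha)\log(1/\delta)}/\epsilon)$, and plugging into $\alpha \lesssim (dK/n)^{\tau/(2-\tau)}$ gives the rate $\tilde{O}((d^{3/2}\sqrt{\theta(\alpha)\log(1/\delta)}/(n\epsilon))^{\tau/(2-\tau)})$.

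The main obstacle is justifying the use of Lemma~\ref{lem-aa-err} on the synthetic problem. The pseudo-labels are not i.i.d.\ from a fixed distribution; they are deterministic functions of the privately trained teachers and the fresh Gaussian noise. The resolution is to condition on the teacher classifiers so that, given $x$, the pseudo-label has the well-defined conditional distribution $\mathrm{Bern}(\P[\tilde{h}^{\mathtt{priv}}(x)=1\mid x,\hat{h}_1,\ldots,\hat{h}_K])$, rendering the per-query labels conditionally i.i.d.\ with effective regression function $r^{\mathtt{priv}}(x)$. Care is also needed because Lemma~\ref{lem-aa-err} states excess risk relative to the Bayes-optimal classifier of the synthetic problem, which need not be $h^*$; but since $h^*\in\cH$, the in-class excess risk is upper bounded by $\Dis(h^*,\tilde{h}^{\mathtt{priv}})$, which is exactly what we have already controlled.
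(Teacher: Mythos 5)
Your proposal is correct and follows essentially the same route as the paper's proof: bound each teacher's disagreement with $h^*$ via TNC (Lemma~\ref{lem-tnc}), transfer that bound to the Gaussian-aggregated pseudo-labeler $\tilde{h}^{\mathtt{priv}}$ by the same Gaussian-tail/Markov argument used for Theorem~\ref{thm:standard_pate_tnc}, treat the student problem as a nearly realizable synthetic learning problem and invoke the agnostic active-learning guarantee, then finish with the triangle inequality and the privacy calibration of $\sigma$ and $K$. Your handling of the conditional-independence subtlety (conditioning on the teachers so the pseudo-labels are well-defined and independent per query) is exactly the observation the paper makes in its Step~2, so there is no gap.
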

\begin{proof}
\textbf{Step 1: Teachers are good.}
By Lemma~\ref{lem-tnc}, with probability at least $1-\gamma/2$, $\forall k \in[K]$ we have
\begin{align*}
\Dis(\hat{h}_k,h^*) \lesssim \eta^\frac{2}{2-\tau}\Big(\frac{dK\log(n/d)+ \log(2K/\gamma)}{n}\Big)^\frac{\tau}{2-\tau}.
\end{align*}

\textbf{Step 2: PATE is just as good.}
Let $\tilde{h}^\mathtt{priv}$ be a randomized classifier from Line 4 of Algorithm~\ref{alg-priv-agg-pate}.  Conditioning on the teachers, this classifier is independent for each input and well-defined for all input. Note that $\hat{h}^\mathtt{priv}$ that uses Algorithm~\ref{alg-priv-agg} do not have these properties.
Let $Z\sim \cN(0,\sigma^2)$. By Gaussian-tail bound and Markov's inequality,
\begin{align*}
&\ \Dis(\tilde{h}^\mathtt{priv},h^*) \\
\leq&\ \P \Big[|Z| \leq \sigma\sqrt{2\log\big(\frac{2}{\beta}\big)}\Big] \P \bigg[\sum_{k=1}^K \1(\hat{h}_k(x)\neq h^*(x))  \geq \frac{K}{2} - |Z| \bigg| |Z| \leq \sigma\sqrt{2\log\big(\frac{2}{\beta}\big)} \bigg]\\
&\quad + \P \Big[|Z| > \sigma\sqrt{2\log\big(\frac{2}{\beta}\big)} \Big] \\
\leq&\ \frac{1}{K/2 - \sigma\sqrt{2\log(2/\beta)}}\sum_{k=1}^K\E[\1(\hat{h}_k(x)\neq h^*(x))] + \beta\\
\leq&\ \frac{3}{K}  \sum_{k=1}^K \Dis(\hat{h}_k,h^*) +  \frac{1}{n}\\
\lesssim&\ \eta^\frac{2}{2-\tau}\Big(\frac{dK\log(n/d)+ \log(2K/\gamma)}{n}\Big)^\frac{\tau}{2-\tau}.
\end{align*}
In the last line, we choose $\beta = 1/n$ and applied the assumption that $K \geq 6\sigma\sqrt{2\log (2n)}$.

\textbf{Step 3: Oracle reduction to active learning bounds.}
Note that $\tilde{h}^\mathtt{priv}$ is the labeling function in the student learning problem. So the above implies that the student learning problem is close to realizable:
$$\min_{h\in\cH}\Dis(\tilde{h}^\mathtt{priv},h) \leq  \Dis(\tilde{h}^\mathtt{priv},h^*) \lesssim \eta^\frac{2}{2-\tau}\Big(\frac{dK\log(n/d)+ \log(2K/\gamma)}{n}\Big)^\frac{\tau}{2-\tau}.$$

By the above, and the agnostic active learning bounds in Lemma~\ref{lem-a2l}, to achieve an excess risk bound of $\alpha\geq \Dis(\tilde{h}^\mathtt{priv},h^*) := \mathtt{Err}^*$ in the student learning problem with probability at least $1-\gamma/2$, with unbounded $m$, it suffices to choose $\ell$ to be 
\begin{align*}
    &\ C \theta(\mathtt{Err}^*+\alpha)\bigg( \frac{(\mathtt{Err}^*)^2}{\alpha^2} + \log\Big(\frac{1}{\alpha}\Big) \bigg) \bigg( d\log(\theta(\mathtt{Err}^*+\alpha)) + \log\Big(\frac{\log(1/\alpha)}{\gamma}\Big)\bigg)\\
    \leq &\ C \theta(\alpha) (1+\log(1/\alpha)) \bigg( d\log(\theta(\alpha)) + \log\Big(\frac{\log(1/\alpha)}{\gamma}\Big)\bigg).
\end{align*}
This implies an error bound of 
\begin{align*}
\Dis(\hat{h}_S,\tilde{h}^\mathtt{priv})&\leq \min_{h\in\cH}\Dis(\tilde{h}^\mathtt{priv},h)  +  \alpha \\
&\leq \Dis(\tilde{h}^\mathtt{priv},h^*)  + \alpha \leq 2\alpha.
\end{align*}
When $m$ is small, we might not have enough data points to obtain $\alpha = O(\Dis(\tilde{h}^\mathtt{priv},h^*))$ in this case the error is dominated by our bounds in Lemma~\ref{lem-aa-err}, which says that we can take $$\alpha = C\max\Big\{ \mathtt{Err}^*, \frac{d\log(m/d) + \log(2/\gamma)}{m}\Big\}.$$

\textbf{Step 4 Putting everything together.}
\begin{align*}
\mathtt{Err}(\hat{h}^S) - \mathtt{Err}(h^*) &\leq \Dis(\hat{h}^S, \tilde{h}^\mathtt{priv}) + \Dis(\tilde{h}^\mathtt{priv}, h^*)\\
&\lesssim \Dis(\tilde{h}^\mathtt{priv},h^*)  + \alpha\\
& \lesssim \eta^\frac{2}{2-\tau}\Big(\frac{dK\log(n/d)+ \log(2K/\gamma)}{n}\Big)^\frac{\tau}{2-\tau} + \alpha.
\end{align*}
The proof is complete by substituting our choice of $K = 6\sigma\sqrt{2\log(2n)}$, and furthermore by the standard privacy calibration of the Gaussian mechanism, our choice of $\sigma$ satisfies that
\begin{align*}
\sqrt{\frac{2\ell \log(1/\delta)}{\sigma^2}} + \frac{\ell}{2\sigma^2} = \epsilon.
\end{align*}
following the specification of Algorithm \ref{alg-priv-agg-pate}.
Solve the equation and we find that
\begin{align*}
\sigma = \frac{\sqrt{2 \ell \log(1/\delta)} + \sqrt{2 \ell \log(1/\delta) + 2\epsilon \ell}}{2\epsilon},
\end{align*}
where $\epsilon$ is assumed to be small.
\end{proof}
\section{Technical Lemmas from Statistical learning Theory}\label{sec-lemma}

In this section, we cite a few results from statistical learning theory that we will use as part of our analysis.

\begin{lemma}[Pointwise convergence \citep{bousquet2004introduction}]\label{lem-gen-bern}
Let $(x,z)$ be drawn from any distribution $\cD$ supported on $\cX\times \cY$. Let $\Dis$ and $\widehat{\Dis}$ be the expected and empirical disagreement evaluated on $n$ i.i.d. samples from $\cD$. For each fixed $h \in \cH$, the following generalization error bound holds with probability $1-\gamma$,
\begin{align*}
\Dis(h,z) \leq \widehat{\Dis}(h,z) + \sqrt{\frac{2\Dis(h,z)\log(1/\gamma)}{n}} + \frac{2\log(1/\gamma)}{3n},
\end{align*}
where $n$ is the number of data points.
\end{lemma}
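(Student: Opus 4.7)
The statement is the standard one-sided Bernstein inequality applied to the Bernoulli random variables $X_i := \mathbbm{1}(h(x_i) \neq z_i)$, so the plan is a direct invocation of Bernstein and then solving the resulting quadratic in $\sqrt{\Dis(h,z) - \widehat{\Dis}(h,z)}$ (or equivalently bounding the deviation by its two natural terms).

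First I would fix $h \in \cH$ and define $X_i = \mathbbm{1}(h(x_i) \neq z_i)$ for $i \in [n]$, which are i.i.d.\ Bernoulli variables with mean $p := \Dis(h,z)$. Their empirical average is $\widehat{\Dis}(h,z)$. Because $X_i \in [0,1]$, the centered variables $X_i - p$ are bounded by $1$ in absolute value and have variance $p(1-p) \le p$. Bernstein's inequality then gives, for every $t > 0$,
\begin{equation*}
\P\!\left( \Dis(h,z) - \widehat{\Dis}(h,z) \ge t \right) \;\le\; \exp\!\left(-\,\frac{n t^2}{2 p + \tfrac{2}{3} t}\right).
\end{equation*}
Setting the right-hand side equal to $\gamma$ yields the inequality $n t^2 \le 2 \log(1/\gamma)\, p + \tfrac{2}{3} \log(1/\gamma)\, t$, which holds with probability at least $1-\gamma$.

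Next I would solve this quadratic inequality in $t$. Using the elementary fact that $a t^2 \le b + c t$ implies $t \le \sqrt{b/a} + c/a$ (with $a = n$, $b = 2 \log(1/\gamma)\, p$, $c = \tfrac{2}{3}\log(1/\gamma)$), we obtain
\begin{equation*}
t \;\le\; \sqrt{\frac{2\,\Dis(h,z)\,\log(1/\gamma)}{n}} \;+\; \frac{2 \log(1/\gamma)}{3 n}.
\end{equation*}
Rearranging $\Dis(h,z) - \widehat{\Dis}(h,z) \le t$ yields exactly the stated bound.

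The proof is essentially routine; the only mild subtlety is keeping the variance proxy as $p = \Dis(h,z)$ (rather than $p(1-p)$) on the right-hand side so that the final bound matches the form in the statement, and carefully inverting the Bernstein tail to separate the $\sqrt{\cdot/n}$ term from the $1/n$ term. No uniform convergence over $\cH$ is required since $h$ is fixed, so no VC-dimension machinery enters; this is purely a concentration statement for a single Bernoulli sum.
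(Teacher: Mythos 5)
Your proposal is correct and follows exactly the route the paper intends: the paper gives no detailed proof, remarking only that the lemma ``is a standard application of the Bernstein's inequality,'' and your argument fills in precisely that application (Bernoulli indicators with variance proxy $p = \Dis(h,z)$, one-sided Bernstein tail, and the standard quadratic inversion $at^2 \le b + ct \Rightarrow t \le \sqrt{b/a} + c/a$), all of which checks out.
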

This is a standard application of the Bernstein's inequality.

\begin{lemma}[Uniform convergence \citep{bousquet2004introduction}]\label{lem-dis-emp}
Under the same conditions of Lemma~\ref{lem-gen-bern}, and in addition assume that  $d$ is the VC-dimension of $\cH$,
Then with probability at least $1-\gamma$, $\forall h \in \cH$ simultaneously,
\begin{align*}
\Dis(h,z) - \widehat{\Dis}(h,z) \leq 2\sqrt{\frac{(d + \log(4/\gamma)) \widehat{\Dis}(h,z)}{n}} + \frac{4(d + \log(4/\gamma))}{n}.
\end{align*}
and
\begin{align*}
\Dis(h,z) - \widehat{\Dis}(h,z) \leq 2\sqrt{\frac{(d + \log(4/\gamma)) \Dis(h,z)}{n}} + \frac{4(d + \log(4/\gamma))}{n}.
\end{align*}
\end{lemma}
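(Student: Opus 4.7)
The plan is to view this as a uniform-Bernstein type deviation inequality for the $0$-$1$ loss class $\cF = \{(x,z) \mapsto \1(h(x) \neq z) : h \in \cH\}$. Observe first that since $z \in \{0,1\}$, the loss class $\cF$ has VC dimension at most $d$ (composing $h(x)$ with the map $b\mapsto \1(b\neq z)$ either keeps or flips the classifier output). Writing $p_h = \Dis(h,z) = \E[\1(h(x)\neq z)]$ and $\hat p_h = \widehat{\Dis}(h,z)$, so that $p_h$ is exactly the mean of the loss and $\mathrm{Var}(\1(h(x)\neq z)) = p_h(1-p_h)\leq p_h$, the goal becomes a uniform Bernstein inequality for $\cF$.

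The workhorse will be the classical relative (or ``ratio-type'') Vapnik–Chervonenkis inequality: with probability at least $1-\gamma$, simultaneously for all $h\in\cH$,
\begin{equation*}
\Dis(h,z) - \widehat{\Dis}(h,z) \;\leq\; 2\sqrt{\frac{(d+\log(4/\gamma))\,\Dis(h,z)}{n}} + \frac{4(d+\log(4/\gamma))}{n},
\end{equation*}
which is exactly the second stated inequality. I would derive it in three standard moves: (i) a ghost-sample symmetrization that replaces the population $\Dis(h,z)$ by the ghost empirical $\widehat{\Dis}'(h,z)$ on an i.i.d.\ copy of the sample and absorbs the variance normalizer $\sqrt{p_h}$ into $\sqrt{(\hat p_h+\hat p'_h)/2}$; (ii) a Rademacher-sign argument conditional on the union of the two samples, where by Sauer–Shelah the restriction of $\cF$ to $2n$ points has cardinality at most $(2en/d)^d$, enabling a finite union bound combined with a variance-aware Hoeffding/Bernstein tail; (iii) collecting the resulting $\exp(-nt^2/C)$ tail and inverting it to solve for $t$, with the polynomial-in-$n$ factor absorbed into the $d$ term (the mild convention used in the statement, cf.\ Bousquet's survey).

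For the first inequality, I would not re-symmetrize but derive it algebraically from the second. Setting $A = d+\log(4/\gamma)$, $p=p_h$, $\hat p=\hat p_h$, the bound $p - \hat p \leq 2\sqrt{Ap/n}+4A/n$ is a quadratic in $\sqrt{p}$: writing it as $(\sqrt{p})^2 - 2\sqrt{A/n}\,\sqrt{p} - (\hat p + 4A/n) \leq 0$ and using $\sqrt{a+b}\leq \sqrt{a}+\sqrt{b}$ on the resulting square root gives $p \leq \hat p + 2\sqrt{A\hat p/n} + O(A/n)$, which after relabeling constants is the first inequality.

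The main obstacle is step (i) of the symmetrization: because of the variance normalizer $\sqrt{\Dis(h,z)}$, this is a multiplicative rather than additive deviation inequality, so the usual one-line ghost-sample replacement needs the more careful Vapnik–Chervonenkis comparison lemma to show that the event $\{(p_h-\hat p_h)/\sqrt{p_h}\geq t\}$ is dominated, up to a constant factor in probability, by the analogous event on the symmetrized statistic. Everything downstream (Sauer–Shelah, Rademacher tail, quadratic inversion) is then routine.
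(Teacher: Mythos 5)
The paper does not actually prove this lemma---it is cited verbatim from Bousquet et al.\ (2004)---and your sketch is precisely the standard argument behind that citation: reduce to the $0$-$1$ loss class of VC dimension at most $d$, apply the relative (ratio-type) VC/Bernstein inequality to get the population-normalized second bound, and then invert the quadratic in $\sqrt{\Dis(h,z)}$ to obtain the empirically-normalized first bound. The only quantitative caveat is that to recover the stated constant $4(d+\log(4/\gamma))/n$ in the first inequality you should invert the tight form $p-\hat p\le 2\sqrt{Ap/n}$ rather than the version already carrying the additive $4A/n$ term (which inflates the constant to roughly $6+2\sqrt{5}$), and, as you correctly flag, the $\log(2en/d)$ growth-function factor is being absorbed into the $d$ term by the paper's convention of hiding logarithmic factors.
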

The above lemma is simply the uniform Bernstein's inequality over a hypothesis class with VC-dimension $d$.
We will be taking $z$ to be $h^*$ in the cases when we work with noise conditions and $h^{\mathtt{agg}}_{\infty}(x)$ in the agnostic case.

\begin{lemma}[Learning bound \citep{bousquet2004introduction}]\label{lem-excess}
Let $d$ be the VC-dimension of $\cH$, the excess risk is bounded with probability $1-\gamma$,
\begin{align*}
\mathtt{Err}(\hat{h}) \leq \mathtt{Err}(h^*) + 2\sqrt{\mathtt{Err}(h^*) \frac{d \log(n) + \log (4/\gamma)}{n}} + 4 \frac{d \log(n) + \log(4/\gamma)}{n},
\end{align*}
where $n$ is the number of data points we sample.

\end{lemma}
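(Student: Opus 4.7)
The plan is to combine a uniform relative deviation inequality (the second display of Lemma \ref{lem-dis-emp} applied with $z = y$) with the optimality of the ERM and a pointwise Bernstein bound at $h^*$. Throughout, let $L := d\log n + \log(4/\gamma)$.

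First, I would apply Lemma \ref{lem-dis-emp} with $z = y$ to get, with probability at least $1 - \gamma/2$, uniformly over $h \in \cH$,
\begin{align*}
\mathtt{Err}(h) - \widehat{\mathtt{Err}}(h) \leq 2\sqrt{\frac{L\,\widehat{\mathtt{Err}}(h)}{n}} + \frac{4L}{n}.
\end{align*}
Instantiate this inequality at $h = \hat h$ and invoke the defining property $\widehat{\mathtt{Err}}(\hat h) \leq \widehat{\mathtt{Err}}(h^*)$ of the empirical risk minimizer. The monotonicity of $x \mapsto \sqrt{x}$ then yields
\begin{align*}
\mathtt{Err}(\hat h) \leq \widehat{\mathtt{Err}}(h^*) + 2\sqrt{\frac{L\,\widehat{\mathtt{Err}}(h^*)}{n}} + \frac{4L}{n}.
\end{align*}

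Next, I would use the pointwise Bernstein inequality of Lemma \ref{lem-gen-bern} at the single fixed hypothesis $h = h^*$, applied in the reverse direction (empirical in terms of population), with failure probability $\gamma/2$. This gives, with probability at least $1 - \gamma/2$,
\begin{align*}
\widehat{\mathtt{Err}}(h^*) \leq \mathtt{Err}(h^*) + \sqrt{\frac{2\mathtt{Err}(h^*)\log(2/\gamma)}{n}} + \frac{2\log(2/\gamma)}{3n}.
\end{align*}
Taking a union bound over the two events gives both displays simultaneously with probability at least $1 - \gamma$, after which I would substitute the second display into the first.

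The main obstacle is purely algebraic: the substitution produces a term $\sqrt{\widehat{\mathtt{Err}}(h^*)}$ that must be converted to $\sqrt{\mathtt{Err}(h^*)}$ while keeping the numerical constants $2$ and $4$ in the stated form. I would handle this by applying $\sqrt{a+b} \leq \sqrt{a}+\sqrt{b}$ to the bound on $\widehat{\mathtt{Err}}(h^*)$, then absorbing each cross term of the form $2\sqrt{L/n}\cdot\sqrt{\text{(lower order)}/n}$ into an additive $O(L/n)$ correction via the AM-GM inequality $2\sqrt{xy}\leq x+y$. After this bookkeeping, all surviving terms collapse into $\mathtt{Err}(h^*)$, a single $\sqrt{\mathtt{Err}(h^*)L/n}$ contribution, and an $O(L/n)$ contribution, matching the claimed form. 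The tight constants $2$ and $4$ can be recovered either by tracking the cancellations carefully or by appealing to a sharper one-sided uniform Bernstein inequality that directly produces $\sqrt{\mathtt{Err}(h)L/n}$ rather than $\sqrt{\widehat{\mathtt{Err}}(h) L/n}$ on the right-hand side; the latter is the form stated as the second conclusion of Lemma \ref{lem-dis-emp} and, when applied to $\hat h$ followed by the same ERM step and a trivial bound $\mathtt{Err}(\hat h) \geq \mathtt{Err}(h^*)$ on the lower-order square-root term, yields precisely the stated inequality.
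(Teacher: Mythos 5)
The paper does not prove this lemma --- it is quoted verbatim from \citet{bousquet2004introduction} as a black-box ingredient --- so there is no in-paper argument to compare against; your proposal has to stand on its own. On its own it is essentially the standard and correct derivation: uniform relative (Bernstein-type) deviation bound over $\cH$, evaluated at $\hat h$, combined with $\widehat{\mathtt{Err}}(\hat h)\leq\widehat{\mathtt{Err}}(h^*)$ and a pointwise Bernstein bound at the single hypothesis $h^*$ to replace $\widehat{\mathtt{Err}}(h^*)$ by $\mathtt{Err}(h^*)$, followed by $\sqrt{a+b}\leq\sqrt a+\sqrt b$ and AM--GM to absorb cross terms. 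This is exactly how such bounds are obtained in the cited reference, and the logic is sound.

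Two caveats. First, the exact constants: after splitting the failure probability into $\gamma/2+\gamma/2$ and absorbing the cross terms, you will not literally land on the coefficients $2$ and $4$ with the $\log(4/\gamma)$ exactly as displayed; you get the same bound up to slightly worse universal constants. That is harmless for every use the paper makes of the lemma, but you should say "up to universal constants" rather than claim to recover the stated numerology. Second, your alternative route at the end is not quite right as described: the second display of Lemma~\ref{lem-dis-emp} puts $\mathtt{Err}(\hat h)$ (not $\mathtt{Err}(h^*)$) under the square root, and the inequality $\mathtt{Err}(\hat h)\geq\mathtt{Err}(h^*)$ goes in the \emph{wrong} direction for replacing $\sqrt{\mathtt{Err}(\hat h)}$ by $\sqrt{\mathtt{Err}(h^*)}$ in an upper bound; to make that route work you must treat the resulting inequality as a quadratic in $\sqrt{\mathtt{Err}(\hat h)}$ and solve it (the usual self-bounding step). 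Your primary route does not suffer from this and is the one to keep.
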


\begin{lemma}[Passive learning bound under TNC (Lemma 3.4 of \citet{hanneke2014theory})]\label{lem-tnc-excess}
Let $d$ be the VC-dimension of the class $\cH$.
Assume Tsybakov noise condition with parameters $ \tau$, the excess risk is bounded with probability $1-\gamma$,
\begin{align*}
\mathtt{Err}(\hat{h}) - \mathtt{Err}(h^*)  \lesssim \bigg(\frac{1}{n} \Big(d \log\big(\frac{n}{d}\big) + \log\big(\frac{1}{\gamma}\big) \Big) \bigg)^\frac{1}{2-\tau},
\end{align*}
where $n$ is the number of data points.
\end{lemma}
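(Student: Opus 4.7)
The target bound is the classical fast-rate guarantee for empirical risk minimization under the Tsybakov / Bernstein class condition, and I would follow the textbook route via localized uniform Bernstein concentration. The plan is to (i) use the TNC in its Bernstein-class form (Lemma \ref{def-tsy}), which reads $\Dis(h,h^*)\le \eta(\mathtt{Err}(h)-\mathtt{Err}(h^*))^\tau$, to bound the variance of the excess-loss class $\{f_h(x,y)=\1(h(x)\ne y)-\1(h^*(x)\ne y):h\in\cH\}$ by a power of its mean; (ii) combine this variance control with a uniform Bernstein-type deviation inequality for the VC class $\cH$ (of dimension $d$); and (iii) exploit the ERM inequality $\widehat{\mathtt{Err}}(\hat h)\le\widehat{\mathtt{Err}}(h^*)$ to obtain a self-bounding/fixed-point inequality in the excess risk, which solves to give the stated $1/(2-\tau)$ rate.

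More concretely, first I would note that $\mathrm{Var}(f_h)\le\E[f_h^2]\le\Dis(h,h^*)\le\eta(\mathtt{Err}(h)-\mathtt{Err}(h^*))^\tau$, and that $|f_h|\le 1$. Then I would invoke the uniform Bernstein inequality for VC classes (Lemma \ref{lem-dis-emp} applied with $z=h^*(x)$, together with the triangle-like identity $\mathtt{Err}(h)-\mathtt{Err}(h^*)\le\Dis(h,h^*)$), yielding that with probability at least $1-\gamma$, simultaneously for all $h\in\cH$,
\begin{align*}
\big(\mathtt{Err}(h)-\mathtt{Err}(h^*)\big)-\big(\widehat{\mathtt{Err}}(h)-\widehat{\mathtt{Err}}(h^*)\big)\;\le\; C\sqrt{\frac{\eta(\mathtt{Err}(h)-\mathtt{Err}(h^*))^{\tau}\,(d\log(n/d)+\log(1/\gamma))}{n}} + C\,\frac{d\log(n/d)+\log(1/\gamma)}{n},
\end{align*}
where $C$ is an absolute constant. (If one wants to avoid invoking a ready-made localized statement, the same conclusion follows from Bousquet's form of Talagrand's inequality or from a peeling / chaining argument on slices $\{h:\,\mathtt{Err}(h)-\mathtt{Err}(h^*)\in[2^{-k-1},2^{-k}]\}$ combined with a union bound; this is where I expect most of the bookkeeping to go.)

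Specializing to $h=\hat h$ and using $\widehat{\mathtt{Err}}(\hat h)-\widehat{\mathtt{Err}}(h^*)\le 0$, and writing $\alpha:=\mathtt{Err}(\hat h)-\mathtt{Err}(h^*)$ and $r:=(d\log(n/d)+\log(1/\gamma))/n$, the displayed bound collapses to the self-bounding inequality
\begin{align*}
\alpha \;\le\; C'\sqrt{\alpha^{\tau} r}\;+\;C' r .
\end{align*}
Solving this via the standard dichotomy (either the linear term dominates, in which case $\alpha\lesssim r$, or the square-root term dominates, in which case $\alpha^{2-\tau}\lesssim r$) yields $\alpha\lesssim r^{1/(2-\tau)}$, which is exactly the claimed rate. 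The main obstacle is the localized uniform Bernstein step: one has to justify that the variance proxy $\Dis(h,h^*)^\tau$ can be inserted uniformly in $h$ without paying extra $d$ factors, which is the step where a peeling argument (or direct appeal to a Koltchinskii-style localized Rademacher bound) is really needed; once that is in hand, the rest is the mechanical fixed-point calculation above.
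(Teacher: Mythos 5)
The paper does not prove this lemma at all: it is imported verbatim as Lemma~3.4 of \citet{hanneke2014theory} in the appendix of cited statistical-learning results, so there is no in-paper proof to compare against. Your outline is the standard localization argument for fast rates under the Bernstein class condition, and it is essentially the argument by which the cited source (and the classical literature going back to Massart and Koltchinskii) establishes the bound: variance control $\E[f_h^2]\le \Dis(h,h^*)\le \eta(\mathtt{Err}(h)-\mathtt{Err}(h^*))^\tau$, a uniform Bernstein-type deviation bound with function-dependent variance, the ERM inequality, and the fixed-point dichotomy, which you solve correctly.

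One imprecision worth flagging: the parenthetical suggestion that the displayed uniform inequality follows from Lemma~\ref{lem-dis-emp} applied with $z=h^*(x)$ plus $\mathtt{Err}(h)-\mathtt{Err}(h^*)\le\Dis(h,h^*)$ does not go through as stated. That lemma controls the deviation $\Dis(h,z)-\widehat{\Dis}(h,z)$ of the \emph{disagreement} indicators, whereas what you need is a uniform bound on $\E f_h-\widehat{\E}f_h$ for the signed excess-loss class $f_h=\1(h(x)\ne y)-\1(h^*(x)\ne y)$; the ERM property gives you $\widehat{\E}f_{\hat h}\le 0$ but gives you no control on $\widehat{\Dis}(\hat h,h^*)$, so the disagreement-based shortcut leaves the empirical term unbounded. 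The peeling (or Talagrand/Bousquet) route that you name as the fallback is in fact the necessary step, not an optional alternative; since you identify it explicitly and it is exactly the mechanism used in the cited reference, the proposal is a correct, if partly deferred, proof of the lemma.
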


Finally, we need a result from active learning. 

\begin{lemma}[Agnostic active learning bound (Theorem 5.4 of \citet{hanneke2014theory})]\label{lem-a2l}
Let $\cH$ be a class with VC-dimension $d$.
With probability at least $1-\gamma$, there is a universal constant $C$, such that the agnostic active learning algorithm (see Algorithm~\ref{alg-active_learning}) outputs a classifier with an access risk of $\alpha$ with 
$$C \theta(\mathtt{Err}^*+\alpha)\left( \frac{(\mathtt{Err}^*)^2}{\alpha^2} + \log \Big(\frac{1}{\alpha}\Big) \right) \left( d\log(\theta(\mathtt{Err}^*+\alpha)) + \log \Big(\frac{\log(1/\alpha)}{\gamma}\Big)\right),$$
where $\mathtt{Err}^* = \argmin_{h\in \cH} \mathtt{Err}(h)$.
\end{lemma}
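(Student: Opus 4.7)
The plan is to follow the standard analysis of disagreement-based agnostic active learning (essentially the argument underlying Theorem~5.4 of \citet{hanneke2014theory}). The claim has two sides that must be proved in tandem: a consistency side (the version space $V$ always contains $h^*$ and all surviving hypotheses are near-optimal) and a label-complexity side (the region of disagreement shrinks fast enough that not too many label requests happen). I would track two quantities across the algorithm's epochs $j=1,2,4,8,\dots,m$: the empirical minimum $\min_{g\in V}\widehat{\mathtt{Err}}_Q(g)$ and the effective radius $r_j := \sup_{h\in V_j}\Dis(h,h^*)$.

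First I would prove a ``good event'' lemma: with probability at least $1-\gamma/2$, simultaneously for every epoch $j$, we have $h^*\in V_j$ and every $h\in V_j$ satisfies $\mathtt{Err}(h)-\mathtt{Err}^*\leq 2U(j,\gamma_j)$. This step is a uniform Bernstein inequality (Lemma~\ref{lem-dis-emp}) applied on the restricted sample collected inside $\mathtt{DIS}(V)$, combined with the observation that erroneous eliminations correspond to large empirical-risk gaps that the Bernstein bound rules out; the doubling schedule $\gamma_j=\gamma/(\log_2 2j)^2$ lets us take a union bound over all epochs. The precise form of $U(j,\gamma_j)$ in the algorithm is calibrated so that, for each surviving $h$, the bound translates via the triangle inequality into $\Dis(h,h^*)\leq \mathtt{Err}(h)+\mathtt{Err}^*\leq 2\mathtt{Err}^*+2U(j,\gamma_j)$, giving an upper bound on $r_j$.

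Next I would turn this radius bound into a label-request bound using the disagreement coefficient. By definition of $\theta$, the probability mass of $\mathtt{DIS}(V_j)$ is at most $\theta(r_j)\cdot r_j$, and hence the expected number of label requests over the $j$ draws from the stream at epoch $j$ is at most $j\cdot \theta(r_j)\cdot r_j$, with a Bernstein concentration around that mean. Plugging in $r_j \leq 2\mathtt{Err}^* + 2U(j,\gamma_j)$ and solving for the smallest $j$ at which $U(j,\gamma_j)\leq \alpha$ (which is where we can stop and guarantee excess risk $\leq 2U(j,\gamma_j)\leq 2\alpha$) produces the label budget claimed in the lemma. Specifically, $U(j,\gamma_j)\leq\alpha$ holds once $j\gtrsim d\log\theta(\alpha)/\alpha + \mathtt{Err}^*(d\log\theta(\mathtt{Err}^*))/\alpha^2$; summing the $\theta(r_{j'})\cdot r_{j'}$-weighted query counts across epochs $j'\leq j$ (a geometric series dominated by the last epoch up to the $\log(1/\alpha)$ factor from the sum) yields the announced label complexity with the $\bigl((\mathtt{Err}^*)^2/\alpha^2+\log(1/\alpha)\bigr)$ factor.

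The main obstacle is obtaining the correct Bernstein-style $(\mathtt{Err}^*)^2/\alpha^2$ dependence rather than a naive $1/\alpha^2$ dependence: this requires that the $U$ function genuinely uses the variance-aware Bernstein term $\sqrt{\mathtt{Err}^*(d\log\theta(\mathtt{Err}^*)+\log(1/\gamma))/j}$ and that one carries this variance proxy through both the version-space invariant and the label-budget summation. A secondary subtlety is that $\mathtt{Err}^*$ is unknown, so the algorithm (and its analysis) must work with a plug-in surrogate based on $\min_{g\in V}\widehat{\mathtt{Err}}_Q(g)$, which requires an extra union-bound argument showing this surrogate is within a constant factor of $\mathtt{Err}^*$ on the good event. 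Once these two pieces are handled cleanly, the remaining manipulations---union-bounding over epochs, summing the geometric series of disagreement masses, and setting $\gamma_j=\gamma/(\log_2 2j)^2$ so the overall failure probability is at most $\gamma$---are routine.
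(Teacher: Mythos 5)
The paper does not prove this lemma at all: it is imported verbatim as Theorem~5.4 of \citet{hanneke2014theory} in the appendix of cited background results, so there is no in-paper proof to compare your attempt against. Your sketch is a faithful outline of the standard analysis behind that theorem --- the version-space invariant ($h^*\in V_j$ and excess risk at most $2U(j,\gamma_j)$ on a good event), the translation of the excess-risk bound into a disagreement radius $r_j\leq 2\,\mathtt{Err}^*+2U(j,\gamma_j)$ via the triangle inequality, the bound $\P(\mathtt{DIS}(V_j))\leq\theta(r_j)\,r_j$, and the epoch-wise summation that produces the $\bigl((\mathtt{Err}^*)^2/\alpha^2+\log(1/\alpha)\bigr)$ factor --- and the bookkeeping you describe does reproduce the stated label complexity.

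One step in your sketch is stated in a way that would fail if taken literally: you propose to apply the uniform Bernstein bound ``on the restricted sample collected inside $\mathtt{DIS}(V)$.'' The queried set $Q$ is not an i.i.d.\ sample from any fixed distribution (membership in $Q$ depends on the evolving version space, which depends on earlier labels), so uniform convergence cannot be invoked on $Q$ directly. The standard argument instead applies uniform convergence to the full i.i.d.\ stream of $j$ points with their (hypothetical) true labels, and then uses the fact that all hypotheses in $V$ agree outside $\mathtt{DIS}(V)$, so that for $h,g\in V$ the difference $(\mathtt{Err}_Q(h)-\mathtt{Err}_Q(g))|Q|$ equals the difference of empirical risks on the full sample of size $j$; this identity is what licenses the elimination rule in Algorithm~\ref{alg-active_learning} and is the central trick of the disagreement-based analysis. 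Your conclusion is unaffected once this is repaired, but the repair is the load-bearing observation rather than a routine detail. Your point about the plug-in surrogate for $\mathtt{Err}^*$ is well taken in general, though the version of $U(j,\gamma_j)$ written into Algorithm~\ref{alg-active_learning} in this paper simply uses $\mathtt{Err}(h^*)$ directly, so no surrogate argument is needed to match the statement as given here.
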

\section{Additional Information about Differential Privacy}\label{sec-info}

In this section, we cite a few results from differential privacy that we will use as part of our analysis.

\begin{lemma}[Post-processing  \citep{dwork2006calibrating}]\label{lem-post-pro}
	If a randomized algorithm $\cM: \cZ^* \rightarrow \mathcal{R}$ is ($\epsilon, \delta$)-DP, then for any function $f:\mathcal{R} \rightarrow \mathcal{R'}$, $f \circ \cM$ is also ($\epsilon, \delta$)-DP.
\end{lemma}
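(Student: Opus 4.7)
The plan is to prove the inequality directly from the definition of $(\epsilon,\delta)$-DP by pulling back an arbitrary measurable event in the output space $\mathcal{R}'$ through $f$ to an event in $\mathcal{R}$, and then invoking the DP guarantee of $\cM$ on that pulled-back event. The proof splits naturally into two cases: $f$ deterministic (the easy case that captures the essential idea) and $f$ randomized (the general case, handled by conditioning on the internal randomness of $f$).

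First I would fix a pair of neighboring datasets $D,D'\in\cZ^*$ with $\|D-D'\|_1=1$, and fix an arbitrary measurable event $\cS'\subseteq\mathcal{R}'$. For deterministic $f$, I would define $\cS := f^{-1}(\cS')=\{r\in\mathcal{R}:f(r)\in\cS'\}$, which is measurable provided $f$ is measurable. Then I would observe the trivial identity
\begin{equation*}
\P(f(\cM(D))\in\cS') \;=\; \P(\cM(D)\in\cS),
\end{equation*}
and apply Definition~\ref{def-dp} to the event $\cS$ to conclude
\begin{equation*}
\P(\cM(D)\in\cS) \;\leq\; e^{\epsilon}\,\P(\cM(D')\in\cS)+\delta \;=\; e^{\epsilon}\,\P(f(\cM(D'))\in\cS')+\delta.
\end{equation*}
Since $\cS'$ was arbitrary, this establishes $(\epsilon,\delta)$-DP of $f\circ\cM$.

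For randomized $f$, I would write $f(r)=\tilde{f}(r,\omega)$ where $\tilde{f}$ is deterministic and $\omega$ is drawn from an auxiliary probability space independent of $\cM$. Conditional on $\omega$, the previous deterministic argument gives the $(\epsilon,\delta)$-DP inequality with pulled-back set $\cS_\omega := \tilde{f}(\cdot,\omega)^{-1}(\cS')$. Then I would integrate over $\omega$ using independence and Fubini, which preserves the inequality because the right-hand side is affine in the probability with non-negative coefficients: the $e^\epsilon$ factor and the additive $\delta$ pass through the expectation unchanged. This yields the desired bound on $\P(f(\cM(D))\in\cS')$.

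There is no real obstacle here; the only subtlety is the measurability bookkeeping (ensuring $\cS_\omega$ is measurable and that the conditioning/Fubini step is justified), which is standard under the usual measure-theoretic assumptions implicit in Definition~\ref{def-dp}. The proof uses no property of $f$ beyond measurability and uses no structure of $\cM$ beyond its DP guarantee, which is precisely why post-processing is a universal closure property of differential privacy.
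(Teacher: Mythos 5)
Your proof is correct and is the canonical argument for the post-processing property: pull back the event through $f$, apply the DP inequality of $\cM$ to the preimage, and handle randomized $f$ by conditioning on its internal coins and averaging (valid because the bound is affine in the probability with non-negative coefficients, so both the $e^{\epsilon}$ factor and the additive $\delta$ survive the expectation). The paper does not actually prove this lemma --- it is stated only as a cited result from \citet{dwork2006calibrating} --- so there is nothing in-paper to compare against; your argument matches the standard textbook proof.
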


\begin{definition}[Global sensitivity  \citep{dwork2014algorithmic}]\label{def-global-sen}
	A function $f: \cZ^* \rightarrow \mathcal{R}$ has global sensitivity $\vartheta$ if
	\begin{align*}
	\max_{|D - D'|=1} \| f(D) - f(D')\|_1 = \vartheta.
	\end{align*}
\end{definition}

\begin{lemma}[Laplace mechanism  \citep{dwork2006calibrating}]\label{lem-lap}
	If a function $f:\cZ^n \rightarrow \mathcal{R}^p$ has global sensitivity $\vartheta$, then the randomized algorithm $\cM$, which on input $D$ outputs $f(D) + b$, where $b \sim \mathtt{Lap}(\vartheta/\epsilon)^p$, satisfies $\epsilon$-DP. The $\mathtt{Lap}(\lambda)^p$ denotes a vector of $p$ i.i.d. samples from the Laplace distribution $\mathtt{Lap}(\lambda)$.
\end{lemma}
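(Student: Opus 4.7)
The plan is to prove the Laplace mechanism lemma by a direct pointwise analysis of the density ratio, which is the canonical approach and yields pure $\epsilon$-DP (i.e., $\delta=0$). Since $b\sim\mathtt{Lap}(\vartheta/\epsilon)^p$ consists of $p$ independent coordinates, the output $\cM(D)=f(D)+b$ has a product density on $\R^p$, namely $p_{\cM(D)}(z)=\prod_{i=1}^p \frac{\epsilon}{2\vartheta}\exp\!\bigl(-\epsilon|z_i-f(D)_i|/\vartheta\bigr)$.

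First, I would fix an arbitrary pair of neighboring datasets $D,D'\in\cZ^*$ with $\|D-D'\|_1=1$ and an arbitrary measurable set $\cS\subseteq\R^p$. The goal reduces to bounding the ratio of densities at every point $z\in\R^p$, because $\P(\cM(D)\in\cS)=\int_\cS p_{\cM(D)}(z)\,dz$, and once we show $p_{\cM(D)}(z)\le e^\epsilon\, p_{\cM(D')}(z)$ pointwise, integrating over $\cS$ immediately yields the DP inequality with $\delta=0$.

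Next, I would compute the ratio explicitly:
\begin{equation*}
\frac{p_{\cM(D)}(z)}{p_{\cM(D')}(z)} \;=\; \prod_{i=1}^p \exp\!\left(\frac{\epsilon\bigl(|z_i-f(D')_i|-|z_i-f(D)_i|\bigr)}{\vartheta}\right).
\end{equation*}
Applying the reverse triangle inequality coordinatewise gives $|z_i-f(D')_i|-|z_i-f(D)_i|\le |f(D)_i-f(D')_i|$, so the product is upper bounded by $\exp\!\bigl(\epsilon\,\|f(D)-f(D')\|_1/\vartheta\bigr)$. Finally, invoking the global sensitivity hypothesis $\|f(D)-f(D')\|_1\le\vartheta$ from Definition~\ref{def-global-sen} yields $\exp(\epsilon)$ as desired.

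There is no real obstacle here; the only subtle points are (i) making sure the triangle inequality is applied in the correct direction so that the bound holds uniformly over $z$, and (ii) noting that the argument gives pure DP ($\delta=0$), which is strictly stronger than what Definition~\ref{def-dp} requires. Since the bound is pointwise in $z$, the same inequality holds with $D$ and $D'$ swapped by symmetry, so the guarantee is two-sided as required.
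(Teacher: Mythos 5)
Your proof is correct: the pointwise density-ratio argument with the reverse triangle inequality and the $\ell_1$ global sensitivity bound is the canonical proof of the Laplace mechanism, and every step checks out. Note that the paper does not prove this lemma at all --- it is cited as a background result from \citet{dwork2006calibrating} in Appendix \ref{sec-info} --- so your argument simply supplies the standard proof that the cited source gives.
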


\begin{definition}[$\ell_2$-sensitivity  \citep{dwork2014algorithmic}]\label{def-ell2-sen}
	A function $f: \cZ \rightarrow \mathcal{R}$ has $\ell_2$ sensitivity $\vartheta_2$ if
	\begin{align*}
	\max_{|D - D'|=1} \| f(D) - f(D')\|_2 = \vartheta_2.
	\end{align*}
\end{definition}

\begin{lemma}[Gaussian mechanism  \citep{dwork2014algorithmic}]\label{lem-gau}
	If a function $f:\cZ^n \rightarrow \mathcal{R}^p$ has $\ell_2$-sensitivity $\vartheta_2$, then the randomized algorithm $\cM$, which on input $D$ outputs $f(D) + b$, where $b \sim \cN(0, \sigma^2)^p$, satisfies ($\epsilon,\delta$)-DP, where $\sigma \geq c \vartheta_2 / \epsilon$ and $c^2 > 2 \log(1.25/\delta)$. The $\cN(0, \sigma^2)^p$ denotes a vector of $p$ i.i.d. samples from the Gaussian distribution $\cN(0, \sigma^2)$.
\end{lemma}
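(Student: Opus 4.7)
The plan is to verify the $(\epsilon,\delta)$-DP definition directly by analyzing the privacy loss random variable of the Gaussian mechanism on a pair of neighboring datasets. First I would fix neighboring $D, D'$ and set $\Delta := f(D) - f(D')$, so that $\|\Delta\|_2 \leq \vartheta_2$ by the $\ell_2$-sensitivity assumption. The outputs $\cM(D)$ and $\cM(D')$ are then distributed as $\cN(f(D), \sigma^2 I)$ and $\cN(f(D'), \sigma^2 I)$ respectively. By the rotational invariance of the spherical Gaussian, I may assume without loss of generality that $\Delta$ is aligned with the first coordinate axis, reducing the analysis to the one-dimensional problem of distinguishing $\cN(0, \sigma^2)$ from $\cN(\|\Delta\|_2, \sigma^2)$.

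Next I would compute the privacy loss random variable
\[ L(o) := \log \frac{p(o; f(D), \sigma^2 I)}{p(o; f(D'), \sigma^2 I)}, \]
where $p(\,\cdot\,;\mu,\Sigma)$ is the Gaussian density. A direct expansion gives $L(o) = \frac{(o - f(D))^\top \Delta}{\sigma^2} + \frac{\|\Delta\|_2^2}{2\sigma^2}$. When $o \sim \cM(D)$, the increment $o - f(D)$ is distributed as $\cN(0, \sigma^2 I)$, so $L$ is Gaussian with mean $\|\Delta\|_2^2/(2\sigma^2)$ and variance $\|\Delta\|_2^2/\sigma^2$. A sharp Gaussian tail bound then shows that $\P_{o \sim \cM(D)}[L(o) > \epsilon] \leq \delta$ provided $\sigma$ is large enough; rearranging yields the sufficient condition $\sigma \geq c\vartheta_2/\epsilon$ with $c^2 > 2\log(1.25/\delta)$.

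Finally I would convert this tail bound on the privacy loss into the $(\epsilon,\delta)$-DP guarantee via the standard equivalence: for any measurable $\cS$,
\[ \P[\cM(D) \in \cS] \leq \P[\cM(D) \in \cS,\, L \leq \epsilon] + \P[L > \epsilon] \leq e^\epsilon \P[\cM(D') \in \cS] + \delta, \]
where the first summand is bounded by applying $p_{\cM(D)}(o) \leq e^\epsilon p_{\cM(D')}(o)$ pointwise on the event $\{L \leq \epsilon\}$ and the second by the tail bound established above.

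The main obstacle will be the delicate constant-chasing needed to obtain the sharp threshold $c^2 > 2\log(1.25/\delta)$ rather than the looser $c^2 > 2\log(2/\delta)$. This requires using a refined one-sided Mill's ratio bound $\P[Z > t] \leq e^{-t^2/2}/(t\sqrt{2\pi})$ for $Z \sim \cN(0,1)$ and carefully balancing the mean-shift contribution $\|\Delta\|_2^2/(2\sigma^2)$ against the tail probability in the expression for $\P[L > \epsilon]$; a cruder Chernoff-type bound would suffice for $\log(2/\delta)$ but is too weak to achieve the stated $1.25$ constant.
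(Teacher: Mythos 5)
The paper does not prove this lemma at all: it is quoted verbatim from \citet{dwork2014algorithmic} (Theorem~A.1 / Appendix~A there) in a section that merely collects external facts, so there is no in-paper argument to compare against. Your sketch is the standard textbook proof and its structure is sound: the privacy-loss random variable of a spherical Gaussian perturbation is itself Gaussian with mean $\eta = \|\Delta\|_2^2/(2\sigma^2)$ and variance $2\eta$ (your expansion of $L(o)$ is correct), a one-sided tail bound gives $\P[L>\epsilon]\leq\delta$, and the decomposition $\P[\cM(D)\in\cS]\leq \P[\cM(D)\in\cS, L\leq\epsilon]+\P[L>\epsilon]$ converts this into $(\epsilon,\delta)$-DP. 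You also correctly identify that the $1.25$ constant forces the Mill's-ratio bound rather than a plain Chernoff bound.

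One substantive caveat: the calibration $c^2>2\log(1.25/\delta)$ is only valid for $\epsilon\in(0,1)$. When you carry out the final step, you must bound $\P[Z > c - \epsilon/(2c)]$ for standard normal $Z$ (writing $\sigma=c\vartheta_2/\epsilon$, the threshold is $(\epsilon-\eta)/\sqrt{2\eta} = c-\epsilon/(2c)$), and the Dwork--Roth constant-chasing uses $\epsilon\leq 1$ to lower-bound this threshold; for large $\epsilon$ the mean shift $\eta=\epsilon^2/(2c^2)$ overwhelms $\epsilon$ and the argument does not close (this is precisely the defect repaired by the analytic Gaussian mechanism of \citealt{balle2018improving}, which the paper actually uses for its privacy accounting). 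The lemma as stated in the paper omits the hypothesis $\epsilon<1$, so you have faithfully sketched a proof of the statement as written, but you should flag that the claimed threshold is provable only in that regime.
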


\begin{algorithm}[!htbp]
	\caption{Sparse Vector Technique  \citep{dwork2010boosting,dwork2014algorithmic}} 
	\label{alg-svt}
	{\bf Input:}
	Dataset $D$, query set $\mathcal{Q} = \{q_1, ...,q_m\}$, privacy parameters $\epsilon, \delta >0$, unstable query cutoff $T$, threshold $w$.
	\begin{algorithmic}[1]
		\STATE $c \leftarrow 0, \lambda \leftarrow \sqrt{32T \log(1/\delta)/\epsilon}, \hat{w} \leftarrow w + \mathtt{Lap}(\lambda)$.
		\FOR{$q \in \mathcal{Q}$ and $c \leq T$}
		\STATE $\hat{q} \leftarrow q + \mathtt{Lap}(2\lambda)$.
		\IF{$\hat{q} > \hat{w}$}
		\STATE Output $\top$.
		\ELSE
		\STATE Output $\perp$. $\hat{w} \leftarrow w + 1, c \leftarrow c + 1$.
		\ENDIF
		\ENDFOR
	\end{algorithmic}
\end{algorithm}

\begin{lemma}[Privacy guarantee of Algorithm \ref{alg-svt}  \citep{dwork2014algorithmic}]\label{lem-pg-svt}
	Algorithm \ref{alg-svt} is ($\epsilon, \delta$)-DP.
\end{lemma}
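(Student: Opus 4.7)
The plan is to analyze Algorithm~\ref{alg-svt} as an adaptive composition of $T$ copies of the single-cutoff Sparse Vector Technique (i.e., SVT with $T=1$) and invoke the concentrated differential privacy (CDP) machinery exactly as was done in the earlier proof of Theorem~\ref{thm-pg-agg} for Algorithm~\ref{alg-priv-agg}. The key observation is that Algorithm~\ref{alg-svt} outputs at most $T$ tokens $\top$ before halting, and each block of queries leading up to and including one $\top$ answer (after which the threshold noise $\hat w$ is refreshed) behaves as an independent instance of single-cutoff SVT on neighboring datasets. Each such block sees at most a unit change in query values (the implicit assumption is that every $q_i$ has sensitivity $1$ w.r.t.\ $D$) so one can analyze each block in isolation and then compose.

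The first step is to prove that single-cutoff SVT with threshold noise $\mathtt{Lap}(\lambda)$ and per-query noise $\mathtt{Lap}(2\lambda)$ is $\epsilon_0$-DP with $\epsilon_0 = 2/\lambda$. I would do this by the standard density-shifting argument: for any neighboring pair $D,D'$ and any output transcript $(o_1,\dots,o_{k-1},\top)$ (with $o_1=\cdots=o_{k-1}=\perp$), condition on the threshold noise $\nu \sim \mathtt{Lap}(\lambda)$ and shift it by $1$ when we switch from $D$ to $D'$; this absorbs the sensitivity of the threshold comparison for all the $\perp$ answers at a cost of a factor $e^{1/\lambda}$ on the density of $\nu$. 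The final $\top$ event then involves the density of a single Laplace random variable evaluated at two points that differ by at most $2$ in sensitivity, contributing another $e^{2/(2\lambda)} = e^{1/\lambda}$. Multiplying the two factors gives the $e^{2/\lambda}$ ratio, i.e., pure $(2/\lambda)$-DP.

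The second step converts $(2/\lambda)$-DP to zCDP: by Proposition~1.4 of Bun--Steinke (cited as the basis for Lemma~\ref{lem:cdp2dp} in the paper's Appendix), any $\epsilon_0$-DP mechanism is $\epsilon_0^2/2$-zCDP, so single-cutoff SVT is $(2/\lambda^2)$-zCDP. Adaptive composition of $T$ such blocks (which is exactly the structure of Algorithm~\ref{alg-svt}, since after each $\top$ the counter advances and a fresh $\mathtt{Lap}(\lambda)$ threshold is drawn) yields $(2T/\lambda^2)$-zCDP. Finally, converting back to $(\epsilon,\delta)$-DP gives
\[
\epsilon \leq \frac{2T}{\lambda^2} + 2\sqrt{\frac{2T}{\lambda^2}\log(1/\delta)}.
\]
Substituting $\lambda = \sqrt{32 T\log(1/\delta)}/\epsilon$, the dominant term $2\sqrt{(2T/\lambda^2)\log(1/\delta)} = \epsilon/2$ and the lower-order zCDP term is bounded by $\epsilon^2/(16\log(1/\delta))\le \epsilon/2$ (absorbing constants), so the composed mechanism satisfies $(\epsilon,\delta)$-DP as claimed.

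The main obstacle is justifying that the composition across the $T$ blocks is indeed a clean adaptive composition. One must verify two subtleties: (i) that the stopping-time nature of Algorithm~\ref{alg-svt} (it halts once $c$ reaches $T$) does not leak extra information — this follows because the number of $\top$ outputs itself is a deterministic function of the transcript; and (ii) that what Algorithm~\ref{alg-svt} describes as ``$\hat w \leftarrow w + 1$'' after each $\perp$ should be read as ``resample the threshold noise $\hat w \leftarrow w + \mathtt{Lap}(\lambda)$'' — only with this refreshing interpretation do the blocks separate into i.i.d.\ single-cutoff SVTs. Alternatively, if the intended semantics is that the threshold noise is drawn \emph{once per $\top$ block} rather than once per $\perp$, the analysis is the same because only the noise in force at the triggering query matters. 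With those points addressed, everything else is a direct invocation of previously established facts (Laplace tail bounds, pure-DP to zCDP conversion, zCDP composition, and zCDP-to-$(\epsilon,\delta)$-DP conversion), so no new technical ingredients are needed beyond those already used in the Appendix's proof of Theorem~\ref{thm-pg-agg}.
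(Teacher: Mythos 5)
Your argument is correct and is essentially the same zCDP-based analysis the paper itself uses for the SVT component in its proof of Theorem~\ref{thm-pg-agg}; for this particular lemma the paper offers no proof of its own and simply cites \citet{dwork2014algorithmic}, whose original argument runs each cutoff-$1$ block through advanced composition of pure-DP mechanisms rather than zCDP, and your route recovers the same calibration $\lambda=\sqrt{32T\log(1/\delta)}/\epsilon$ under the mild standing assumption $\epsilon\lesssim\log(1/\delta)$ that the paper makes anyway. One small correction: in Algorithm~\ref{alg-svt} the counter is incremented and the threshold noise is refreshed on the $\perp$ (below-threshold) branch, not on $\top$, so your blocks should be runs of $\top$'s terminated by a single $\perp$; since the density-shifting argument is symmetric in the direction of the comparison, this swap changes nothing mathematically. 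Your reading of ``$\hat w\leftarrow w+1$'' as shorthand for resampling $\hat w\leftarrow w+\mathtt{Lap}(\lambda)$ is indeed the intended semantics, consistent with Algorithm~\ref{alg-priv-agg}.
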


\begin{lemma}[Utility guarantee of Algorithm \ref{alg-svt}  \citep{dwork2014algorithmic}]\label{lem-ug-svt}
	For $\phi = \log(2mT/\beta)\sqrt{512T \log(1/\delta)}/\epsilon$, and any set of $m$ queries, define the set $L(\phi) = \{i: q_i(D) \leq w + \phi\}$. If $|L(\phi)| \leq T$, then w.p. at least $1-\beta: \forall i \notin L(\phi)$ Algorithm \ref{alg-svt} outputs $\top$.
\end{lemma}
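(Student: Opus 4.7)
The plan is a standard Laplace tail-bound plus union-bound argument. First I would re-express the $\top/\perp$ decision at step $i$ in terms of the two noises: writing $Y_i \sim \mathtt{Lap}(2\lambda)$ for the query noise and $Z_j \sim \mathtt{Lap}(\lambda)$ for the threshold noise currently in effect (the index $j$ counting how many times $\hat w$ has been freshly drawn), the test $\hat q_i > \hat w$ is equivalent to $q_i - w > Z_j - Y_i$. Only the differences $Z_j - Y_i$ are random, so it suffices to show that uniformly over all $(i,j)$ ever accessed by the algorithm, $Z_j - Y_i \le \phi$.

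Next I would count random variables and union-bound. The loop aborts once the counter $c$ exceeds $T$, so at most $T+1$ threshold noises $Z_1,\ldots,Z_{T+1}$ are ever drawn, and exactly $m$ query noises $Y_1,\ldots,Y_m$ appear. Using the one-sided Laplace tail bound $\P[\mathtt{Lap}(\lambda) > t] \le \tfrac{1}{2} e^{-t/\lambda}$, with probability at least $1-\beta/2$ every $Z_j \le \lambda\log((T+1)/\beta)$, and with probability at least $1-\beta/2$ every $-Y_i \le 2\lambda\log(m/\beta)$. On the intersection of these events,
\[
Z_j - Y_i \;\le\; \lambda\log\bigl((T+1)/\beta\bigr) + 2\lambda\log\bigl(m/\beta\bigr) \;\le\; 4\lambda\log\bigl(2mT/\beta\bigr) \;=\; \phi,
\]
where the final equality just unpacks $\lambda = \sqrt{32T\log(1/\delta)}/\epsilon$ together with $4\sqrt{32}=\sqrt{512}$.

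Finally I would conclude. Condition on the $(1-\beta)$-probability event above. For any $i \notin L(\phi)$, the true gap $q_i - w$ exceeds $\phi \ge Z_j - Y_i$, so the test $\hat q_i > \hat w$ succeeds and the algorithm outputs $\top$ at step $i$ --- provided step $i$ is actually reached. This is where the hypothesis $|L(\phi)| \le T$ enters: on the good event, every $\perp$ output must correspond to an index in $L(\phi)$ (since indices outside $L(\phi)$ always trigger $\top$), so after processing any prefix of queries the counter $c$ is at most $|L(\phi)| \le T$, the guard $c \le T$ never fails, and the loop reaches every index.

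The only real subtlety is bookkeeping: because $\hat w$ is resampled each time the algorithm outputs $\perp$, the union bound must cover $T+1$ independent threshold draws rather than a single one, and the constants in $\phi$ are tuned precisely so that $\lambda\log((T+1)/\beta)+2\lambda\log(m/\beta)$ sits under $\phi=4\lambda\log(2mT/\beta)$.
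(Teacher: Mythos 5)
The paper does not actually prove this lemma --- it is imported from \citet{dwork2014algorithmic} as a black box --- and your argument is precisely the standard proof of that cited result: one-sided Laplace tail bounds, a union bound over the $m$ query noises and the $O(T)$ threshold noises, and the observation that on the good event every $\perp$ output lies in $L(\phi)$, so the counter never exceeds $|L(\phi)|\le T$ and the loop reaches all $m$ queries. Your reading of the threshold-refresh step as a fresh $\mathtt{Lap}(\lambda)$ draw after each $\perp$ (despite the apparent typo ``$\hat w \leftarrow w+1$'' in Algorithm~\ref{alg-svt}) is the intended one, and the constant accounting $4\sqrt{32}=\sqrt{512}$ matching $\phi$ checks out, so the proposal is correct.
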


\begin{definition}[$k$-stability  \citep{thakurta2013differentially}]\label{def-k-sta}
	A function $f: \cZ \rightarrow \mathcal{R}$ is $k$ stable on dataset $D$ if adding or removing any $k$ elements from D does not change the value of $f$, i.e., $f(D) = f(D')$ for all $D'$ such that $|D- D'| \leq k$. We say $f$ is stable on $D$ if it is (at least) $1$-stable on $D$, and unstable otherwise. 
\end{definition}

\begin{algorithm}[!htbp]
	\caption{Distance to Instability Framework  \citep{thakurta2013differentially}} 
	\label{alg-dist-ins}
	{\bf Input:}
	Dataset $D$, function $f: \cZ \rightarrow \mathcal{R}$, distance to instability $\mathrm{dist}_f: \cZ \rightarrow \mathcal{R}$, thereshold $\Gamma$, privacy parameter $\epsilon >0$.
	\begin{algorithmic}[1]
		\STATE $\widehat{\mathrm{dist}} \leftarrow \widehat{\mathrm{dist}}_f(D) + \mathtt{Lap}(1/\epsilon)$.
		\IF{$\widehat{\mathrm{dist}} > \Gamma$}
		\STATE Output $f(D)$.
		\ELSE
		\STATE Output $\perp$.
		\ENDIF
	\end{algorithmic}
\end{algorithm}

\begin{lemma}[Privacy guarantee of Algorithm \ref{alg-dist-ins}  \citep{bassily2018model}]\label{lem-pg-dist-ins}
	If the threshold $\Gamma = \log (1/\delta)/\epsilon$, and the distance to instability function $\mathrm{dist}_f(D) = \argmax_k (f(D)$ is $k$-stable), then Algorithm \ref{alg-dist-ins} is ($\epsilon, \delta$)-DP.
\end{lemma}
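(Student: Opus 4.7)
\textbf{Proof proposal for the privacy guarantee of Algorithm~\ref{alg-dist-ins}.}

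The plan is to fix an arbitrary pair of neighboring datasets $D,D'$ with $\|D-D'\|_1=1$ and a measurable event $S\subseteq\mathcal{R}\cup\{\perp\}$, then bound $\Pr[\cM(D)\in S]$ in terms of $\Pr[\cM(D')\in S]$ by splitting $S$ into its intersection with $\{\perp\}$ and its intersection with $\mathcal{R}$, and analyzing each piece separately. The crucial preliminary fact is that the distance to instability function $\mathrm{dist}_f$ has global sensitivity $1$: by Definition~\ref{def-k-sta}, if $\mathrm{dist}_f(D)=k$ then $f$ agrees on all datasets within Hamming distance $k$ of $D$, and a one-element edit can decrease (or increase) the maximum such $k$ by at most $1$.

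First I would handle the $\perp$-branch. Since $\mathrm{dist}_f$ has sensitivity $1$ and the algorithm outputs $\perp$ precisely when $\mathrm{dist}_f(D)+\mathtt{Lap}(1/\epsilon)\le \Gamma$, a direct Laplace-mechanism comparison (Lemma~\ref{lem-lap}) shows $\Pr[\cM(D)=\perp]\le e^{\epsilon}\Pr[\cM(D')=\perp]$, and symmetrically. Next I would handle the $\mathcal{R}$-branch. For any $o\in S\cap\mathcal{R}$, $\Pr[\cM(D)=o]$ is nonzero only when $o=f(D)$, so it suffices to bound $\Pr[\cM(D)=f(D),\; f(D)\in S]$. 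Here I split into two subcases based on whether $\mathrm{dist}_f(D)\ge 1$: if yes, then by the definition of stability $f(D)=f(D')$, so the event $\{\cM(D)=f(D)\}=\{\widehat{\mathrm{dist}}>\Gamma\}$ is again a sensitivity-$1$ Laplace test giving an $e^\epsilon$ multiplicative bound against the same event under $D'$ whose outcome is the same label $f(D')=f(D)\in S$.

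The remaining (and main) case is when $\mathrm{dist}_f(D)=0$ and $f(D)\neq f(D')$; here the multiplicative $e^\epsilon$ bound alone is insufficient because $\Pr[\cM(D')=f(D)]$ can equal zero. This is where the $\delta$ slack is spent: since $\mathrm{dist}_f(D)=0$, the probability of releasing $f(D)$ is exactly $\Pr[\mathtt{Lap}(1/\epsilon)>\Gamma]=\tfrac{1}{2}e^{-\epsilon\Gamma}=\tfrac{\delta}{2}\le\delta$ by our calibration $\Gamma=\log(1/\delta)/\epsilon$. Combining the two branches and noting the cases are handled uniformly over the choice of $S$ yields
\[
\Pr[\cM(D)\in S]\;\le\;e^{\epsilon}\Pr[\cM(D')\in S]+\delta,
\]
which is the desired $(\epsilon,\delta)$-DP guarantee.

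The main obstacle is the subcase $\mathrm{dist}_f(D)=0$ with $f(D)\ne f(D')$, because there the standard pointwise likelihood-ratio argument breaks down and one must argue that the tail of the Laplace noise above threshold $\Gamma$ is small enough to be absorbed into $\delta$. Once the calibration $\Gamma=\log(1/\delta)/\epsilon$ is chosen so that this tail probability is at most $\delta/2$, the rest of the argument is a clean combination of the Laplace mechanism (Lemma~\ref{lem-lap}) with the sensitivity-$1$ property of $\mathrm{dist}_f$.
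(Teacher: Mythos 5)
Your proof is correct and follows the standard two-case argument for the distance-to-instability framework; the paper itself states this lemma as a citation without proof, but the same decomposition (stable released value handled by a sensitivity-$1$ Laplace comparison versus $\mathrm{dist}_f(D)=0$ handled by the Laplace tail $\tfrac{1}{2}e^{-\epsilon\Gamma}=\delta/2$) is exactly what the paper deploys when it adapts this analysis inside its proof of Theorem~\ref{thm-pg-agg}. One small bookkeeping point: your case split on $\mathrm{dist}_f(D)\ge 1$ versus ``$\mathrm{dist}_f(D)=0$ and $f(D)\neq f(D')$'' omits the subcase $\mathrm{dist}_f(D)=0$ with $f(D)=f(D')$; it is covered verbatim by your first subcase's argument, since all that argument uses is $f(D)=f(D')$, so the split is cleaner if made directly on whether $f(D)=f(D')$.
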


\begin{lemma}[Utility guarantee of Algorithm \ref{alg-dist-ins}  \citep{thakurta2013differentially}]\label{lem-ug-dist-ins}
	If the threshold $\Gamma = \log (1/\delta)/\epsilon$, and the distance to instability function $\mathrm{dist}_f(D) = \argmax_k (f(D)$ is $k$-stable), and $f(D)$ is ($(\log(1/\delta) + \log(1/\beta))/\epsilon$)-stable, then Algorithm \ref{alg-dist-ins} outputs $f(D)$ w.p. at least $1-\beta$.
\end{lemma}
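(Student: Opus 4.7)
The plan is to prove Lemma~\ref{lem-ug-dist-ins} by directly analyzing the random quantity $\widehat{\mathrm{dist}} = \mathrm{dist}_f(D) + \mathtt{Lap}(1/\epsilon)$ and showing that the event $\{\widehat{\mathrm{dist}} > \Gamma\}$ occurs with probability at least $1 - \beta$. Under this event, by construction Algorithm~\ref{alg-dist-ins} outputs $f(D)$ rather than $\perp$, which is exactly what the lemma claims.

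First, I would unpack the stability hypothesis: since $f(D)$ is assumed to be $k$-stable with $k = (\log(1/\delta) + \log(1/\beta))/\epsilon$, the definition of the distance-to-instability function $\mathrm{dist}_f(D) = \argmax_k \{f(D)\text{ is }k\text{-stable}\}$ immediately yields the deterministic lower bound $\mathrm{dist}_f(D) \geq (\log(1/\delta) + \log(1/\beta))/\epsilon$. This is purely a definitional step and involves no probabilistic reasoning.

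Second, I would invoke the standard lower-tail bound for the Laplace distribution: if $Z \sim \mathtt{Lap}(1/\epsilon)$, then $\P[Z \leq -t] = \tfrac{1}{2} e^{-\epsilon t}$ for $t \geq 0$. Setting $t = \log(1/\beta)/\epsilon$ gives $\P[Z \leq -\log(1/\beta)/\epsilon] = \beta/2 \leq \beta$, so with probability at least $1 - \beta$ we have $Z > -\log(1/\beta)/\epsilon$. Combining this with the stability lower bound from the previous step,
\begin{align*}
\widehat{\mathrm{dist}} \;=\; \mathrm{dist}_f(D) + Z \;>\; \frac{\log(1/\delta) + \log(1/\beta)}{\epsilon} - \frac{\log(1/\beta)}{\epsilon} \;=\; \frac{\log(1/\delta)}{\epsilon} \;=\; \Gamma,
\end{align*}
so on this high-probability event the algorithm takes the ``then'' branch and returns $f(D)$.

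There is no real obstacle here beyond bookkeeping: the proof reduces to a one-line Laplace tail bound combined with the definition of $k$-stability. The only subtlety worth calling out explicitly in the write-up is that the inequality $\widehat{\mathrm{dist}} > \Gamma$ is strict (as required by line 2 of Algorithm~\ref{alg-dist-ins}), which is harmless because the Laplace noise has a continuous density, and that $\mathrm{dist}_f$ need only be lower-bounded by $k$ (not equal to $k$), so the argument is monotone in the stability parameter.
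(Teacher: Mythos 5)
Your proof is correct: the stability hypothesis gives the deterministic bound $\mathrm{dist}_f(D) \geq (\log(1/\delta)+\log(1/\beta))/\epsilon$, the Laplace lower tail gives $Z > -\log(1/\beta)/\epsilon$ with probability at least $1-\beta/2 \geq 1-\beta$, and on that event $\widehat{\mathrm{dist}} > \Gamma$ so the algorithm returns $f(D)$. The paper states this lemma without proof, citing \citet{thakurta2013differentially}, and your argument is exactly the standard one from that reference, so there is nothing to add.
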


\begin{definition}[Definition 1.1 of  \citep{bun2016concentrated}]
    $\cM$ obeys $(\xi,\rho)$-zCDP if for two adjacent dataset $D,D'$, for all $\phi\in(1,\infty)$, the Renyi-divergence of order $\phi$ below obeys that
    $$
    D_{\phi}(\cM(D)\|\cM(D')) \leq \xi + \rho\alpha.
    $$
    When $\xi=0$, we also call it $\rho$-zCDP (or simply $\rho$-CDP, since we are not considering other versions of CDPs in this paper). 
\end{definition}

The following two lemmas will be used in the privacy analysis of the SVT-based PATE.
\begin{lemma}[Proposition~1.3 of \citep{bun2016concentrated}]\label{lem:cdp2dp}
	If $\cM$ obeys $\rho$-zCDP, then $\cM$ is $(\rho + 2\sqrt{\rho\log(1/\delta)},\delta)$-DP for any $\delta > 0$.
\end{lemma}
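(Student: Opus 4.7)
The plan is to prove the zCDP-to-DP conversion via the standard privacy-loss random variable argument, combined with an optimized choice of the Rényi divergence order $\alpha$. Fix neighboring datasets $D,D'$ and let $P=\cM(D)$, $Q=\cM(D')$. Define the privacy loss random variable $L(o) = \log(P(o)/Q(o))$ under $o \sim P$. The goal is to show that for $\epsilon = \rho + 2\sqrt{\rho \log(1/\delta)}$, we have $\P_{o \sim P}[L(o) > \epsilon] \leq \delta$; this suffices for $(\epsilon,\delta)$-DP by the standard ``bad event'' decomposition $P(S) = P(S \cap \{L \leq \epsilon\}) + P(S \cap \{L > \epsilon\}) \leq e^{\epsilon} Q(S) + \delta$.

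The first step is to rewrite the assumption in a usable form. By the definition of $\rho$-zCDP, $D_\alpha(P\|Q) \leq \rho \alpha$ for every $\alpha > 1$, and by the definition of Rényi divergence, $\E_{o \sim P}\bigl[e^{(\alpha-1) L(o)}\bigr] = e^{(\alpha-1) D_\alpha(P\|Q)} \leq e^{(\alpha-1)\rho\alpha}$. Applying Markov's inequality to the nonnegative random variable $e^{(\alpha-1)L}$ gives
\[
\P_{o \sim P}[L(o) > \epsilon] \;\leq\; e^{-(\alpha-1)\epsilon} \cdot e^{(\alpha-1)\rho\alpha} \;=\; e^{(\alpha-1)(\rho\alpha - \epsilon)}.
\]

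The next step is to choose $\alpha$ so that the bound is $\leq \delta$, i.e., $(\alpha-1)(\epsilon - \rho\alpha) \geq \log(1/\delta)$. Setting $\alpha = 1 + \sqrt{\log(1/\delta)/\rho}$ gives $\rho\alpha = \rho + \sqrt{\rho \log(1/\delta)}$ and $\alpha - 1 = \sqrt{\log(1/\delta)/\rho}$, so the needed inequality reduces to $\epsilon - \rho - \sqrt{\rho \log(1/\delta)} \geq \sqrt{\rho \log(1/\delta)}$, which is exactly $\epsilon \geq \rho + 2\sqrt{\rho \log(1/\delta)}$. Hence our chosen $\epsilon$ yields $\P[L > \epsilon] \leq \delta$, which combined with the decomposition above gives $(\epsilon,\delta)$-DP.

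The only subtlety I anticipate is the measure-theoretic technicality when $Q(o) = 0$; this is handled by restricting attention to the support of $Q$, since on its complement $P$ assigns zero mass (as $D_\alpha$ would otherwise be infinite, contradicting $\rho$-zCDP). A second minor point is that the proof above assumes $\rho > 0$ and the choice $\alpha > 1$; for $\rho = 0$ the statement is vacuous (pure $0$-DP), and for any $\delta \in (0,1)$, $\log(1/\delta) > 0$ ensures $\alpha > 1$ strictly. Overall this is a short calculation; the ``hard'' conceptual ingredient is recognizing that Rényi-divergence bounds yield sub-Gaussian-type tail bounds on the privacy loss via the Chernoff/Markov trick, and then optimizing $\alpha$ to balance the linear-in-$\rho$ and $\sqrt{\rho\log(1/\delta)}$ terms.
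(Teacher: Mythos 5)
Your proof is correct: the paper does not prove this lemma but simply cites it from Bun and Steinke (2016), and your argument --- bounding the moment generating function of the privacy loss via the R\'enyi divergence, applying Markov's inequality, and optimizing with $\alpha = 1+\sqrt{\log(1/\delta)/\rho}$ before concluding with the standard bad-event decomposition --- is exactly the argument given in that reference. The edge cases you flag ($\rho=0$, the support of $Q$, and $\alpha>1$) are handled appropriately, so there is nothing to add.
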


\begin{lemma}[Proposition~1.4 of \citep{bun2016concentrated}]\label{lem:dp2cdp}
	If $\cM$ obeys $\epsilon$-DP, then $\cM$ obeys $\frac{\epsilon^2}{2}$-CDP.
\end{lemma}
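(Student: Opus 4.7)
The plan is to directly bound the R\'enyi divergence of order $\phi>1$ between $\cM(D)$ and $\cM(D')$ for any pair of adjacent datasets $D,D'$, and show that this bound is at most $\frac{\epsilon^2}{2}\phi$, which is the definition of $\frac{\epsilon^2}{2}$-zCDP.

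Let $P=\cM(D)$ and $Q=\cM(D')$, and define the privacy loss random variable $Z(o):=\log\frac{P(o)}{Q(o)}$. Pure $\epsilon$-DP is equivalent to the statement that $|Z(o)|\leq \epsilon$ almost surely (under both $P$ and $Q$). The first step is to rewrite the R\'enyi divergence in its moment-generating-function form,
\[
D_\phi(P\|Q)=\frac{1}{\phi-1}\log \mathbb{E}_{o\sim Q}\bigl[e^{\phi Z(o)}\bigr],
\]
so that the problem reduces to upper bounding the log-MGF of the bounded random variable $Z$ under $Q$. Note that by the defining identity $\mathbb{E}_Q[e^{Z}]=1$, so $Z$ is automatically ``centered'' in an exponential sense under $Q$.

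The second step is a Hoeffding-type argument. Because $Z\in[-\epsilon,\epsilon]$ a.s., one can reduce to the worst case in which the output space is binary; the extremal pair of distributions $(P,Q)$ saturating $\epsilon$-DP corresponds to a two-point Bernoulli mixture $P=\mathrm{Ber}(p), Q=\mathrm{Ber}(q)$ with $\log(p/q), \log((1-p)/(1-q))\in[-\epsilon,\epsilon]$ by a standard data-processing/extremality argument (post-processing cannot increase the R\'enyi divergence, and joint convexity pushes the optimum to the boundary). It then suffices to show, by a direct one-variable calculation in $p,q$, that
\[
\frac{1}{\phi-1}\log\bigl(p^\phi q^{1-\phi}+(1-p)^\phi (1-q)^{1-\phi}\bigr)\leq \frac{\phi\epsilon^2}{2}.
\]
Equivalently, expanding $e^{\pm\epsilon}=1\pm\epsilon+O(\epsilon^2)$ and applying the elementary inequality $\log(1+x)\leq x - x^2/2$ for small $x$, one matches the quadratic term exactly.

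The main obstacle is obtaining the \emph{tight} constant $\frac{1}{2}$ rather than a looser $2$ or $8$ that a naive Hoeffding bound would give; the naive approach would bound the MGF of a bounded variable by $e^{t^2(2\epsilon)^2/8}=e^{t^2\epsilon^2/2}$, which after substituting $t=\phi$ yields $\phi^2\epsilon^2/2$ rather than $\phi\epsilon^2/2$. To gain the missing factor of $\phi$, one must use the fact that $Z$ is not merely bounded but satisfies $\mathbb{E}_Q[e^Z]=1$, which under $|Z|\leq\epsilon$ forces $\mathbb{E}_Q[Z]\leq -\tfrac{1}{2}\mathrm{Var}_Q(Z)+O(\epsilon^3)$. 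This extra first-moment control, combined with the extremal Bernoulli reduction, is what delivers the sharp constant and completes the proof.
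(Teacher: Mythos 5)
First, a point of reference: the paper does not prove this lemma at all --- it is imported verbatim as Proposition~1.4 of \citet{bun2016concentrated} and used as a black box in the privacy analysis of Theorem~\ref{thm-pg-agg}. So the comparison is against the standard proof in that reference. Your overall architecture matches it: write $D_\phi(P\|Q)=\frac{1}{\phi-1}\log \E_{Q}[e^{\phi Z}]$ for the privacy loss $Z=\log(P/Q)\in[-\epsilon,\epsilon]$, exploit the normalization $\E_Q[e^Z]=1$, and reduce to the extremal two-point pair by convexity of $y\mapsto y^\phi$ on $[e^{-\epsilon},e^\epsilon]$ subject to the fixed mean $\E_Q[P/Q]=1$ (which pins down $q=1/(e^\epsilon+1)$ and $p=e^\epsilon/(e^\epsilon+1)$). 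Up to that point you are correctly reconstructing Bun--Steinke's Lemma~3.2.

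The gap is in the closing step. The lemma is a non-asymptotic claim for \emph{every} $\epsilon>0$, but your verification of the two-point inequality is a Taylor expansion to second order in $\epsilon$ with uncontrolled remainders; ``matching the quadratic term'' only establishes the statement in the limit $\epsilon\to 0$. Moreover, the elementary inequality you invoke, $\log(1+x)\le x-x^2/2$, is false for $x>0$ (there the correct direction is $\log(1+x)\ge x-x^2/2$), so even the formal computation does not go through as written. The same issue afflicts your estimate $\E_Q[Z]\le -\tfrac12\Var_Q(Z)+O(\epsilon^3)$: it is true, but the $O(\epsilon^3)$ slop is precisely what must be absorbed to obtain the exact constant $\tfrac12$, and a generic Hoeffding or Bernstein moment bound will not do it. A clean non-asymptotic finish for the two-point case you reduced to: with $q=1/(e^\epsilon+1)$ one has $g(\phi):=\log\E_Q[e^{\phi Z}]=\log\big(e^{\epsilon\phi}+e^{\epsilon(1-\phi)}\big)-\log(1+e^\epsilon)$, so $g(1)=0$ and $g'(t)=\epsilon\tanh\big(\epsilon(t-\tfrac12)\big)$, whence
\[
g(\phi)=\int_1^\phi \epsilon\tanh\big(\epsilon(t-\tfrac12)\big)\,dt\;\le\;\int_1^\phi \epsilon^2\big(t-\tfrac12\big)\,dt\;=\;\frac{\epsilon^2\phi(\phi-1)}{2},
\]
using $\tanh(x)\le x$ for $x\ge 0$. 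Dividing by $\phi-1$ gives $D_\phi(P\|Q)\le \tfrac12\epsilon^2\phi$ for all $\phi>1$ and all $\epsilon>0$, which is the claimed $\tfrac{\epsilon^2}{2}$-zCDP. With this replacement for your final step, the argument is complete.
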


\section{Simulation with adult dataset}\label{sec-adult-dataset}
In this section, we empirically estimate the expected margin $\Delta(x) = |\mathbb{E}[\hat{h}_1(x)|x]-0.5|$ and  $|\mathbb{E}[\mathbbm{I}(h(x) \neq h^*(x)) |x]-0.5|$ on the Adult dataset. Note that in $\Delta(x)$, we do not require the teachers to agree on $y$ or $h^*$ but measure the extent to which they agree with $\hat{h}^\mathtt{agg}$. In the latter one, we measure the degree of agreement between teachers and $h^*$.

The UCI Adult dataset is also known as ``Census Income" dataset, which is used to predict whether an individual's annual income exceeds \$$50,000$. We partition the original training set as the private dataset and the testing set as the public dataset. To simulate the PATE setting, we train 250 logistic regression models on the private domain and use this ensemble to answer $2,000$ queries from the public domain. Note that under this setting, the private domain contains $36,631$ records and the public domain has $10,211$ unlabelled records. We train $h^*$ with the entire private dataset using the logistic regression model.

Results shown in Figure~\ref{fig:main_fig} demonstrate that even though we do not know the distribution of the data at all, the teacher ensemble agrees on the large majority of the examples. Moreover, when they agree, they agree on $h^*$ in most cases and only in very rare cases when they agree on the wrong answers with high-margin. To say it differently, our assumption on the Tsybakov noise condition could be a good approximation to the real-life datasets. In Figure \ref{fig:corre}, we plot the correlations between $\Delta(x)$ and $\E[\1[h(x) \neq h^*(x)] | x] - 0.5$ over $200$ queries. The $x$-axis is the cumulation of  $\Delta(x)$ and the $y$-axis is the cumulation of $\E[\1[h(x) \neq h^*(x)] | x] - 0.5$. There is a nearly perfect linear line in the figure, which indicates they are highly correlated and majority voting tends to agree on $h^*$ in most cases on the Adult dataset.

\begin{figure}[!htbp]
    \centering
    \begin{minipage}{0.49\linewidth}\centering
		\includegraphics[width=\textwidth]{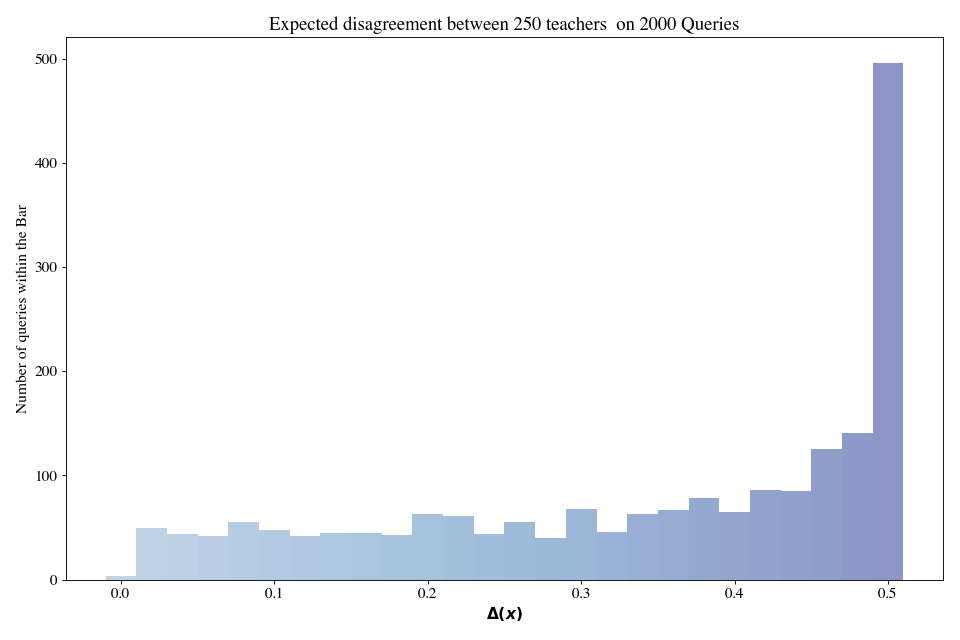}
	\end{minipage}
	\begin{minipage}{0.49\linewidth}\centering
		\includegraphics[width=\textwidth]{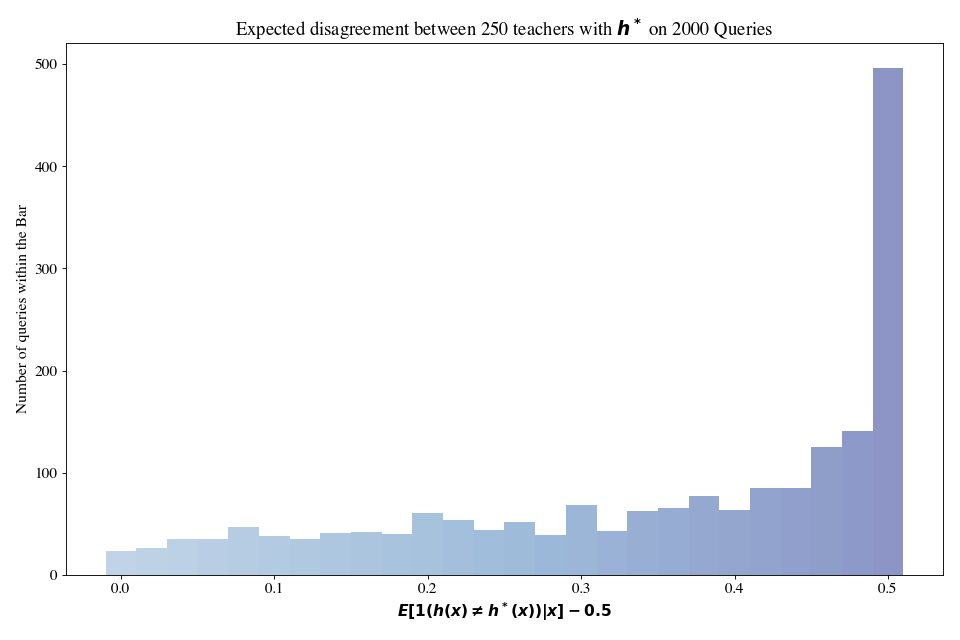}
	\end{minipage}
	\caption{Empirical distribution of the margins on the Adult dataset.}
	\label{fig:main_fig}
\end{figure}

\begin{figure}[!htbp]
    \centering
    \begin{minipage}{0.49\linewidth}\centering
		\includegraphics[width=\textwidth]{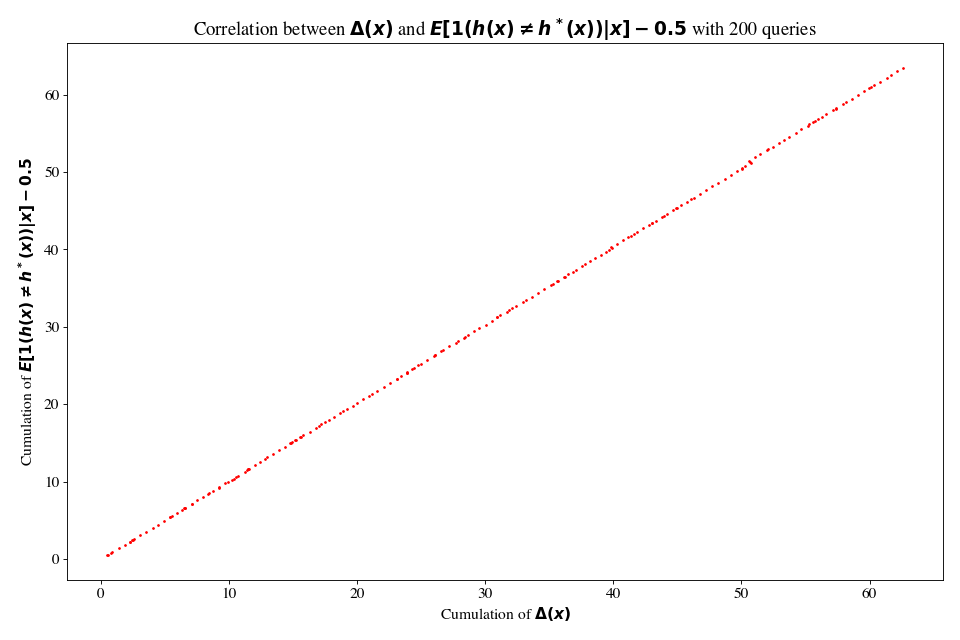}
	\end{minipage}
	\caption{Correlations between margins on the Adult dataset.}
	\label{fig:corre}
\end{figure}

\end{document}